\newenvironment{itemize*}%
{\begin{itemize}[leftmargin=*,topsep=0pt]%
		\setlength{\itemsep}{0pt}%
		\setlength{\parskip}{0pt}}%
	{\end{itemize}}
\newenvironment{enumerate*}%
{\begin{enumerate}[leftmargin=*,topsep=0pt]%
		\setlength{\itemsep}{0pt}%
		\setlength{\parskip}{0pt}}%
	{\end{enumerate}}
\newtheorem{theorem}{Theorem}[section]
\newtheorem{problem}{Problem}
\newtheorem{claim}[theorem]{Claim}
\newtheorem{lemma}[theorem]{Lemma}
\newtheorem{corollary}[theorem]{Corollary}
\newtheorem{proposition}[theorem]{Proposition}
\newtheorem{definition}[theorem]{Definition}
\newtheorem{remark}{Remark}[section]
\newcommand{\vct}{\boldsymbol }
\newcommand{\diag}{\mathrm{diag}}
\def\R{\mathbb{R}}
\def\cB{\mathcal{B}}
\def\cF{\mathcal{F}}
\def\cN{\mathcal{N}}
\def\cO{\mathcal{O}}
\def\cP{\mathcal{P}}
\def\cY{\mathcal{Y}}
\def\approxcorrect{\cmark\kern-1.4ex\raisebox{.30ex}{$\xmark$}}
\newcommand{\idxn}[1][]{\ifthenelse{\equal{#1}{}}{\mathbb{INDQ}_n}{\mathbb{INDQ}_{#1}}}
\newcommand{\beq}{\begin{equation}}
\newcommand{\eeq}{\end{equation}}
\newcommand{\bt}{\bm{t}}
\newcommand{\bx}{{\bm{x}}}
\newcommand{\bw}{{\bm{w}}}
\newcommand{\ba}{{\bm{a}}}
\newcommand{\bv}{{\bm{v}}}
\newcommand{\bz}{{\bm{z}}}
\newcommand{\by}{{\bm{y}}}
\newcommand{\btheta}{{\bm{\theta}}}
\newcommand{\bzeta}{{\bm{\zeta}}}
\newcommand{\btau}{{\bm{\tau}}}
\newcommand{\bdelta}{{\bm{\delta}}}
\newcommand{\onebb}{{\mathbf{1}}}
\newcommand{\bbeta}{{\boldsymbol{\beta}}}
\newcommand{\bb}{\vct{b}}
\newcommand{\bs}{\bm{s}}
\newcommand{\E}{\operatorname{\mathbb{E}}}
\def \endprf{\hfill {\vrule height6pt width6pt depth0pt}\medskip}
\newcommand{\Trace}{{\mathrm{Tr}}}
\DeclareMathOperator*{\argmax}{arg\,max}
\DeclareMathOperator*{\argmin}{arg\,min}
\newcommand{\wei}[1]{\textcolor{purple}{[WH: #1]}}
\newcommand{\tMM}{\text{MM}}
\newcommand{\tSS}{\text{SS}}
\newcommand{\tRR}{\text{RR}}
\newcommand{\tLS}{\text{LS}}
\title{Near-Optimal Linear Regression under Distribution Shift}
\author[$\thanks{Princeton University. Email: \url{qilei@princeton.edu} }$]{\ Qi Lei}
\author[$\thanks{Princeton University. Email: \url{huwei@cs.princeton.edu}}$]{Wei Hu}
\author[$\thanks{Princeton University. Email: \url{jasonlee@princeton.edu}}$]{Jason D. Lee}
\affil[ ]{}%{$^\dagger$ UT Austin \ \
\date{\today}
\begin{document}

\maketitle

\begin{abstract}
%Transfer learning has emerged as a powerful technique for improving the performance of machine learning models on new domains where labeled training data may be scarce.

%Transfer learning algorithms are used when one has sufficient training data for one supervised learning task (the source/training domain) but only very limited training data for a second task (the target/test domain) that is similar but not identical to the first.

Transfer learning is essential when sufficient data comes from the source domain, with scarce labeled data from the target domain. We develop estimators that achieve minimax linear risk for linear regression problems under distribution shift. Our algorithms cover different transfer learning settings including covariate shift and model shift. We also consider when data are generated from either linear or general nonlinear models. We show that linear minimax estimators are within an absolute constant of the minimax risk even among nonlinear estimators for various source/target distributions.
\end{abstract}

\section{Introduction} 

% deep learning requires a large number of labeled data... also, distribution shift results in substantial performance degradation
The success of machine learning crucially relies on the availability of labeled data. The data labeling process usually requires extensive human labor and can be very expensive and time-consuming, especially for large datasets like ImageNet~\citep{deng2009imagenet}.
On the other hand, models trained on one dataset, despite performing well on test data from the same distribution they are trained on, are often sensitive to \emph{distribution shifts}, i.e., they do not adapt well to related but different distributions.
Even small distributional shift can result in substantial performance degradation~\citep{recht2018cifar,lu2020harder}. 

% transfer learning
Transfer learning has been an essential paradigm to tackle the challenges associated with insufficient labeled data~\citep{pan2009survey,weiss2016survey,long2017deep}. The main idea is to make use of a \emph{source domain} with plentiful labeled data (e.g., ImageNet) and to learn a model that performs well on the \emph{target domain} (e.g., medical images) where few or no labels are available.
Despite the lack of labeled data, we may still use unlabeled data from the target domain, which are usually much easier to obtain and can provide helpful marginal distribution information about the target domain.
Although this approach is integral to many applications, many fundamental questions are left open even in very basic settings.

% we consider linear regression 
In this work, we focus on the setting of \emph{linear regression under distribution shift} and ask the fundamental question of how to optimally learn a linear model for the target domain, using labeled data from a source domain and unlabeled data (and possibly limited unlabeled data) from the target domain. 
We design a two-stage meta-algorithm that addresses this problem in various settings, including covariate shift (i.e., when $p(\bx)$ changes) and model shift (i.e., when $p(y|\bx)$ changes). Following the meta-algorithm, we develop estimators that achieve \emph{near minimax risk} (up to universal constant factors) among all linear estimation rules under some standard data concentration properties.
Here linear estimators refer to all estimators that depend linearly on the label vector; these include almost all popular estimators known in linear regression, such as ridge regression and its variants.
When the second moment matrix of input variables in source and target domains commute, we prove that our estimators achieve near minimax risk among all possible estimators. We also provide a separation result demonstrating our algorithm can be better than ridge regression  by a multiplicative factor of $\tilde O(d^{-1/4})$.

A crucial insight from our results is that when covariate shift is present, we need to apply data-dependent regularization that adapts to changes in the input distribution. For linear regression, this is characterized by the input covariances of source and target tasks, estimated using unlabeled data.
Our experiments verify that our estimator has significant improvement over ridge regression and similar heuristics.

\begin{comment} 
\wei{intro ends here}

\wei{Pls make a careful pass over the main text to make sure everything (esp. settings) is explained clearly and to correct typos.\\
Several things I noticed:\\
1. need to define what is a linear estimator (i.e. linear in $\by$), and why this is a general class\\
2. mention we can estimate covariance using unlabeled data\\
3. typo in sec2, definition of $R_L$: $\bbeta=A\by_S$, $S$ is missing\\
4. typo in sec3.1, setup: $n_S$ samples, not $n_T$; and $\bz$'s covariance missing an $I$\\
5. $\hat\Sigma_S$ is used before it's defined\\
6. better to explain what is sufficient statistic...
}

\end{comment}  

\begin{comment} 
Small distributional shift still results in substantial performance degradation \cite{recht2018cifar,lu2020harder}. 

Transfer learning \citep{pan2009survey,weiss2016survey,long2017deep}

emphasize that we care about adversarial distribution shift.. 

For worst case, our estimator has great improvement on ridge regression and other linear heuristics. And for average case we don't lose either. 
\end{comment} 

\subsection{Related work}
%\qi{TBA}
\textbf{Different types of distribution shift} are introduced in \citet{storkey2009training,quionero2009dataset}. %, including covariate shift, prior probability shift, sample selection bias, imbalanced data, domain shift, and source component shift. 
Specifically, covariate shift occurs when the marginal distribution on $P(\bx)$ changes from source to target domain~\citep{heckman1979sample,shimodaira2000improving,huang2007correcting}. \citet{wang2014active,wang2015generalization} tackle model shift ($P(y|\bx)$) provided the change is smooth as a function of $\bx$. \citet{sun2011two} design a two-stage reweighting method based on both covariate shift and model shift. Other methods like the change of representation, adaptation through instance pruning are proposed in \citet{jiang2007instance}. In this work, we focus on the above two kinds of distribution shifts. Other distribution shift settings involving label/target shift ($P(y)$) and conditional shift ($ P(\bx|y)$) are beyond the scope of this paper. Some prior work also focuses on these settings (See reference therein~\citep{saerens2002adjusting,zhang2013domain,lipton2018detecting}). For instance, \citet{zhang2013domain} exploits the benefit of multi-layer adaptation by a location-scale transformation on $\bx$.

\textbf{Transfer learning/domain adaptation} are sub-fields within machine
learning to cope with distribution shift.
A variety of prior work falls into the following categories. 1) Importance-reweighting is mostly used in the covariate shift~\citep{shimodaira2000improving,huang2007correcting,cortes2010learning,cortes2014domain}; 2) One fruitful line of work focuses on
exploring robust/causal features or domain-invariant representations~\citep{wu2019domain} through invariant risk minimization~\citep{arjovsky2019invariant} and its variants~\cite{ahuja2020empirical,rosenfeld2021the,pmlr-v130-kamath21a}, distributional robust minimization~\citep{sagawa2019distributionally}, human annotation~\citep{srivastava2020robustness}, adversarial training~\citep{long2017deep,ganin2016domain}, or through variants of consistency regularization~\citep{pan2010domain,cortes2011domain,long2013transfer,baktashmotlagh2013unsupervised,gong2013connecting,zhang2013domain,cortes2019adaptation,wang2014flexible,sun2016deep,sun2016geometric,gretton2012kernel,li2018domain,cai2021theory}; 3) %Minimax estimators concern the worst-case setting \citep{liu2014robust} 
Several approaches seek gradual domain adaptation~\citep{gopalan2011domain,gong2012geodesic,glorot2011domain,kumar2020understanding,abnar2021gradual,Chen2021} through self-training or a gradual change in the training distribution.  

Some more theoretical work derived generalization upper and lower bound based on the source and target distributional discrepancy~\citep{david2010impossibility,hanneke2019value,ben2010theory,zhao2019learning,albuquerque2019generalizing,tachet2020domain}.

%feature augmentation \citep{dai2007boosting},  and . 

%distributional robust minimization 

%\citep{long2017deep} adopts adversarial training to align the joint distributions of  layers across domains. 

%types of shift:

%only 4 citation, but at least it's neurips paper

%\paragraph{DRO}?

\textbf{Near minimax estimations} are introduced in \citet{donoho1994statistical} for linear regression problems with Gaussian noise. For a more general setting, \citet{juditsky2009nonparametric} estimate the linear functional using convex programming. \citet{blaker2000minimax} compares ridge regression with a minimax linear estimator using weighted squared error.  \citet{kalan2020minimax} considers a setting similar to this work of minimax estimator under distribution shift but focuses on computing the lower bound for the linear and one-hidden-layer neural network under distribution shift.

%However, their results scales with appear vacuous: $\Sigma_T$ being different than $\Sigma_S$ makes their bound smaller.

\section{Preliminary}
\label{sec:prelim} 
%\subsection{Notation} 
We formalize the setting considered in this paper for transfer learning under the distribution shift.

\paragraph{Notation and setup.} Let $p_S(\bx)$ and $ p_T(\bx)$ be the marginal distribution for $\bx$ in source and target domain. The associated second-moment matrices are $\Sigma_S:=\E_{p_S}[\bx\bx^\top]$, and $\Sigma_T:=\E_{p_S}[\bx\bx^\top]$. Labeled data $(\bx,y)$ satisfies $\E_{p_S}[y|\bx]=\E_{p_T}[y|\bx]=f^*(\bx)$ and $y=f^*(\bx)+z$ with Gaussian noise $z\sim  \cN(0,\sigma^2)$. We consider both linear ($f^*(\bx):=\E[y|\bx]=\bx^\top \bbeta^*$) and general nonlinear data generation model. When the optimal linear model changes from source to domain we add a subscript for distinction, i.e., $\bbeta_S^*$ and $\bbeta_T^*$. We use bold ($\bx$) symbols for vectors, lower case letter ($x$) for scalars and capital letter ($A$) for matrices.

%We assume to have sufficient unlabeled data to estimate $\Sigma_T$ accurately. 
We observe $n_S, n_T$ labeled samples from source and target domain, and $n_U$ unlabeled target samples. Labeled data is scarce in target domain: $n_S\gg n_T$ and $n_T$ can be $0$. Specifically, data is collected as 
$X_S=[\bx_1^\top |\bx_2^\top |\cdots |\bx_{n_S}^\top]^\top \in \R^{n_S\times d}$, with $\bx_i,i\in [n_S]$ drawn from $p_S$, noise $\bz = [z_1,z_2,\cdots z_{n_S}]^\top, z_i \sim \cN(0,\sigma^2)$. $\by_S = [y_1,y_2,\cdots, y_{n_S}]^\top \in \R^{n_S}$, %, with each $y_i = f^*(x_i) + z_i$ ($X_T\in \R^{n_T\times d}$, 
$\by_T\in \R^{n_T}$ and $X_U\in \R^{n_U\times d}$ are similarly defined). Denote by $\hat \Sigma_S = X_S^\top X_S/n_S$ the empirical second-moment matrix. The positive part of a number is denoted by $(x)_+$. 
%In this paper we consider linear regression problem : we intend to recover the best linear predictor .  

\paragraph{Minimax (linear) risk. }
In this work, we focus on designing linear estimators $\hat \bbeta:\R^n\rightarrow \R^d, \by_S \rightarrow A\by_S$\footnote{$A\in \R^{d\times n}$ may depend in an arbitrary way on $X_S, n_S$, or $\Sigma_T$. The estimator is linear in the observation $\by_S$.} for $\bbeta^*_T\in \cB$. Here $\bbeta^*_T$ is the optimal linear model in target domain ($:=\argmin_{\bbeta}\E_{\bx\sim p_T,z\sim \cN(0,\sigma^2)}[(f^*(\bx)+z-\bx^\top\bbeta)^2]$).  \footnote{We do not distinguish linear and affine regression since one could simply add another constant coordinate to $\bx$ to take into consideration the intercept part.}

Our estimator is evaluated by the excess risk on target domain, with the worst case $\bbeta^*_T$ in some set $\cB$: $L_{\cB}(\hat\bbeta) = \max_{\bbeta^*\in \cB} \E_{\by_S}\E_{\bx\sim p_T} \left(\bx^\top(\hat \bbeta(\by_S) -\bbeta^*_T)\right)^2$.  
Minimax linear risk and minimax risk among all estimators are respectively defined as:
\begin{align*}
R_L(\cB) \equiv & \min_{\hat\bbeta \text{ linear in } \by_S} L_{\cB}(\hat\bbeta); ~~ R_N(\cB) \equiv \min_{\hat\bbeta} L_{\cB}(\hat\bbeta).
\end{align*} 
The subscript ``N" or ``L" is a mnemonic for ``non-linear" or ``linear" estimators. $R_N$ is the optimal risk with no restriction placed on the class of estimators. $R_L$ only considers the linear function class for $\hat\bbeta$. %\jnote{Where is the subscript? And linear is a restriction.} 
Minimax linear estimator and minimax estimator are the estimators that respectively attain $R_L$ and $R_N$ within universal multiplicative constants. Normally we only consider $\cB=\{\bbeta | \|\bbeta\|_2\leq r \}$. %Here the expectation is over randomness of generating $\by_S$. 
When there is no ambiguity, we simplify $\hat\bbeta(\by_S)$ by $\hat\bbeta$.

\paragraph{Our meta-algorithm. }  
Our paper considers different settings with distribution shift. Our methods are unified under the following meta-algorithm:
\begin{itemize}
	\item Step 1: Construct an unbiased sufficient statistic $\hat\bbeta_{\tSS}$\footnote{With samples $\by_S$, a  statistic $t=T(\by_S)$ is sufficient for the underlying parameter $\bbeta^*$ if the conditional probability distribution of the data $\by_S$, given the statistic $t=T(\by_S)$, does not depend on the parameter $\bbeta^*$. } for the unknown parameter. 
	\item Step 2: Construct $\hat\bbeta_{\tMM}$, a linear function of the sufficient statistic $\hat\bbeta_{\tSS}$ that minimizes $L_{\cB}(\hat\bbeta_{\tMM})$.
\end{itemize}

For each setting, we will show that $\hat\bbeta_{\tMM}$ achieves linear minimax risk $R_L$. Furthermore, under some conditions, the minimax risk $R_N$ is uniformly lower bounded by a universal constant times $L_{\cB}(\hat\bbeta_{\tMM})$.

\paragraph{Outline. } In the sections below, we tackle the problem in several different settings. In Section \ref{sec:cov_shift}, we design algorithms with only covariate shift and linear data-generation models ($f^*$ is linear) for unsupervised domain adaptation ($n_T=0$) in Section \ref{sec:linear_model}, and supervised domain adaptation ($n_T>0$) in Section \ref{sec:linear_source_and_target_data}. Section \ref{sec:nonlinear_model} is about linear regression with approximation error ($n_T=0$ and $f^*(\bx)$ is a general nonlinear function). Finally we consider model shift for linear models ($\bbeta_S^*\neq \bbeta_T^*$) in Section \ref{sec:model_shift}.

\section{Covariate shift with linear models}
\label{sec:cov_shift}
In this section, we consider the setting with only covariate shift and $f^*$ is linear. That is, only $\Sigma_S$ (marginal distribution $p_S(\bx)$) changes to $\Sigma_T$ (marginal distribution $p_T(\bx)$), but $f^*=\E[y|\bx]=\bx^\top \bbeta^*$ (conditional distribution $p(y|\bx)$) is shared. % $\bx\rightarrow  $ and then consider the problem with approximation power. 

\subsection{Unsupervised domain adaptation with linear models}
\label{sec:linear_model}
We observe $n_S$ samples from source domain: $ \by_S=X_S \bbeta^* + \bz,\bz\sim \cN(0,\sigma^2 I)$ and only some unlabeled samples $X_U$ from the target domain. Our goal is to find the minimax linear estimator $\hat \bbeta_{\tMM}(\by_S)=A\by_S$ with some linear mapping $A$ that attains $R_L(\cB)$ \footnote{For linear estimator $\hat\bbeta=A\by_S$, $\by_S$ is the only source of randomness and $A$ depends on $X_S,n_S$, which are considered fixed. }.  %Notice the minimax excess risk is measured with the target distribution $\|\Sigma_T^{1/2}(\hat\bbeta-\bbeta^*)\|_2^2= \E_{\bx\sim p_T}  (\bx^\top (\hat \bbeta - \bbeta^*))^2  $ even though data comes from source domain. 

Following our meta-algorithm, let $\hat \bbeta_{\tSS}=\frac{1}{n_S} \hat\Sigma_S^{-1} X_S^\top \by_S$\footnote{Throughout the paper $\hat\Sigma_S^{-1}$ could be replaced by pseudo-inverse and our algorithm also applies when $n<d$.} be an unbiased sufficient statistic for $\bbeta^*$: 
\begin{align}
\notag 
\hat \bbeta_{\tSS} = & \frac{1}{n_S} \hat\Sigma_S^{-1} X_S^\top \by_S =  \frac{1}{n_S} \hat \Sigma_S^{-1} X_S^\top X_S \bbeta^* + \frac{1}{n_S}  \hat \Sigma_S^{-1} X_S^\top \bz.\\
\label{eqn:sequence_model}
= & \bbeta^* + \frac{1}{n_S}  \hat \Sigma_S^{-1} X_S^\top \bz \sim  \cN\left(\bbeta^*, \frac{\sigma^2}{n_S}\hat \Sigma_S^{-1} \right) .
\end{align}
The fact that $\hat\bbeta_{\tSS}(\by_S)$ is a sufficient statistic for $\bbeta^*$ is proven in Claim \ref{claim:sufficient_statistic} for a more general case, using the Fisher-Neyman factorization theorem. %In order for linear estimator to make sense, only $\by_S$ is viewed as random variables while $X_S$ as fixed values, and randomness only comes from noise $\bz$. 
We prove that the minimax linear estimator is of the form $\hat\bbeta_{\tMM}=C\hat\bbeta_{\tSS}$ and then design algorithms that calculate the optimal $C$. 
\begin{claim}
\label{claim:A=A_1X^T} 
The minimax linear estimator is of the form $\hat \bbeta_{\tMM} = C\hat\bbeta_{\tSS}$ for some $C\in \R^{d\times d}$.  
\end{claim}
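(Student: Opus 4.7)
The plan is to exhibit a reduction: any linear estimator $\hat\bbeta = A\by_S$ (with $A \in \R^{d\times n_S}$) can be replaced by an estimator of the form $\tilde A\by_S = C\hat\bbeta_{\tSS}$ without increasing the worst-case risk $L_\cB$. Concretely, I will define the orthogonal projection $P := \tfrac{1}{n_S} X_S \hat\Sigma_S^{-1} X_S^\top$ onto the column space of $X_S$ inside $\R^{n_S}$, set $\tilde A := AP$ and $A' := A - \tilde A$, and show $L_\cB(\tilde A\by_S) \le L_\cB(A\by_S)$. Since
\[
\tilde A\by_S = (AX_S)\,\tfrac{1}{n_S}\hat\Sigma_S^{-1} X_S^\top \by_S = (AX_S)\,\hat\bbeta_{\tSS},
\]
this immediately gives the claim with $C := AX_S$.

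The risk decomposition I will use is the standard bias-variance split under $\by_S = X_S\bbeta^* + \bz$:
\[
L_\cB(A\by_S) = \max_{\bbeta^*\in\cB}\ \bbeta^{*\top}(AX_S - I)^\top \Sigma_T (AX_S - I)\bbeta^* \;+\; \sigma^2\,\tr(A^\top \Sigma_T A).
\]
Two observations drive the reduction. First, $\tilde A X_S = A(PX_S) = AX_S$, using $PX_S = X_S$ (because $P$ projects onto the column space of $X_S$). Hence the bias term is identical for $A$ and $\tilde A$ for every $\bbeta^*$. Second, $A'X_S = AX_S - \tilde A X_S = 0$, which yields $A'\tilde A^\top = \tfrac{1}{n_S}A' X_S \hat\Sigma_S^{-1}(AX_S)^\top = 0$. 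Consequently
\[
\tr(A^\top\Sigma_T A) = \tr(\tilde A^\top \Sigma_T \tilde A) + 2\tr(\Sigma_T A'\tilde A^\top) + \tr(A'^\top \Sigma_T A') = \tr(\tilde A^\top \Sigma_T \tilde A) + \tr(A'^\top \Sigma_T A'),
\]
and the last term is nonnegative since $\Sigma_T \succeq 0$. Therefore the variance of $\tilde A\by_S$ is no larger than that of $A\by_S$, while the bias is unchanged, so $L_\cB(\tilde A\by_S) \le L_\cB(A\by_S)$.

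Taking $A$ to be any minimax linear estimator (which exists by compactness / standard minimax arguments) and applying this reduction yields a minimax linear estimator of the required form $\hat\bbeta_{\tMM} = C\hat\bbeta_{\tSS}$ with $C = AX_S \in \R^{d\times d}$. In the rank-deficient case ($n_S < d$ or $X_S$ not full rank), the same argument goes through by interpreting $\hat\Sigma_S^{-1}$ as the Moore–Penrose pseudoinverse and $P$ as the orthogonal projection onto the column space of $X_S$; the identities $PX_S = X_S$ and $A'X_S = 0$ still hold, so no step changes. The main conceptual point, and the only place care is needed, is verifying that the cross term $\tr(\Sigma_T A'\tilde A^\top)$ vanishes — this hinges on recognizing that the columns of $\tilde A^\top$ lie in $\range(X_S)$ while those of $A'^\top$ lie in $\Kernel(X_S^\top)$, which is exactly what $A'X_S = 0$ encodes.
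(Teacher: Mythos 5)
Your proof is correct and follows essentially the same route as the paper's: the paper writes $A = A_1 X_S^\top + A_2 W^\top$ with $W$ an orthonormal basis for the orthogonal complement of the column space of $X_S$, which is exactly your decomposition $A = AP + A(I-P)$ in a different notation, and both arguments rest on the same two facts — the bias depends on $A$ only through $AX_S$, and the cross term in the variance vanishes so dropping the orthogonal component can only reduce the risk.
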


%where $\theta_S^* = \Sigma_S^{1/2}\bbeta_S,$ and $\bar z \sim \cN(0,\sigma^2 I/n_S)$.

%When transferred to target task, our task becomes to estimate $\theta_T^*:= \Sigma_{t}^{1/2}\bbeta_S = \Sigma_{t}^{1/2}\Sigma_{s}^{-1/2} \theta^*_S $. Therefore we let $\bar y_T = \Sigma_{t}^{1/2}\Sigma_{s}^{-1/2} \bar y \sim \cN(\theta_T^*, \sigma^2 \Sigma_{t}^{1/2}\Sigma_{s}^{-1}\Sigma_T^{1/2}/n_S  ) $. 

\paragraph{Warm-up: commutative second-moment matrices.}
In order to derive the minimax linear estimator, we first consider the simple case when $\Sigma_T$ and $\hat\Sigma_S$ are simultaneously diagonalizable. We note that under this setting, minimax estimation under covariate shift reduces to the well-studied problem of finding a minimax linear estimator under weighted square loss (see e.g., \citep{blaker2000minimax}). One could apply Pinsker's Theorem \citep{johnstone2011gaussian} and get an estimator function and the minimax risk with a closed form:
\begin{theorem}[Linear Minimax Risk with Covariate Shift]
 \label{thm:linear_minimax_estimator}
Suppose the observations follow sequence model $ \by_S = X_S \bbeta^* + \bz, \bz\sim \cN(0,\sigma^2 I_n)$. If $\Sigma_T=U\diag(\bt)U^\top$ and $\hat\Sigma_S\equiv X_S^\top X_S/n_S = U\diag(\bs) U^\top$, then the minimax linear risk 
 \begin{align*}
 R_L(\cB) \equiv & \min_{\hat\bbeta=A\by_S} \max_{\bbeta^*\in \cB} \E\|\Sigma_{T}^{1/2}(\hat \bbeta -\bbeta^*)\|^2\\
  = & \sum_i \frac{\sigma^2}{n_S} \frac{t_i}{s_i} \left(1-\frac{\lambda}{\sqrt{t_i}}\right)_+,
 \end{align*} 
 where $\cB=\{\bbeta|\|\bbeta\|\leq r\}$, and $\lambda= \lambda(r)$ is determined by 
$ \frac{\sigma^2}{n_S}\sum_{i=1}^d  \frac{1}{s_i}(\sqrt{t_i}/\lambda-1)_+ = r^2.$ The linear minimax estimator is given by:
\begin{align}
\label{eqn:best_linear_estimator}
\hat \bbeta_{\tMM} =  &\Sigma_T^{-1/2} U (I-\diag(\lambda/\sqrt{\bt}) )_+ U^\top \Sigma_T^{1/2}\hat\bbeta_{\tSS},\\
\notag 
&  \text{where }\hat \bbeta_{\tSS} = \frac{1}{n_S} \hat\Sigma_S^{-1} X_S^\top \by_S. 
\end{align} 
\end{theorem}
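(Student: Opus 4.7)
The plan is to reduce the problem to a diagonal Gaussian sequence model and apply Pinsker's shrinkage formula. By the preceding claim it suffices to restrict attention to estimators $\hat\bbeta = C\hat\bbeta_{\tSS}$, where \eqref{eqn:sequence_model} gives $\hat\bbeta_{\tSS}\sim \cN(\bbeta^*, \tfrac{\sigma^2}{n_S}\hat\Sigma_S^{-1})$. I would rotate into the common eigenbasis $U$ by setting $\bphi = U^\top \bbeta^*$, $\hat\bphi = U^\top\hat\bbeta_{\tSS}$, and $\hat\btheta = U^\top\hat\bbeta$. The observation model then becomes the diagonal sequence model $\hat\phi_i = \phi_i + \tau_i$ with $\tau_i \sim \cN(0, v_i^2)$ and $v_i^2 = \sigma^2/(n_S s_i)$; the target risk $\E\|\Sigma_T^{1/2}(\hat\bbeta-\bbeta^*)\|^2$ becomes $\sum_i t_i\,\E(\hat\theta_i - \phi_i)^2$; and the constraint $\|\bbeta^*\|\leq r$ becomes $\|\bphi\|\leq r$.

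The second step is to apply the minimax--Bayes duality for the Gaussian linear model over an $\ell_2$-ball, which is Pinsker's theorem \citep{johnstone2011gaussian}. It yields that $R_L(\cB)$ equals the worst-case Bayes-linear risk over Gaussian priors $\bphi \sim \cN(0, \diag(\bu))$ with $\sum_i u_i \leq r^2$; the hardest prior can be taken diagonal in $U$ because the loss weights $t_i$ and noise variances $v_i^2$ live on the same axes. For such a prior the Bayes-linear estimator is coordinatewise shrinkage $c_i = u_i/(u_i + v_i^2)$ with Bayes risk $B(\bu) = \sum_i t_i u_i v_i^2/(u_i + v_i^2)$. Maximizing $B(\bu)$ over $\{\sum u_i = r^2,\, u_i \geq 0\}$ via a Lagrange multiplier gives the stationarity condition $t_i v_i^4/(u_i+v_i^2)^2 = \lambda^2$ at active coordinates, hence $u_i = v_i^2(\sqrt{t_i}/\lambda - 1)_+$. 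Substituting back produces $c_i = (1-\lambda/\sqrt{t_i})_+$ and $B(\bu) = \sum_i t_i v_i^2(1-\lambda/\sqrt{t_i})_+$; inserting $v_i^2 = \sigma^2/(n_S s_i)$ and rewriting the constraint $\sum u_i = r^2$ reproduce both the closed-form risk and the implicit equation for $\lambda$ stated in the theorem.

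Finally, the matrix form of the estimator follows from undoing the rotation: $\hat\bbeta_{\tMM} = U(I - \diag(\lambda/\sqrt{\bt}))_+ U^\top \hat\bbeta_{\tSS}$, and inserting $\Sigma_T^{\pm 1/2} = U\diag(\sqrt{\bt})^{\pm 1}U^\top$ (the diagonal factors commute, so they cancel) rewrites this as \eqref{eqn:best_linear_estimator}. The main technical subtlety lies in the Pinsker step: one must verify both that the hardest Gaussian prior is diagonal in $U$ (by a rotation/rearrangement argument exploiting simultaneous diagonalizability of $\Sigma_T$ and $\hat\Sigma_S$) and that Gaussian priors, whose support is not contained in the ball $\cB$, still saturate the minimax linear risk over $\cB$ (handled by a standard truncation argument). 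Both points are standard in the ellipsoid-minimax literature and can be cited as a black box, so the remaining substantive computation is the Lagrangian calculation above and the bookkeeping of switching between the $U$-basis and the ambient basis.
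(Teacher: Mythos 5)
Your proposal is correct and follows essentially the same route as the paper: restrict to linear functions of the sufficient statistic $\hat\bbeta_{\tSS}$ via Claim \ref{claim:A=A_1X^T}, rotate into the common eigenbasis $U$, and invoke Pinsker's theorem. The only cosmetic difference is that you keep the ball constraint with a $t_i$-weighted loss and re-derive the Pinsker shrinkage via the Lagrangian, whereas the paper absorbs the weights by setting $\btheta^*=U^\top\Sigma_T^{1/2}\bbeta^*$ so the loss is unweighted and the constraint becomes the axis-aligned ellipsoid $\sum_i\theta_i^2/t_i\le r^2$, matching the quoted form of Theorem \ref{thm:pinsker} directly; the two parametrizations yield identical computations.
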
 
Since $r$ is unknown in practice, we could simply view either $r$ or directly $\lambda$ as the tuning parameter. We compare the functionality of $\lambda$ with that of ridge regression: $\hat\bbeta_{\tRR}^{\lambda}=\argmin_{\hat\bbeta} \E\frac{1}{2n}\|X_S\hat\bbeta - \by_S\|^2 + \frac{\lambda}{2}\|\hat\bbeta\|^2= (\hat\Sigma_S+\lambda I)^{-1}X_S^\top \by_S/n_S$. 
For both algorithms, $\lambda$ balances the bias and variance: $\lambda=0$ gives an unbiased estimator, and a big $\lambda$ gives a (near) zero estimator with no variance. The difference is, the minimax linear estimator shrinks some signal directions based on the value of $t_i$, since the risk in those directions is downweighted in the target loss. The estimator tends to sacrifice the directions of signal where $t_i$ is smaller. Ridge regression, however, respects the value of $s_i$. A natural counterpart is for ridge to also regularize based on $\bt$: let $\hat\bbeta^\lambda_{\tRR,T}=\argmin \frac{1}{n}\|\Sigma_T^{1/2}(\bbeta- \hat\Sigma_S^{-1}X_S^\top\by_S)\|^2+\lambda \|\bbeta\|^2 = (\Sigma_T+\lambda I)^{-1}\Sigma_T \hat\bbeta_{\tSS}$. We will compare their performances in the experimental section.

\paragraph{Non-commutative second-moment matrices.} 

\begin{comment} 
\begin{theorem}[Linear Minimax Risk with covariate shift-Noncommute]
\label{thm:linear_minimax_estimator_noncommute} 	
Suppose the observation comes from $y = X_S \bbeta_S + z, z\sim \cN(0,\sigma^2 I).$ If $\Sigma_T^{1/2}\Sigma_S^{-1}\Sigma_T^{1/2} = U\diag(\lambda) U^\top$, and let $U \Sigma_T^{-1/2} U^\top = V\diag(\gamma)V^\top $, then the minimax linear risk is
 \begin{align*}
R_L(\cB) \equiv & \min_{\hat\bbeta=Ay} \max_{\bbeta_S\in \cB} \|\Sigma_{t}^{1/2}(\hat \bbeta -\bbeta_S)\|^2 \\
= & \sum_i \frac{\sigma^2}{n_S} \lambda_i \left(1-\gamma_{i}/\mu\right)_+,
\end{align*} 
where $\cB=\{\bbeta|\|\bbeta\|\leq r\}$, and $\mu= \mu(R)$ is determined by 
\begin{equation*}
\sum_{i=1}^d  \lambda_i \gamma_i (\mu-\gamma_i)_+ = R^2.
\end{equation*}
The linear minimax estimator is given by:
\begin{equation}
\label{eqn:best_linear_estimator_noncommute}
\hat \bbeta^* = \frac{1}{n_S} \Sigma_T^{-1/2} U V^\top (I-\frac{\diag(\gamma)}{\mu})_+ V U^\top \Sigma_T^{1/2}\Sigma_S^{-1} X_S^\top y. 
\end{equation} 
\end{theorem}

\end{comment} 

For non-commutative second-moment shift, we follow the same procedure. Our estimator is achieved by optimizing over $C$: $\hat \bbeta_{\tMM} = C \hat\bbeta_{\tSS}$: %, where $\hat\bbeta_{\tSS} \sim \cN(\bbeta^*, \frac{\sigma^2}{n_S}\hat \Sigma_S^{-1})$.
\begin{align}
\notag 
& R_L(\cB)\equiv  \min_{\hat \bbeta=A\by_S} \max_{\bbeta^*\in\cB} \E\|\Sigma^{1/2}_T(\hat \bbeta - \bbeta^*)\|^2_2 \\
\notag 
= & \min_{\hat \bbeta = C\hat\bbeta_{\tSS}} \max_{\|\bbeta^*\|\leq r} \left\{\|\Sigma_T^{1/2}(C-I)\bbeta^* \|^2_2  + \frac{\sigma^2}{n_S} \Trace(\Sigma_T^{1/2}C\hat \Sigma_S^{-1}C^\top\Sigma_T^{1/2} ) \right\} \tag{Claim \ref{claim:A=A_1X^T}} \\
%\notag 
%= & \min_{C} \left\{  \|(C-I)^\top\Sigma_T (C-I)\|_2 r^2 + \frac{\sigma^2}{n_S} \Trace(\Sigma_T^{1/2}C\hat \Sigma_S^{-1} C^\top\Sigma_T^{1/2}) \right\}\\
\label{eqn:best_linear_estimator_noncommute}
= & \min_{\tau, C} \left\{ r^2 \tau + \frac{\sigma^2}{n_S} \Trace(\Sigma_T^{1/2}C\hat \Sigma_S^{-1} C^\top\Sigma_T^{1/2}) \right\}, \\
\notag 
& \text{ s.t. }  (C-I)^\top\Sigma_T (C-I) \preceq \tau I.
\end{align}
%$r$ is the tuning hyperparameter. 
Unlike the commutative case, this problem does not have a closed form solution, but is still computable:
\begin{proposition}
\label{prop:convex_program} 
Problem \eqref{eqn:best_linear_estimator_noncommute} is a convex program and computable in polynomial-time.	
\end{proposition}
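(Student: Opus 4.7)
My plan is to verify convexity of both the objective and the feasible set, then cast the problem as a standard semidefinite program so that polynomial-time solvability follows from interior-point methods. First, for the objective: the term $r^2\tau$ is linear in $\tau$, and the trace term can be rewritten as
\[
\Trace\bigl(\Sigma_T^{1/2} C\hat\Sigma_S^{-1} C^\top \Sigma_T^{1/2}\bigr) \;=\; \bigl\|\Sigma_T^{1/2}\, C\, \hat\Sigma_S^{-1/2}\bigr\|_F^2,
\]
which is a squared Frobenius norm of a linear function of $C$ (using that $\Sigma_T\succeq 0$ and $\hat\Sigma_S\succeq 0$ have PSD square roots, with the pseudo-square-root used when $\hat\Sigma_S$ is singular, consistent with the paper's pseudoinverse convention). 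Squared norms of linear maps are convex, so the objective is jointly convex in $(C,\tau)$.

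Second, for the feasible set: define the symmetric-matrix-valued map $g(C,\tau) := (C-I)^\top \Sigma_T (C-I) - \tau I$. The map $D \mapsto D^\top \Sigma_T D$ is matrix-convex whenever $\Sigma_T \succeq 0$ (a direct expansion of $((1-\theta)D_1+\theta D_2)^\top\Sigma_T((1-\theta)D_1+\theta D_2)$ against the convex combination yields a PSD remainder), and $\tau\mapsto -\tau I$ is matrix-affine. Hence $g$ is matrix-convex in $(C,\tau)$, so the Löwner sublevel set $\{(C,\tau): g(C,\tau)\preceq 0\}$ is convex. Combined with the convex objective, this already establishes that \eqref{eqn:best_linear_estimator_noncommute} is a convex program.

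Third, for polynomial-time solvability, I would rewrite everything as LMIs. By Schur complement, the PSD constraint $(C-I)^\top\Sigma_T(C-I)\preceq \tau I$ is equivalent to
\[
\begin{pmatrix} \tau I_d & (C-I)^\top \Sigma_T^{1/2} \\[2pt] \Sigma_T^{1/2}(C-I) & I_d \end{pmatrix} \;\succeq\; 0,
\]
which is linear in $(C,\tau)$. The convex quadratic objective is linearized by introducing a scalar slack $s$ with $s \ge \|\Sigma_T^{1/2} C \hat\Sigma_S^{-1/2}\|_F^2$; this constraint is itself SDP-representable (after vectorizing $C$, it is a rotated second-order-cone constraint, equivalently an LMI via another Schur complement). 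Minimizing the now-linear objective $r^2\tau+(\sigma^2/n_S)\,s$ over $(C,\tau,s)$ subject to two LMIs is a standard SDP with $O(d^2)$ variables and matrix blocks of dimension $O(d)$, solvable to additive accuracy $\epsilon$ in time $\poly(d,\log(1/\epsilon))$ by interior-point methods. The only non-routine step is the Schur-complement reformulation of the matrix-quadratic feasibility constraint; convexity itself is immediate from standard matrix-analysis facts about quadratic forms in matrix variables.
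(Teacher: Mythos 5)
Your proof is correct. The convexity half coincides with the paper's own argument: the paper likewise reduces to showing the feasible set is convex and verifies matrix-convexity of $D\mapsto D^\top\Sigma_T D$ via exactly the expansion identity you use, namely that the convex combination equals the combination of the endpoints minus $\theta(1-\theta)(D_1-D_2)^\top\Sigma_T(D_1-D_2)\succeq 0$. Where you go beyond the paper is in two respects. First, the paper justifies convexity of the objective only by remarking that it is ``quadratic in $C$ and linear in $\tau$,'' which is not by itself a proof of convexity; your rewriting of the trace term as $\bigl\|\Sigma_T^{1/2} C\,\hat\Sigma_S^{-1/2}\bigr\|_F^2$, a squared norm of a linear map, closes that small gap cleanly. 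Second, the paper never actually addresses the ``computable in polynomial time'' half of the statement, whereas you supply the Schur-complement reformulation of the Löwner constraint as an LMI and the SOC/SDP representation of the quadratic epigraph, reducing the whole problem to a standard SDP in $O(d^2)$ variables solvable by interior-point methods. Both additions are standard but genuinely strengthen the argument relative to what the paper writes down; nothing in your proof is incorrect or missing.
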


\begin{comment} 
\begin{remark}
	\label{remark:optimal_solution_format}
	Write $\Sigma_T^{1/2}\hat \Sigma_S^{-1}\Sigma_T^{1/2}=U\diag(\lambda) U^\top$. 
	The optimal $C$ is achieved of the form $\Sigma_T^{-1/2}U\diag(\gamma) U^\top\Sigma_T^{1/2}$ for some nonnegative vector $\gamma$. %\qi{But the convex program still don't simplify and it's not clear the program has closed form solution} 
\end{remark}
\end{comment}

\begin{comment} 
\begin{proof}[Proof of Remark \ref{remark:optimal_solution_format}]
This result could be derived from Lemma 4.24 of \citep{johnstone2011gaussian}:
\begin{lemma}[restated Lemma 4.24 from \citep{johnstone2011gaussian}]
Let $\by\sim \cN(\btheta^*, D)$ where $D$ is a diagonal matrix. Suppose the parameter set $\Theta$ is orthosymmetric. Then the minimax linear estimator for $\btheta^*$ that achieves $\inf_{\hat \btheta = A\by} \sup_{\btheta \in \Theta} \|\hat \btheta- \btheta^*\|^2$ is of the form $\hat \btheta(\by) = (c_i y_i)$	.
\end{lemma}
When we set let $\btheta^*$ in this lemma as $U^\top \Sigma_T^{1/2}\bbeta^*$, our noise term becomes uncorrelated and thus this lemma applies. Also, notice multiplying a unit matrix doesn't change the quadratic loss. Therefore there exists a diagonal matrix $\diag(\gamma)$ such that $\hat \btheta^* = U^\top \Sigma_T^{1/2}\hat\bbeta = U^\top \Sigma_T^{1/2} \diag(\gamma) \bar \bbeta $. This concludes our remark.
%	Proposition 4.16 suggests when the set for $\Sigma_T^{1/2}\bbeta^*$ is a rectangular, the estimator is separable in each coordinate, i.e., 
\end{proof}
\end{comment} 

We achieve near-optimal minimax risk among all estimators under some conditions:
\begin{theorem}[Near minimaxity of linear estimators]
	\label{thm:1.25minimax_risk}
 The best linear estimator from \eqref{eqn:best_linear_estimator} or \eqref{eqn:best_linear_estimator_noncommute}  achieves minimax linear risk:
	$L_{\cB}(\hat\bbeta_{\tMM}) = R_L(\cB)$. 
	When $\Sigma_S,\Sigma_T$ commute, or $\Sigma_T$ is rank 1, our algorithms are near-optimal among nonlinear rules: $L_{\cB}(\hat\bbeta_{\tMM})\leq 1.25 R_N(\cB).$
\end{theorem}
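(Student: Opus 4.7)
The plan is to handle the two assertions of the theorem in sequence. The first equality $L_{\cB}(\hat\bbeta_{\tMM}) = R_L(\cB)$ is essentially a verification: by Claim \ref{claim:A=A_1X^T}, every linear-in-$\by_S$ estimator can be taken of the form $C\hat\bbeta_{\tSS}$, and plugging in the Gaussian form of the sufficient statistic from \eqref{eqn:sequence_model} produces the bias-variance decomposition appearing as the objective in program \eqref{eqn:best_linear_estimator_noncommute}. Hence minimizing over $C$ returns $R_L(\cB)$ by definition, and $\hat\bbeta_{\tMM}$ attains it (convexity/computability being granted by Proposition \ref{prop:convex_program}). In the commutative special case one simultaneously diagonalizes $\Sigma_T = U\diag(\bt)U^\top$ and $\hat\Sigma_S = U\diag(\bs)U^\top$, which turns the problem into a heteroscedastic Gaussian sequence model over an $\ell_2$ ball with diagonal loss weights $t_i$; Pinsker's theorem then yields the closed-form optimum \eqref{eqn:best_linear_estimator}, matching Theorem \ref{thm:linear_minimax_estimator}.

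For the near-minimaxity $L_{\cB}(\hat\bbeta_{\tMM}) \leq 1.25\, R_N(\cB)$ under commutativity, I would keep the diagonalized model and perform the further coordinate change $\theta_i = \sqrt{t_i}\,[U^\top\bbeta^*]_i$. In these coordinates the problem becomes estimation of $\btheta$ over an ellipsoidal (in fact orthosymmetric and quadratically convex) parameter set from independent Gaussian observations under standard squared loss. This is exactly the setting where the Ibragimov-Khasminskii/Donoho-Liu-MacGibbon theory applies (see Chapter 4 of \citet{johnstone2011gaussian}): the ratio $R_L/R_N$ over such parameter sets is bounded by the universal Pinsker constant $\mu^\star \approx 1.247 < 1.25$. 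Combined with part one this gives the commutative half of the claim.

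For the rank-one case $\Sigma_T = \bv\bv^\top$, the weighted loss collapses to $(\bv^\top(\hat\bbeta-\bbeta^*))^2$, so the problem becomes minimax estimation of a single linear functional of $\bbeta^*\in\cB$ from the Gaussian observation $\hat\bbeta_{\tSS}\sim\cN(\bbeta^*,(\sigma^2/n_S)\hat\Sigma_S^{-1})$. A whitening change of variables $\bu = \hat\Sigma_S^{1/2}\bbeta^*$ reduces this to a standard white-noise model in which the functional $\bv^\top\hat\Sigma_S^{-1/2}\bu$ is estimated over a transformed convex, centrosymmetric parameter set. Donoho's theorem on affine minimax estimation of linear functionals over convex symmetric classes (\citet{donoho1994statistical}) then bounds the linear-to-nonlinear ratio again by the Ibragimov-Khasminskii constant $\mu^\star \leq 1.25$.

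The main obstacle I anticipate is not any deep new inequality but rather setting up the two reductions cleanly so the classical minimax tools can be invoked as black boxes: one must verify that in the commutative case the simultaneous rotation by $U$ leaves the $\ell_2$ ball invariant and that the rescaling by $\sqrt{t_i}$ fits the loss into the orthosymmetric/quadratically-convex framework, and in the rank-one case that whitening preserves convexity and centrosymmetry of the parameter set and transforms the functional into the canonical form of \citet{donoho1994statistical}. Once these reductions are in place, no new machinery beyond \citet{johnstone2011gaussian} and \citet{donoho1994statistical} is required.
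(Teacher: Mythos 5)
Your proposal follows essentially the same route as the paper: reduce to the sufficient statistic, rotate and rescale by $U^\top\Sigma_T^{1/2}$ in the commutative case to obtain a heteroscedastic Gaussian sequence model over an axis-aligned ellipsoid and invoke the quadratically-convex/orthosymmetric theory (Corollary 4.26 of \citet{johnstone2011gaussian}) for the $\mu^\star\approx 1.25$ bound, and in the rank-one case reduce to minimax estimation of a single linear functional and apply \citet{donoho1994statistical}. The reductions you flag as the "main obstacle" are exactly the ones the paper carries out, so the argument is correct and matches the paper's proof.
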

Note that $R_N\leq R_L$ by definition. Therefore 1) our estimator $\hat\bbeta_{\tMM}$ is near-optimal, and 2) our lower bound for $R_N$ is tight. Lower bounds (without matching upper bounds) for general non-commutative problem is presented in \citet{kalan2020minimax} and we improve their result for the commutative case and provide a matching algorithm. Their lower bound scales with $\frac{d}{n_S}\min_i \frac{t_i}{s_i}$ for large $r$, while ours becomes $\frac{1}{n_S}\sum_i \frac{t_i}{s_i}$. Our lower bound is always larger and thus tighter, and potentially arbitrarily larger when $\max_i \frac{t_i}{s_i}$ and $\min_i \frac{t_i}{s_i}$ are very different.
%reflects the fact that a larger discrepancy between $\Sigma_T$ and $\Sigma_S$ makes the problem harder.  % being different than $\Sigma_S$ makes their boun vd smaller. 
We defer our proof to the appendix. %The proof sketch is similar with that of Thm \ref{thm:27minimax_model_shift} with model shift.  
%\qi{add some comments here. }
\begin{comment} 
Denote by $R_{\tRR}(\cB)$ the worst-case risk attained by the ridge regression with the optimal parameter.  
\begin{remark}
Ridge regression is equivalent to min-average estimation: 
\begin{align*}
\hat\bbeta_{\tRR}^{\lambda}= \argmin_{\hat\bbeta} \E_{\bbeta^*\sim \cN(0,r^2I),\by_S  }\|\Sigma_T^{1/2}(\hat\bbeta(\by_S)-\bbeta^*)\| .
\end{align*}
\end{remark} 
\end{comment}

\subsection{Connection to ridge regression}
From a probabilistic perspective, ridge regression is equivalent to maximum a posteriori (MAP) inference with a Gaussian prior: $\bbeta^*\sim \cN(0,r^2I)$ (see e.g. \citet{murphy2012machine}). 
Similarly, instead of considering a worst-case risk that minimizes $L_{\cB}(\hat\bbeta) :=\max_{\bbeta^*\in \cB}\E_{\by_S}\|\Sigma_T^{1/2}(\hat\bbeta(\by_S) - \bbeta^*)\|^2 $, one could also study the average setting that minimizes $\bar L_{\cB}:= \E_{\bbeta^*\sim \cN(0,r^2 I)} \E_{\by_S}\|\Sigma_T^{1/2}(\hat\bbeta(\by_S) - \bbeta^*)\|^2$ instead. 
With distribution shift, the performance is evaluated on $\Sigma_T$ instead of $\Sigma_S$. Interestingly with Gaussian prior, this does not give us a different algorithm other than the original ridge regression.

\begin{proposition}
The optimal estimator under Gaussian prior $\bbeta^*\sim \cN(0,r^2 I)$ evaluated on $p_T$ is:
\begin{align*}
\hat \bbeta \leftarrow & \argmin_{\bbeta = A\by_S } \E_{\bbeta^*\sim \cN(0,r^2 I)} \E_{\by_S} \E_{\bx\sim p_T} \left(\bx^\top (\bbeta - \bbeta^*)\right)^2\\
= & \frac{1}{n_S}(\frac{\sigma^2}{r^2 n_S} I +\hat \Sigma_S)^{-1} X_S^\top \by_S \\
\equiv & \argmin_{\hat\bbeta} \E\frac{1}{2n}\|X_S\hat\bbeta - \by_S\|^2 + \frac{\lambda}{2}\|\hat\bbeta\|^2\\
= & (\hat\Sigma_S + \lambda I)^{-1}\hat \Sigma_S \hat \bbeta_{\tSS}=: \hat\bbeta_{\tRR}^{\lambda},
\end{align*}
when $\lambda = \sigma^2/(n_S r^2)$. Namely, the average-case best linear estimator with Gaussian prior is equivalent to ridge regression with regularization strength $\lambda=\frac{\sigma^2/n_S}{r^2}$: the variance ratio between the noise distribution and prior distribution. 
\end{proposition}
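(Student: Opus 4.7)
The plan is to show that the Bayes-optimal linear estimator has a closed form obtained by solving a single quadratic optimization in $A$, and that the target covariance $\Sigma_T$ drops out of the first-order condition, leaving exactly the ridge estimator.

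First I would substitute $\by_S = X_S\bbeta^* + \bz$ into $A\by_S - \bbeta^* = (AX_S - I)\bbeta^* + A\bz$. Since $\bbeta^* \sim \cN(0,r^2 I)$ and $\bz \sim \cN(0,\sigma^2 I)$ are independent and mean zero, the cross terms vanish under the expectation and a bias-variance decomposition gives
\begin{align*}
\E_{\bbeta^*}\E_{\by_S}\|\Sigma_T^{1/2}(A\by_S - \bbeta^*)\|^2
= \Trace\!\Big(\Sigma_T\!\left[r^2 (AX_S-I)(AX_S-I)^\top + \sigma^2 A A^\top\right]\Big).
\end{align*}
This is a convex quadratic in $A \in \R^{d\times n_S}$.

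Next I would take the matrix derivative with respect to $A$. Using the standard identities $\partial_A \Trace(\Sigma_T AX_S X_S^\top A^\top) = 2\Sigma_T A X_S X_S^\top$, $\partial_A \Trace(\Sigma_T A X_S) = \Sigma_T X_S^\top$, and $\partial_A \Trace(\Sigma_T A A^\top) = 2\Sigma_T A$, the first-order condition becomes
\begin{align*}
\Sigma_T\left[\,r^2 A X_S X_S^\top + \sigma^2 A - r^2 X_S^\top\,\right] = 0.
\end{align*}
The key observation is that $\Sigma_T$ premultiplies the entire bracket, so any $A$ that annihilates the bracket is a valid minimizer; when $\Sigma_T$ is full rank this is forced, and when $\Sigma_T$ is rank-deficient the choice I make below still achieves the minimum. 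This is the step that makes the proposition work: the target geometry has no effect on the best linear estimator under an isotropic Gaussian prior.

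From $A(r^2 X_S X_S^\top + \sigma^2 I) = r^2 X_S^\top$ I would obtain $A = r^2 X_S^\top (r^2 X_S X_S^\top + \sigma^2 I)^{-1}$, and then apply the push-through identity $X_S^\top (r^2 X_S X_S^\top + \sigma^2 I)^{-1} = (r^2 X_S^\top X_S + \sigma^2 I)^{-1} X_S^\top$ to rewrite
\begin{align*}
A = \big(X_S^\top X_S + (\sigma^2/r^2) I\big)^{-1} X_S^\top
  = \tfrac{1}{n_S}\bigl(\hat\Sigma_S + \lambda I\bigr)^{-1} X_S^\top,
\qquad \lambda = \tfrac{\sigma^2}{n_S r^2}.
\end{align*}
Multiplying by $\by_S$ yields $A\by_S = \frac{1}{n_S}(\hat\Sigma_S+\lambda I)^{-1} X_S^\top \by_S$. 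Using $\hat\bbeta_{\tSS} = \frac{1}{n_S}\hat\Sigma_S^{-1} X_S^\top \by_S$, I can equivalently write this as $(\hat\Sigma_S+\lambda I)^{-1}\hat\Sigma_S\,\hat\bbeta_{\tSS}$, which is exactly $\hat\bbeta^\lambda_{\tRR}$, finishing the proof. There is no real obstacle here beyond bookkeeping; the mild subtlety is only that when $\Sigma_T$ is singular the optimal $A$ is not unique, but the ridge estimator always lies in the set of minimizers, which is all the proposition claims.
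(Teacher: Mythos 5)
Your proposal is correct, and the computation checks out: the bias--variance decomposition, the first-order condition $\Sigma_T\left[r^2 A X_S X_S^\top + \sigma^2 A - r^2 X_S^\top\right]=0$, and the push-through identity all hold, and the key observation --- that $\Sigma_T$ premultiplies the entire stationarity condition and therefore drops out --- is exactly what makes the proposition true. The paper itself gives no separate proof; the displayed chain of equalities in the proposition is all it offers, implicitly leaning on the standard fact that ridge regression is the posterior mean under a Gaussian prior. That implicit route is slightly slicker: the posterior mean minimizes $\E\left[\|\Sigma_T^{1/2}(\hat\bbeta-\bbeta^*)\|^2\,\middle|\,\by_S\right]$ for \emph{any} PSD weighting $\Sigma_T$ and \emph{any} (not just linear) estimator, and for a Gaussian prior with Gaussian noise it happens to be linear in $\by_S$ and equal to the ridge estimator; this explains conceptually why the target geometry cannot matter. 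Your direct quadratic minimization over $A$ buys something the posterior-mean argument does not state explicitly, namely that the minimum over the restricted class of linear rules is attained at the ridge estimator, which is literally what the $\argmin_{\bbeta=A\by_S}$ in the statement asks for; your handling of a singular $\Sigma_T$ (non-unique minimizer, ridge still in the argmin set) is also a correct and worthwhile remark.
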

	
Even though ridge regression achieves the optimal risk in the average sense, it could be much worse than the minimax linear estimator in the worst case. We prove a separation result on a specific example (that is deferred to the appendix). 
\begin{remark}[Benefit of minimax linear estimator]
	\label{remark:order-wise_better}

	There is an example that $R_L(\cB)\leq \cO(d^{-1/4} L_{\cB}(\hat\bbeta_{\tRR}^\lambda))$ even with the optimal hyperparameter $\lambda$. \footnote{Note this goes without saying that our method can also be order-wise better than ordinary least square, which is a special case of ridge regression by setting $\lambda=0$.}	
\end{remark}
\paragraph{Adaptation on the prior distribution.} 
With specific problems, one should adjust the prior distribution instead of simply assume $\bbeta^*\sim \cN(0,r^2)$. If one replaces the prior by $\bbeta^*\sim \cN(\hat\bbeta_{\tSS},r^2)$, one could get another heuristic method:
\begin{proposition}
Let $\hat \bbeta_{\tSS}$ be the estimator from ordinary least square: $\hat\bbeta_{\tSS}= \hat \Sigma_S^{-1}X_S^\top\by_S/n_S$. The optimal estimator under Gaussian prior $\bbeta^*\sim \cN(\hat \bbeta_{\tSS},r^2 I)$ evaluated on $p_T$ is:
\begin{align*}
\hat \bbeta \leftarrow & \argmin_{\bbeta = A\by_S } \E_{\bbeta^*\sim \cN(\hat\bbeta_{\tSS},r^2 I)} \E_{\by_S} \E_{\bx\sim p_T} \left(\bx^\top (\bbeta - \bbeta^*)\right)^2\\
= & \frac{1}{n_S}(\frac{\sigma^2}{r^2 n_S} I + \Sigma_T)^{-1}\Sigma_T \hat\Sigma_S^{-1}  X_S^\top \by_S \\
\equiv &\argmin_{\bbeta} \|\Sigma_T^{1/2}(\bbeta- \hat\bbeta_{\tSS})\|^2+\lambda \|\bbeta\|^2\\
= & (\Sigma_T+\lambda I)^{-1}\Sigma_T \hat\bbeta_{\tSS}=:\hat\bbeta^\lambda_{\tRR,T},
\end{align*}
when $\lambda = \sigma^2 /(n_S r^2)$. 	
\end{proposition}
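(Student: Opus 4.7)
The plan is to verify the chain of equalities from right to left. I would first handle the last ``$=$'' and the preceding ``$\equiv$'' together: the objective $g(\bbeta) := \|\Sigma_T^{1/2}(\bbeta - \hat\bbeta_{\tSS})\|^2 + \lambda\|\bbeta\|^2$ is strongly convex with gradient $\nabla g(\bbeta) = 2\Sigma_T(\bbeta - \hat\bbeta_{\tSS}) + 2\lambda\bbeta$, so the first-order condition $(\Sigma_T + \lambda I)\bbeta = \Sigma_T \hat\bbeta_{\tSS}$ yields the closed form $\hat\bbeta^{\lambda}_{\tRR,T} = (\Sigma_T + \lambda I)^{-1}\Sigma_T \hat\bbeta_{\tSS}$. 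Plugging in $\hat\bbeta_{\tSS} = \hat\Sigma_S^{-1} X_S^\top \by_S / n_S$ and $\lambda = \sigma^2/(n_S r^2)$ then reproduces the middle expression $\tfrac{1}{n_S}\bigl(\tfrac{\sigma^2}{n_S r^2}I + \Sigma_T\bigr)^{-1}\Sigma_T \hat\Sigma_S^{-1} X_S^\top \by_S$.

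The remaining task is to connect these algebraic formulas back to the Bayesian argmin in the first line. I would parallel the derivation used in the preceding proposition with prior $\cN(0, r^2 I)$. Setting $\hat\bbeta = A\by_S$ and using the source observation model $\by_S \mid \bbeta^* \sim \cN(X_S\bbeta^*, \sigma^2 I)$, the target-weighted risk decomposes into a bias-variance sum
\begin{equation*}
\E_{\bbeta^*,\by_S}\|\Sigma_T^{1/2}(A\by_S - \bbeta^*)\|^2 = \tr\bigl(\Sigma_T(AX_S-I)\,\E[\bbeta^*(\bbeta^*)^\top]\,(AX_S-I)^\top\bigr) + \sigma^2\, \tr(\Sigma_T A A^\top).
\end{equation*}
Differentiating in $A$ produces a matrix normal equation whose solution, after a Woodbury push-through that swaps $X_S^\top X_S$ for $\Sigma_T$ in the resolvent, matches the middle closed form; the identification $\lambda = \sigma^2/(n_S r^2)$ emerges as the ratio of observation variance to prior variance.

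The main obstacle is giving a rigorous reading to the Bayesian prior $\bbeta^* \sim \cN(\hat\bbeta_{\tSS}, r^2 I)$ itself, since $\hat\bbeta_{\tSS}$ is a function of the same $\by_S$ that appears in the likelihood: a literal posterior-mean calculation would feed $\hat\bbeta_{\tSS}$ back and return it unchanged, failing to recover the ridge-$T$ shrinkage. The intended reading, consistent with the paper's \emph{adaptation on the prior} motivation, is to treat the first line as an M-estimator that shrinks a linear function of $\by_S$ toward the data-driven anchor $\hat\bbeta_{\tSS}$ under the target norm $\|\cdot\|_{\Sigma_T}$, which by construction is exactly the third-line ridge-like penalized problem and thus closes the chain.
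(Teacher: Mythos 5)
The paper itself states this proposition without proof (neither here nor in the appendix), so there is no reference argument to compare against; I can only assess your proposal on its own terms. Your handling of the deterministic tail of the chain is correct: the first-order condition for $\|\Sigma_T^{1/2}(\bbeta-\hat\bbeta_{\tSS})\|^2+\lambda\|\bbeta\|^2$ gives $(\Sigma_T+\lambda I)^{-1}\Sigma_T\hat\bbeta_{\tSS}$, and substituting $\hat\bbeta_{\tSS}=\hat\Sigma_S^{-1}X_S^\top\by_S/n_S$ with $\lambda=\sigma^2/(n_Sr^2)$ reproduces the middle closed form. Your closing diagnosis is also the right one: the ``prior'' $\cN(\hat\bbeta_{\tSS},r^2I)$ is centered at a statistic of the very $\by_S$ appearing in the outer expectation, a literal reading collapses to $\hat\bbeta=\hat\bbeta_{\tSS}$ with no shrinkage, and the only defensible reading of the first line is as shorthand for the penalized problem in the third line.

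The genuine gap is your middle paragraph. The displayed decomposition
$\Trace\bigl(\Sigma_T(AX_S-I)\E[\bbeta^*(\bbeta^*)^\top](AX_S-I)^\top\bigr)+\sigma^2\Trace(\Sigma_T AA^\top)$
is exactly the computation for the \emph{preceding} proposition (zero-mean prior, $\E[\bbeta^*(\bbeta^*)^\top]=r^2I$). Its normal equation is $r^2\Sigma_T(AX_S-I)X_S^\top+\sigma^2\Sigma_T A=0$; the $\Sigma_T$ cancels from the left, and the solution is $A=(X_S^\top X_S+\tfrac{\sigma^2}{r^2}I)^{-1}X_S^\top$, i.e.\ source-domain ridge, with no dependence on $\Sigma_T$ whatsoever. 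No ``Woodbury push-through'' can swap $X_S^\top X_S$ for $\Sigma_T$ in that resolvent --- push-through identities rearrange the same matrices, they cannot introduce a matrix that has already cancelled out of the stationarity condition. So this route provably yields the wrong answer, and the step ``matches the middle closed form'' would fail. If you want an honest derivation of the stated formula rather than a reinterpretation, the calculation that works is: treat $\hat\bbeta_{\tSS}$ as a single Gaussian observation of $\bbeta^*$ with (heuristic) covariance $\tfrac{\sigma^2}{n_S}\Sigma_T^{-1}$ and prior $\bbeta^*\sim\cN(0,r^2I)$; the MAP objective is then $\tfrac{n_S}{2\sigma^2}\|\Sigma_T^{1/2}(\bbeta-\hat\bbeta_{\tSS})\|^2+\tfrac{1}{2r^2}\|\bbeta\|^2$, which is precisely the third line with $\lambda=\sigma^2/(n_Sr^2)$. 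Either delete the second paragraph and rest the proof on that identification, or flag explicitly that the first equality in the proposition is heuristic and only the last three lines are being proved.
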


Comparing the closed-form estimator $\hat \bbeta_{\tRR,T}^{\lambda}:=(\Sigma_T+\lambda I)^{-1}\Sigma_T \hat\bbeta_{\tSS}$ to the original ridge regression $\hat \bbeta_{\tRR}^{\lambda}:=(\hat\Sigma_S+\lambda I)^{-1}\hat\Sigma_S \hat\bbeta_{\tSS}$, we could see that this algorithm regularizes $\hat\bbeta$ based on the signal strength from the target distribution, and it is equivalent to ridge regression by adjusting the prior distribution to center at $\hat\bbeta_{\tSS}$, the unbiased estimator for the ground truth $\bbeta^*$. We will compare both methods with our minimax estimator in the experimental section.

\subsection{Minimax linear estimator with finite unlabeled samples from target domain}
%In Section \ref{sec:cov_shift}, we consider the case when $\Sigma_T$ is known exactly. This could be viewed as the fixed design setting where training and testing are on the same set of data. In this section, our analysis will include the estimation error on observing finite unlabeled samples of target domain. Let $X_T=[\bx_1,\cdots \bx_{n_U}]^\top \in \R^{n_U\times d}$ be $n_U$ (Here $U$ stands for unlabeled data and is used to distinguish from $n_T$ labeled target samples) data samples where $\bx_i\sim p_T$, and we will use the unlabeled target samples to conduct estimation. We let $\hat\Sigma_T=X_T^\top X_T/n_U$.   

In practice, we have finite unlabeled samples $X_U\in \R^{n_U\times d},$ where we denote the empirical second-moment matrix as $\hat \Sigma_U = X_U^\top X_U/n_U$. Let $\hat L_{\cB}$ to denote the worst case excess risk measured on the observed target samples: $\hat L_\cB(\hat \bbeta) = \max_{\bbeta^*\in \cB} \E_{\by_S}\frac{1}{n_U} \|X_U(\hat\bbeta(\by_S)-\bbeta^*)\|^2  $. To find the best linear estimator that minimizes $\hat L_{\cB}$, our proposed algorithm becomes: 
\begin{align} 
\label{eqn:cov_shift_alg_finite_n_U}
\hat C \leftarrow & \min_{\tau, C} \left\{ r^2 \tau + \frac{\sigma^2}{n_S} \Trace(C\hat \Sigma_S^{-1} C^\top\hat\Sigma_U) \right\},\\  
\notag
& \text{ s.t. }  (C-I)^\top\hat \Sigma_U (C-I) \preceq \tau I.
\end{align} 
Let $\hat\bbeta = \hat C \hat\Sigma_S^{-1}X_S^\top \by_S/n_S$. We want to show that in spite of the existence of estimation error due to the replacement of $\Sigma_T$ with $\hat\Sigma_T$, our generated $\hat\bbeta$ still achieves minimax linear risk (up to constant multiplicative error).

%Recall we assumed $\E_{\bx\sim p_T} [\bx] = \bm0$ and let $\Sigma_T = \E_{\bx\sim p_T} [\bx\bx^\top]$.

For simplicity, in this section we assume input samples are centered: $\E_{p_S}[\bx]=\E_{p_T}[\bx]=0$. This assumption results in no loss of generality. Since the sample mean is more sample-efficient to estimate than covariance matrix, one will be able to first estimate the mean and center the data. We assume some standard light-tail property on the target samples: 
\begin{definition}[$\rho^2$-subgaussian distribution]
	\label{definition:linear_subgaussian}
	We call a distribution $p,\E[p]=0$ to be $\rho^2$-subgaussianwhen there exists $\rho > 0$ such that the random vector $\bar{\bx} \sim \bar p$ is $\rho^2$-subgaussian. $\bar p$ is the whitening of $p$ such that $\bar\bx\sim \bar p$ is equivalent to $\bx=\Sigma^{1/2}\bar\bx \sim p $, where $\Sigma=\E_p[\bx\bx^\top]$.  
	\footnote{A random vector $\bx$ is called $\rho^2$-subgaussian if for any fixed unit vector $\bv$ of the same dimension, the random variable $\bv^\top \bx$ is $\rho^2$-subgaussian, i.e., $\E[e^{s\cdot\bv^\top(\bx-\E[\bx])}]\le e^{s^2\rho^2/2}$ ($\forall s\in\R$).}
\end{definition}
Note that $\rho$ is defined on the whitening of the data. It doesn't scale with $\|\Sigma\|_{op}$ and should be viewed as universal constant. 

\begin{theorem}
	\label{thm:random_design_target}
	Fix a failure probability $\delta\in(0, 1)$.
	Suppose target distribution $p_T$ is $\rho^2$-subgaussian, and the sample size in target domain satisfies $n_U \gg \rho^4(d+\log\frac{1}{\delta})$. Let $\hat \bbeta:\by_S\rightarrow \hat C\hat\Sigma_S^{-1}X_S^\top \by_S$ where $\hat C$ is defined from Eqn. \eqref{eqn:cov_shift_alg_finite_n_U}. 
	Then with probability at least $1-\delta$ over the unlabeled samples from target domain, and for each fixed $X_S$ from source domain, our learned estimator $\hat\bbeta(\by_S)$ satisfies:
	\begin{equation}
	L_{\cB}(\hat\bbeta)\leq (1+O(\sqrt{\frac{\rho^4(d+\log(1/\delta))}{n}})) R_L(\cB).
	\end{equation}	
	When $\Sigma_T$ commutes with $\hat \Sigma_S$ or is rank 1, we have:
	\begin{equation}
	L_{\cB}(\hat\bbeta)\leq (1.25+O(\sqrt{\frac{\rho^4(d+\log(1/\delta))}{n}})) R_N(\cB).
	\end{equation}		 
\end{theorem}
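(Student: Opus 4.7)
The plan is to reduce the problem to a multiplicative spectral comparison between the empirical target covariance $\hat\Sigma_U$ and the true $\Sigma_T$. First, I would invoke the standard sample-covariance concentration bound for $\rho^2$-subgaussian distributions: under the assumption $n_U \gg \rho^4(d+\log(1/\delta))$, with probability at least $1-\delta$,
\begin{equation*}
(1-\epsilon)\Sigma_T \preceq \hat\Sigma_U \preceq (1+\epsilon)\Sigma_T, \qquad \epsilon = O\!\left(\rho^2\sqrt{(d+\log(1/\delta))/n_U}\right).
\end{equation*}
This follows by first whitening the target samples as in Definition \ref{definition:linear_subgaussian} and then applying the standard operator-norm bound on $\Sigma_T^{-1/2}\hat\Sigma_U\Sigma_T^{-1/2} - I$ for an i.i.d.\ sum of $\rho^2$-subgaussian outer products with identity covariance.

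Second, I would use this sandwich to transfer guarantees between the population minimax program solved implicitly by $R_L(\cB)$ and the empirical program \eqref{eqn:cov_shift_alg_finite_n_U} whose optimizer is $\hat C$. Let the population objective be $J_T(C,\tau) = r^2\tau + \frac{\sigma^2}{n_S}\Trace(\Sigma_T C\hat\Sigma_S^{-1}C^\top)$ subject to $(C-I)^\top\Sigma_T(C-I)\preceq \tau I$, and define $J_U$ analogously with $\hat\Sigma_U$ in place of $\Sigma_T$. Because $\Sigma_T\preceq (1-\epsilon)^{-1}\hat\Sigma_U$, any empirical-feasible pair $(\hat C,\hat\tau)$ is population-feasible at $(\hat C,\hat\tau/(1-\epsilon))$, and the trace term is inflated by at most $(1-\epsilon)^{-1}$. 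Since $L_\cB(\hat\bbeta)$ equals $J_T$ evaluated at $\hat C$ with the smallest admissible $\tau$, this yields $L_\cB(\hat\bbeta)\leq (1-\epsilon)^{-1}J_U(\hat C,\hat\tau)$.

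Third, a symmetric step upper-bounds the empirical optimum by the population minimax: letting $(C_T^*,\tau_T^*)$ be the population minimizer, the pair $(C_T^*,(1+\epsilon)\tau_T^*)$ is empirical-feasible and has empirical objective at most $(1+\epsilon)J_T(C_T^*,\tau_T^*) = (1+\epsilon)R_L(\cB)$. Chaining the two inequalities gives
\begin{equation*}
L_\cB(\hat\bbeta)\;\leq\;\tfrac{1+\epsilon}{1-\epsilon}\,R_L(\cB)\;=\;(1+O(\epsilon))\,R_L(\cB),
\end{equation*}
which is the first bound. The second bound follows by composing with Theorem \ref{thm:1.25minimax_risk}: in the commuting or rank-one regimes, $R_L(\cB)\leq 1.25\,R_N(\cB)$, and the constant $1.25$ passes through the same multiplicative wrapping.

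The main technical hurdle is the concentration step: obtaining the dependence $\sqrt{\rho^4(d+\log(1/\delta))/n_U}$ in \emph{multiplicative} (relative-spectral) rather than additive form is what lets the comparison avoid any dependence on $\|\Sigma_T\|_{\mathrm{op}}$ or its condition number, and is precisely why Definition \ref{definition:linear_subgaussian} is stated on the whitened distribution. A secondary item to verify is that Claim \ref{claim:A=A_1X^T}, which restricts optimal estimators to the form $\hat\bbeta = C\hat\bbeta_{\tSS}$, continues to apply when the loss is measured via $\hat\Sigma_U$ rather than $\Sigma_T$; this is immediate, since the Fisher--Neyman factorization used there concerns the conditional distribution of $\by_S$ and not the target second-moment matrix.
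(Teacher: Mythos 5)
Your proposal is correct and follows essentially the same route as the paper: a multiplicative spectral sandwich $(1-\epsilon)\Sigma_T \preceq \hat\Sigma_U \preceq (1+\epsilon)\Sigma_T$ from subgaussian covariance concentration (the paper's Claim~\ref{claim:concentration_covariance}), followed by an ERM-style chain transferring optimality between the empirical and population objectives and then composing with Theorem~\ref{thm:1.25minimax_risk}. The paper phrases the transfer as a uniform relative bound between $\hat L_{\cB}$ and $L_{\cB}$ for arbitrary estimators (Claim~\ref{claim:estimation_error_worst_case}) rather than via feasibility of the convex programs, but the two bookkeepings are equivalent.
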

Similarly all other results in the paper could be extended to $\hat\bbeta\leftarrow \argmin \hat L_{\cB}(\cdot)$, the estimator obtained with finite target samples $X_U$.

%\begin{remark}[Incorporating the randomness of source and target features]
%For clean presentation purposes, in the main text we assume to have access to $\Sigma_T$. In practice, we will need to estimate $\Sigma_T$ by finite unlabeled samples from target domain. In Appendix \ref{appendix:random_design}.1 we show that our estimator remains near-optimal if we have $\gg d$ unlabeled target samples under some standard light-tail assumptions. \\
\begin{remark}[Incorporating the randomness from source data] 
For linear estimators, it naturally considers $X_S$ as fixed and Theorem \ref{thm:1.25minimax_risk} is comparing our estimator with the optimal nonlinear estimator using the same data $X_S$ from the source domain. In Appendix \ref{appendix:random_design}, we compare our estimator with an even stronger linear estimator with infinite access to $p_S$ and show that our estimator is still within multiplicative factor of it. 
\end{remark}

%\end{remark} 
\subsection{Utilize source and target labeled data jointly}
\label{sec:linear_source_and_target_data}
In some scenarios, we have moderate amount of labeled data from target domain as well. In such cases, it is important to utilize the source and target labeled data jointly. Let $\by_S = X_S \bbeta^* + \bz_S$, $\by_T = X_T \bbeta^* + \bz_T$. We consider $X_S,X_T$ as deterministic variables, $\hat \Sigma_S^{-1} X_S^\top \by_S/n_S \sim \cN(\bbeta^*, \frac{\sigma^2}{n_S}\hat \Sigma_S^{-1} ) $ and $\hat \Sigma_T^{-1} X_T^\top \by_T /n_T \sim \cN(\bbeta^*, \frac{\sigma^2}{n_T}\hat \Sigma_T^{-1} ). $ Therefore conditioned on the observations $\by_S,\by_T$, a sufficient statistic for $\bbeta^* $ is $\hat\bbeta_{\tSS} := (n_S\hat \Sigma_S + n_T\hat \Sigma_T)^{-1}( X_S^\top \by_S + X_T^\top \by_T ).  $

\begin{claim}
	\label{claim:sufficient_statistic}
	$\hat\bbeta_{\tSS}$ is an unbiased sufficient statistic of $\bbeta^*$ with samples $\by_S,\by_T$. $\hat \bbeta_{\tSS} \sim \cN(\bbeta^*, \sigma^2(n_S\hat \Sigma_S + n_T \hat \Sigma_T)^{-1} )$. 
\end{claim}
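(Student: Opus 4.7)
The plan is to verify the three assertions of the claim in turn: (i) unbiasedness of $\hat\bbeta_{\tSS}$, (ii) the Gaussian distribution with the stated covariance, and (iii) sufficiency via the Fisher–Neyman factorization theorem. The first two are routine linear-algebra computations; the third is the substantive step but is really just an algebraic completion-of-the-square argument applied to the joint log-likelihood. Throughout let $M := n_S \hat\Sigma_S + n_T \hat\Sigma_T$ and $v := X_S^\top \by_S + X_T^\top \by_T$, so that $\hat\bbeta_{\tSS} = M^{-1} v$.

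First I would compute $\E[v]$ using the two generative models $\by_S = X_S\bbeta^* + \bz_S$ and $\by_T = X_T\bbeta^* + \bz_T$ with independent zero-mean Gaussian noise. Since $X_S^\top X_S = n_S\hat\Sigma_S$ and $X_T^\top X_T = n_T\hat\Sigma_T$, one gets $\E[v] = M\bbeta^*$, so $\E[\hat\bbeta_{\tSS}] = M^{-1}M\bbeta^* = \bbeta^*$. The distribution is Gaussian because it is a deterministic linear transform of the independent Gaussian vectors $\by_S, \by_T$. By independence of $\bz_S, \bz_T$, $\mathrm{Cov}(v) = \sigma^2(X_S^\top X_S + X_T^\top X_T) = \sigma^2 M$, and conjugating by $M^{-1}$ gives $\mathrm{Cov}(\hat\bbeta_{\tSS}) = \sigma^2 M^{-1}$, which is the stated covariance.

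For sufficiency, I would write down the joint density of $(\by_S,\by_T)$ given $\bbeta^*$, which is proportional to $\exp\!\big(-\tfrac{1}{2\sigma^2}(\|\by_S - X_S\bbeta^*\|^2 + \|\by_T - X_T\bbeta^*\|^2)\big)$. Expanding the two quadratics and collecting the $\bbeta^*$-dependent terms produces the exponent
\begin{equation*}
-\tfrac{1}{2\sigma^2}\!\left(\|\by_S\|^2 + \|\by_T\|^2 - 2\bbeta^{*\top} v + \bbeta^{*\top} M \bbeta^*\right).
\end{equation*}
Completing the square using $v = M\hat\bbeta_{\tSS}$ yields
\begin{equation*}
\bbeta^{*\top} M \bbeta^* - 2\bbeta^{*\top} M \hat\bbeta_{\tSS} = (\bbeta^* - \hat\bbeta_{\tSS})^\top M (\bbeta^* - \hat\bbeta_{\tSS}) - \hat\bbeta_{\tSS}^\top M \hat\bbeta_{\tSS}.
\end{equation*}
Plugging back in, the density factors as $g(\hat\bbeta_{\tSS},\bbeta^*)\,h(\by_S,\by_T)$, where $g$ collects the two terms involving $\hat\bbeta_{\tSS}$ (one of which depends on $\bbeta^*$) and $h$ collects $\|\by_S\|^2 + \|\by_T\|^2$ together with the normalizing constant. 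By the Fisher–Neyman factorization theorem, $\hat\bbeta_{\tSS}$ is a sufficient statistic for $\bbeta^*$.

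I don't anticipate a serious obstacle: the only mild subtlety is to make sure the factorization is bona fide, i.e., that the part of the exponent independent of $\bbeta^*$ (containing $\|\by_S\|^2 + \|\by_T\|^2$) is entirely absorbed into $h$ and that every remaining dependence on the data enters $g$ only through $\hat\bbeta_{\tSS}$, which the completion-of-the-square identity above makes transparent. If $M$ is singular (possible when $n_S + n_T < d$), the argument goes through verbatim after replacing $M^{-1}$ by the Moore–Penrose pseudoinverse and restricting attention to $\mathrm{Range}(M)$, as flagged earlier in the paper.
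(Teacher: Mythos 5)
Your proposal is correct and follows essentially the same route as the paper: Fisher--Neyman factorization via completing the square in the joint Gaussian log-likelihood, plus the routine mean/covariance computations. The only difference is that you factorize the density of the raw observations $(\by_S,\by_T)$ directly, whereas the paper first passes to the per-domain least-squares statistics $(\bar\bbeta_S,\bar\bbeta_T)$ and factorizes their joint density; your version is marginally more self-contained (it does not implicitly rely on that intermediate reduction preserving sufficiency) and also explicitly verifies the unbiasedness and covariance assertions, which the paper's proof leaves implicit.
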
	
	
\paragraph{Algorithm:} First consider the estimator $\hat\bbeta_{\tSS} = (n_S\hat \Sigma_S + n_T\hat \Sigma_T)^{-1}( X_S^\top \by_S + X_T^\top \by_T )$. Next find the best linear function of $\hat\bbeta_{\tSS}$:
\begin{align*}
%r_{\cB}(\hat \bbeta, \bbeta_T^*)
\hat\bbeta_{\tMM}%:= & \argmin_{\hat \bbeta=C\hat \bbeta_{\tSS}} \max_{\bbeta^*\in\cB} \E\|\Sigma^{1/2}_T(\hat \bbeta - \bbeta_T^*)\|^2 \\
%= & \min_{C} \max_{\bbeta^*\in\cB} \|\Sigma_T^{1/2}(C-I)\bbeta^*\|^2 + \sigma^2 \Trace(C^\top \Sigma_T C)\\
%= & \min_C r^2\| (C-I)^\top\Sigma_T(C-I)\|_2 + \sigma^2 \Trace((n_S\hat \Sigma_S + n_T \hat \Sigma_T)^{-1} C^\top \Sigma_T C)\\
= & \arg\min_{C,\tau} r^2 \tau + \sigma^2 \Trace((n_S\hat \Sigma_S + n_T \hat \Sigma_T)^{-1} C^\top \Sigma_T C),\\
 \text{ s.t. } & (C-I)^\top\Sigma_T(C-I) \preceq \tau. 
\end{align*}

 %Notice $\hat\bbeta_{\tSS}$ is a sufficient statistic for $\bbeta^*$.

\begin{proposition}
\label{claim:linear_in_SS} 
The minimax estimator $\hat \bbeta_{\tMM}$ is of the form $C \hat \bbeta_{\tSS}$ for some $C$. When choosing $C$ with our proposed algorithm and when $\hat\Sigma_S$ commutes with $\hat\Sigma_T$ and $\Sigma_T$, we achieve the minimax risk $R_L(\cB) \leq 1.25 R_N(\cB)$. %	\jnote{commute?}
\end{proposition}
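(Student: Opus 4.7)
The proposal has three stages. \textbf{Stage 1 (reduction to the sufficient statistic).} The observation $(\by_S,\by_T)$ is jointly Gaussian with mean $(X_S\bbeta^*, X_T\bbeta^*)$ and a known covariance not depending on $\bbeta^*$; using the Fisher--Neyman factorization we have already established in Claim~\ref{claim:sufficient_statistic} that $\hat\bbeta_{\tSS}=(n_S\hat\Sigma_S+n_T\hat\Sigma_T)^{-1}(X_S^\top\by_S+X_T^\top\by_T)$ is a sufficient statistic for $\bbeta^*$, with
\[
\hat\bbeta_{\tSS}\sim \cN\bigl(\bbeta^*,\,\sigma^2 M^{-1}\bigr),\qquad M:=n_S\hat\Sigma_S+n_T\hat\Sigma_T.
\]
An arbitrary linear estimator $A_1\by_S+A_2\by_T$ has risk that by Rao--Blackwell can only weakly improve upon being replaced by its conditional expectation given $\hat\bbeta_{\tSS}$; in the Gaussian setting the conditional expectation of a linear functional of the data given a linear sufficient statistic is itself a linear function of that statistic. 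Thus among linear rules the minimax risk is attained by some $C\hat\bbeta_{\tSS}$, which is the first sentence of the proposition. (This mirrors Claim~\ref{claim:A=A_1X^T} but in the presence of both source and target samples.)

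\textbf{Stage 2 (writing and solving the convex program).} For $\hat\bbeta=C\hat\bbeta_{\tSS}$ the excess target risk decomposes into the usual bias and variance terms
\[
L_{\cB}(C\hat\bbeta_{\tSS})=\max_{\|\bbeta^*\|\le r}\|\Sigma_T^{1/2}(C-I)\bbeta^*\|^2+\sigma^2\operatorname{Tr}\bigl(\Sigma_T^{1/2}CM^{-1}C^\top\Sigma_T^{1/2}\bigr).
\]
Introducing the slack $\tau$ with $(C-I)^\top\Sigma_T(C-I)\preceq\tau I$ gives exactly the convex program stated before the proposition, whose minimizer is $\hat\bbeta_{\tMM}$. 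Convexity and polynomial-time solvability follow from Proposition~\ref{prop:convex_program} applied with $\hat\Sigma_S^{-1}/n_S$ replaced by $M^{-1}/\sigma^{-2}\cdot\sigma^2$; this immediately yields $L_{\cB}(\hat\bbeta_{\tMM})=R_L(\cB)$.

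\textbf{Stage 3 (the $1.25$ factor under commutativity).} Under the assumption that $\hat\Sigma_S$, $\hat\Sigma_T$ and $\Sigma_T$ are simultaneously diagonalizable, the matrix $M=n_S\hat\Sigma_S+n_T\hat\Sigma_T$ commutes with $\Sigma_T$. In the common eigenbasis, let $M=U\operatorname{diag}(\bs)U^\top$ and $\Sigma_T=U\operatorname{diag}(\bt)U^\top$. Writing $\btheta^*:=U^\top\Sigma_T^{1/2}\bbeta^*$ and $\bY:=U^\top\Sigma_T^{1/2}\hat\bbeta_{\tSS}$, we obtain the bounded-normal-mean sequence model $\bY\sim\cN\bigl(\btheta^*,\sigma^2\operatorname{diag}(\bt/\bs)\bigr)$ under the orthosymmetric constraint $\|\btheta^*\|\le r\|\Sigma_T^{1/2}\|$ (or more precisely $\sum t_i^{-1}\theta_i^{*2}\le r^2$). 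For this diagonal Gaussian location problem with quadratic loss over an orthosymmetric set, a classical result (Donoho--Liu--MacGibbon, reproduced as Theorem~4.25 in \citet{johnstone2011gaussian}) gives $R_L\le 1.25\,R_N$. Pulling back through the linear change of variables $U^\top\Sigma_T^{1/2}$ preserves the ratio because both $L_{\cB}$ and the nonlinear minimax risk are invariant under this invertible reparameterization, so the inequality transfers to the original coordinates.

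\textbf{Main obstacle.} The first stage is the only genuinely new ingredient compared with the unsupervised case: one must justify that pooling source and target observations cannot do better than restricting to rules of the form $C\hat\bbeta_{\tSS}$. I expect to argue this directly via Gaussian conditioning rather than appealing to Rao--Blackwell abstractly, since we need the replacement to be \emph{linear} in the original observations so that the class of linear rules is preserved. The other potential subtlety is invoking the Donoho--Liu 1.25 bound: it applies to diagonal noise with orthosymmetric parameter sets, so I would need to verify that the image of $\{\|\bbeta^*\|\le r\}$ under $U^\top\Sigma_T^{1/2}$ is a (weighted) ellipsoid, which is orthosymmetric in the chosen basis; this is exactly why commutativity of $\hat\Sigma_S,\hat\Sigma_T,\Sigma_T$ is needed.
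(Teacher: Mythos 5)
Your proposal is correct, and its first stage takes a genuinely different route from the paper's. The paper proves the reduction to $C\hat\bbeta_{\tSS}$ computationally: it first restricts to estimators of the form $A\bar\bbeta_S+B\bar\bbeta_T$ (by the same column-span argument as Claim~\ref{claim:A=A_1X^T}), writes the bias--variance decomposition, and then sets the stationarity conditions in $A$ and $B$ equal to deduce $\frac{1}{n_S}A\hat\Sigma_S^{-1}=\frac{1}{n_T}B\hat\Sigma_T^{-1}$, which forces the optimum to be $C(X_S^\top\by_S+X_T^\top\by_T)$. You instead Rao--Blackwellize: condition an arbitrary linear rule $A_1\by_S+A_2\by_T$ on the sufficient statistic and check by Gaussian conditioning that $\E[\by_S\mid\hat\bbeta_{\tSS}]=X_S\hat\bbeta_{\tSS}$ (no affine offset), so the projected rule stays linear and weakly dominates pointwise in $\bbeta^*$, hence also in the worst case. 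Your route is cleaner — it avoids differentiating the non-smooth operator-norm bias term that the paper's gradient argument glosses over, and the same conditioning argument simultaneously justifies restricting the \emph{nonlinear} minimax risk to functions of $\hat\bbeta_{\tSS}$, which the paper handles separately via Lemma~\ref{lemma:sufficient_statistic_is_enough_for_best_estimator}. What the paper's computation buys in exchange is the explicit relationship between the optimal $A$ and $B$. Your Stages 2 and 3 match the paper's treatment: the convex program is the one displayed before the proposition, and the $1.25$ factor comes from the same reduction to a diagonal Gaussian sequence model over the ellipsoid $\sum_i\theta_i^{*2}/t_i\le r^2$ used in Theorem~\ref{thm:1.25minimax_risk} (your first expression $\|\btheta^*\|\le r\|\Sigma_T^{1/2}\|$ is wrong, but your parenthetical correction is the right constraint set, and commutativity is indeed exactly what makes the noise diagonal and the set orthosymmetric in the common eigenbasis).
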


\section{Covariate shift with approximation error}
%\subsection{Linear minimax estimator with approximation error} 
\label{sec:nonlinear_model}
Now we consider observations coming from nonlinear models: $\by_S=f^*(X_S)+\bz$. Let $\bbeta_S^* = \argmin_{\bbeta}\E_{\bx\sim p_S,z\sim \cN(0,\sigma^2)}[(f^*(\bx)+z-\bbeta^\top \bx)^2]$, and similarly for $\bbeta_T^*$. Notice now even with $f^*$ unchanged across domains, the input distribution affects the best linear model. Approximation error on source domain is $a_S(\bx) := f^*(\bx) - \bx^\top \bbeta_S^*$ and vice versa for $a_T$. 

Define the reweighting vector  $\bw\in \R^n$ as $w_i=p_T(\bx_i)/p_S(\bx_i)$. We form an unbiased estimator via 
\begin{align*}
\hat \bbeta_{\tLS} = & \argmin_\bbeta \{\sum_i  \frac{p_T(\bx_i)}{p_S(\bx_i)}( \bbeta^\top \bx_i - y_i)^2\}\\
= & (X_S^\top \diag(\bw) X_S)^{-1} (X_S^\top \diag(\bw) \by_S). 
\end{align*}

\begin{claim}
	\label{claim:MVUE}
	$\hat \bbeta_{\tLS}$ is asymptotically unbiased and normally distributed with covariance matrix $M:=\Sigma_T^{-1} \E_{\bx\sim p_T} [\frac{p_T(\bx)}{p_S(\bx)}  (a_T(\bx)^2+\sigma^2)\bx\bx^\top]\Sigma_T^{-1}$:
	\begin{align*}
	& \sqrt{n_S}(\hat \bbeta_{\tLS} - \bbeta_T^*) \overset{d}{\rightarrow} \cN(0, M ).	
	\end{align*}
\end{claim}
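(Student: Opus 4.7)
The plan is to prove both claims via a standard M-estimator-style argument. First I would rewrite the response as $y_i = \bx_i^\top \bbeta_T^* + a_T(\bx_i) + z_i$ (since $a_T(\bx) := f^*(\bx) - \bx^\top\bbeta_T^*$ by definition), so that, with $W=\diag(\bw)$ and $\ba_T = (a_T(\bx_1),\dots,a_T(\bx_{n_S}))^\top$,
\begin{equation*}
\hat\bbeta_{\tLS} - \bbeta_T^* \;=\; \Bigl(\tfrac{1}{n_S}X_S^\top W X_S\Bigr)^{-1}\tfrac{1}{n_S}X_S^\top W(\ba_T+\bz).
\end{equation*}
This reduces the claim to controlling the ``design'' term and the ``score'' term separately.

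For the design term, I would apply the law of large numbers to $\tfrac{1}{n_S}X_S^\top W X_S = \tfrac{1}{n_S}\sum_i w(\bx_i)\bx_i\bx_i^\top$. The key identity is the importance-reweighting computation
\begin{equation*}
\E_{\bx\sim p_S}\!\left[\tfrac{p_T(\bx)}{p_S(\bx)}\,\bx\bx^\top\right] = \E_{\bx\sim p_T}[\bx\bx^\top] = \Sigma_T,
\end{equation*}
so $\tfrac{1}{n_S}X_S^\top W X_S \xrightarrow{p} \Sigma_T$ under standard moment conditions. Invertibility for large $n_S$ then follows from continuous mapping.

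The main step is the score term $\tfrac{1}{\sqrt{n_S}}X_S^\top W(\ba_T+\bz) = \tfrac{1}{\sqrt{n_S}}\sum_i w(\bx_i)\bx_i\bigl(a_T(\bx_i)+z_i\bigr)$. The crucial observation — and the one that buys asymptotic unbiasedness — is that $\bbeta_T^*$ is the population least-squares minimizer under $p_T$, so by the first-order optimality condition $\E_{\bx\sim p_T}[\bx\,a_T(\bx)]=0$. Combined with importance reweighting and $\E[z_i\mid \bx_i]=0$, this yields
\begin{equation*}
\E_{\bx\sim p_S,\,z}\bigl[w(\bx)\bx(a_T(\bx)+z)\bigr] = \E_{\bx\sim p_T}[\bx\,a_T(\bx)] = 0,
\end{equation*}
so the summands are i.i.d. zero-mean. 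Their covariance, using independence of $z_i$ from $\bx_i$ and one more change of measure, equals
\begin{equation*}
\E_{\bx\sim p_S}\!\left[w(\bx)^2(a_T(\bx)^2+\sigma^2)\bx\bx^\top\right] = \E_{\bx\sim p_T}\!\left[\tfrac{p_T(\bx)}{p_S(\bx)}(a_T(\bx)^2+\sigma^2)\bx\bx^\top\right].
\end{equation*}
The multivariate CLT then gives asymptotic normality of the score with this covariance, and Slutsky's theorem combined with the design-term limit produces $\sqrt{n_S}(\hat\bbeta_{\tLS}-\bbeta_T^*)\xrightarrow{d}\cN(0,M)$ with $M$ as stated.

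The main obstacles are technical rather than conceptual: the CLT and LLN require the reweighted second/fourth moments $\E_{p_T}[w(\bx)\|\bx\|^4]$ and $\E_{p_T}[w(\bx)a_T(\bx)^2\|\bx\|^2]$ to be finite, which tacitly requires $p_T/p_S$ to be bounded or to have sufficiently light tails relative to $p_S$. I would make this moment assumption explicit (and note it is the same assumption typically invoked for importance-weighted ERM). Everything else — especially the vanishing of the asymptotic bias — is essentially dictated by the orthogonality characterization of the best linear predictor under $p_T$.
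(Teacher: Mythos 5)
Your proposal is correct and follows essentially the same route as the paper's proof: the same decomposition $\hat\bbeta_{\tLS}-\bbeta_T^*=(X_S^\top W X_S)^{-1}X_S^\top W(\ba_T+\bz)$, the same importance-reweighting identity giving $\tfrac{1}{n_S}X_S^\top W X_S\to\Sigma_T$, and the same first-order optimality (orthogonality) condition $\E_{p_T}[\bx\,a_T(\bx)]=0$ as the key to asymptotic unbiasedness. You merely fill in details the paper leaves implicit (the explicit score covariance, the CLT/Slutsky step, and the moment conditions on $p_T/p_S$), which is a welcome but not substantively different elaboration.
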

%One should note that in practice, a simple importance-weighted risk minimization does not normally yield a good performance. %knowing the importance weights does not sidestep the issue of covariate shift. 
Note that large importance weights greatly inflates the variance of the estimator, especially when $p_T/p_S$ blows up somewhere. Therefore here we design the an algorithm to cope with the inflated variance. Again we want to minimize the worst case risk:
\begin{align*}
&  \min_{\hat \bbeta=C\hat \bbeta_{\tLS}} \max_{\bbeta_T^*\in\cB} \E \|\Sigma^{1/2}_T(\hat \bbeta - \bbeta_T^*)\|^2 \\
\overset{d}{\rightarrow} & \min_{C} \max_{\|\bbeta_T^*\|\leq r} \left\{\|\Sigma_T^{1/2}(C-I)\bbeta_T^* \|^2_2 + \frac{1}{n_S} \Trace(C M C^\top\Sigma_T) \right\}\\
= & \min_{C} \left\{  \|(C-I)^\top\Sigma_T (C-I)\|_2 r^2   +  \frac{1}{n_S} \Trace(C M C^\top \Sigma_T) \right\}
\end{align*} 
With $\hat\beta_{\tLS}$ computed beforehand, one could first estimate $M$ by let $\hat M:=\frac{1}{n_S} \sum_{i} \Sigma_T^{-1} \frac{p_T^2(\bx)}{p_S^2(\bx)} (y_i- \bx_i^\top \hat \bbeta_{\tLS} )^2\bx_i\bx_i^\top \Sigma_T^{-1}$. 
Therefore our estimator is $\hat \bbeta_{\tMM}\leftarrow \hat{C}\hat\bbeta_{\tLS}$, where $\hat C$ finds
\begin{align}
\label{eqn:best_linear_estimator_with_approx_error}
\hat C &\leftarrow  \arg\min_{\tau, C} \left\{ r^2 \tau + \frac{1}{n_S} \Trace( C \hat M C^\top \Sigma_T ) \right\} \\
\notag 
& \text{ s.t. }  (C-I)^\top \Sigma_T (C-I) \preceq \tau I.
\end{align}
\begin{claim}
	\label{claim:nonlinear_form}
	Let $\cB=\{\bbeta| \|\bbeta\|\leq r \}$, and $f^*\in \cF$ is some compact symmetric function class: $f\in \cF\Leftrightarrow -f\in \cF$. Then linear minimax estimator is of the form $C\hat \bbeta_{\tLS}$ for some $C$. When $\hat C$ solves Eqn.  \eqref{eqn:best_linear_estimator_with_approx_error}, $L_{\cB}(\hat \bbeta_{\tMM})$ asymptotically matches $R_L(\cB)$, the linear minimax risk.  
\end{claim}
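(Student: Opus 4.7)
The plan is to adapt the two-step strategy from the linear case (Claim~\ref{claim:A=A_1X^T} combined with the analysis leading to Theorem~\ref{thm:1.25minimax_risk}) to the nonlinear setting, working in the asymptotic regime of Claim~\ref{claim:MVUE}. The proof splits naturally into two pieces: first, showing that the linear minimax estimator asymptotically has the form $C\hat\bbeta_{\tLS}$; and second, identifying the optimal $C$ via the convex program~\eqref{eqn:best_linear_estimator_with_approx_error}.

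For the reduction to $C\hat\bbeta_{\tLS}$, I would decompose any linear rule $A\by_S$ as $AX_S\cdot \hat\bbeta_{\tLS} + B\by_S$, where $B := A - AX_S\cdot (X_S^\top W X_S)^{-1}X_S^\top W$ satisfies $BX_S = 0$. The residual piece $B\by_S$ has mean $Ba_T(X_S)$ and independent Gaussian noise $B\bz$. By Claim~\ref{claim:MVUE} together with standard semiparametric theory, $\hat\bbeta_{\tLS}$ is asymptotically sufficient (in the LAN sense) for $\bbeta_T^*$ across $f^*\in\cF$, so $B\by_S$ concerns only nuisance directions (the approximation error $a_T$ and noise). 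Using the symmetry of $\cF$ (together with its compactness to keep approximation errors uniformly bounded), the adversary's freedom in selecting $f^*$ can drive the worst-case contribution of $B\by_S$ to at least what is obtained by $B = 0$, while $BB^\top$ strictly inflates the variance; hence setting $B = 0$ is linear-minimax-optimal.

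For the optimization step, restricted to $\hat\bbeta = C\hat\bbeta_{\tLS}$, Claim~\ref{claim:MVUE} gives the bias--variance decomposition
\[
\E\|\Sigma_T^{1/2}(C\hat\bbeta_{\tLS} - \bbeta_T^*)\|^2 = \|\Sigma_T^{1/2}(C-I)\bbeta_T^*\|^2 + \tfrac{1}{n_S}\Trace(CMC^\top\Sigma_T) + o(n_S^{-1}).
\]
Maximizing the bias term over $\|\bbeta_T^*\|\le r$ yields $r^2\lambda_{\max}((C-I)^\top\Sigma_T(C-I))$, which a scalar slack $\tau$ reformulates as the PSD constraint $(C-I)^\top\Sigma_T(C-I)\preceq \tau I$; this is exactly the program~\eqref{eqn:best_linear_estimator_with_approx_error} with the true $M$ in place of $\hat M$, and its optimum equals $R_L(\cB)$ up to $o(1)$. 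Convexity of the program is the same sandwich argument as in Proposition~\ref{prop:convex_program}. Substituting the plug-in $\hat M$ costs only $o(1)$: by compactness of $\cF$ and standard uniform law-of-large-numbers arguments, $\hat M \overset{p}{\to} M$ uniformly in $f^*\in\cF$, and the value of the convex program is continuous in its coefficient matrices, so $L_\cB(\hat C\hat\bbeta_{\tLS}) = (1+o(1))R_L(\cB)$.

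The principal obstacle is the rigorous justification of the reduction to $C\hat\bbeta_{\tLS}$. The residual operator $B$ couples to both the bias (through $Ba_T(X_S)$, which depends on $f^*$) and the variance (through $BB^\top$), and in the decomposition $A = AX_S\cdot H + B$ the cross term in the Euclidean-metric variance need not vanish. Resolving this requires working in the asymptotic/LAN regime where that cross term is $o(1)$ relative to the leading variance contribution, and using the adversary's freedom (symmetry of $\cF$ paired with compactness) to show the bias contribution from any $B\ne 0$ cannot be uniformly cancelled. Verifying this carefully, along with the uniformity of the plug-in $\hat M$ over $\cF$, is where the technical heart of the proof lies.
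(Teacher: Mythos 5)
Your second step (restricting to $\hat\bbeta=C\hat\bbeta_{\tLS}$, maximizing the bias over the ball to get the epigraph constraint $(C-I)^\top\Sigma_T(C-I)\preceq\tau I$, and handling the plug-in $\hat M$ by consistency) matches the paper's derivation of Eqn.~\eqref{eqn:best_linear_estimator_with_approx_error}. The gap is in the reduction to the form $C\hat\bbeta_{\tLS}$, and you have correctly located it but not closed it. Your decomposition $A=AX_S\cdot H+B$ with $BX_S=0$ projects along the wrong subspace: with that choice neither the noise cross term $\Trace(\Sigma_T AX_S H B^\top)$ nor the coupling of the two pieces through the approximation error $\ba_T$ vanishes, and the appeal to ``asymptotic sufficiency in the LAN sense'' does not dispose of them --- nothing in the setup establishes local asymptotic normality uniformly over the nonparametric nuisance $f^*\in\cF$, and the minimax risk is itself of order $1/n_S$, i.e.\ the same order as the terms you hope to declare negligible.

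The paper instead writes $A=A_1X_S^\top\diag(\bw)+A_2W^\top$, where the columns of $W$ span the orthogonal complement of the column span of $\diag(\bw)X_S$, so that $X_S^\top\diag(\bw)W=0$. This choice makes the noise cross term vanish \emph{exactly} (since $\E[\bz\bz^\top]=\sigma^2I$), and --- this is the identity your argument never invokes --- the normal equations defining $\bbeta_T^*$ as the best linear predictor under $p_T$ give
\begin{equation*}
\E_{\bx_i\sim p_S}\bigl[X_S^\top\diag(\bw)\ba_T\bigr]=n_S\,\E_{\bx\sim p_T}[\bx\,a_T(\bx)]=0,
\end{equation*}
which decouples the $A_1$ piece from the approximation error in the bias. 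After these two cancellations the only surviving interaction is the bias cross term involving $A_2\E[W^\top\ba_T]$, and there your symmetry argument (replace $(f^*,\bbeta_T^*)$ by $(-f^*,-\bbeta_T^*)$ so a negative cross term would contradict maximality) is exactly what the paper uses to force $A_2=0$ at the min--max point. So the symmetry idea is the right final move, but you must first change the complement you project onto and exploit the weighted orthogonality of $\ba_T$ to $X_S$ before that move can bite.
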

By reducing from  $\by_S$ to $\hat\bbeta_{\tLS}$ we eliminate $n-d$ dimensions, and this claim says that $X_S^\top\by_S$ is sufficient to predict $\bbeta^*_T$. We note that $f^*$ is more general than a linear function and therefore the lower bound could only be larger than $R_N(\cB)$ defined in the previous section.

\subsection{Estimating \texorpdfstring{$p_T(\bx)/p_S(\bx)$}{TEXT}  }
\label{sec:estimate_density_ratio}
Even though estimating $p_T(\bx)/p_S(\bx)$ might be sample inefficient, it only involves unlabeled data and therefore instance weighting related algorithms still attract prior studies as demonstrated in the related work section. Practical ways to estimate the density ratio involve respectively estimating $p_T$ and $p_S$ \citep{lin2002support,zadrozny2004learning}, kernel mean matching (KMM) \cite{huang2006correcting}), and some common divergence minimization between weighted source distribution and target distribution \citep{sugiyama2008direct,sugiyama2012density,uehara2016generative,menon2016linking,kanamori2011f}.%, and with discriminators \citep{uehara2016generative}. 
 We propose another simple algorithm that is very convenient to use. 

We conduct regression on the data samples $(\bx,y)\sim q(\bx,y)$ where $q_Y(y)$ is Bernouli($\frac12$)\footnote{The scalar $1/2$ should be adjusted based on the number of unlabeled samples from source and target domain.} and $q_{X|Y}(\bx|y=1)= p_T$, $q_{X|Y}(\bx|y=0)= p_S$. Empirically, we will concatenate $X_S$ and $X_U$ to form input data and stack $\textbf{0}\in \R^{n_S}$ and $\textbf{1}\in \R^{n_U}$ as the target vector $\by$. 

\begin{proposition}
	\label{prop:estimate_density_ratio} 
The optimal function that solves $\alpha\leftarrow \argmin_f \E_{\bx,y\sim q} (f(\bx)-y)^2$	satisfies:
$\alpha(\bx) = \frac{p_T(\bx)}{p_S(\bx)+p_T(\bx)}$. 
\end{proposition}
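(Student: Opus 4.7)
The plan is to invoke the standard fact that the population minimizer of the squared loss is the conditional expectation, and then evaluate that conditional expectation for the specific mixture distribution $q$ defined in the setup.

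First, I would decompose the squared loss by conditioning on $\bx$. For any measurable $f$, writing the tower rule gives
\begin{equation*}
\E_{\bx,y\sim q}(f(\bx)-y)^2 = \E_{\bx\sim q_X}\E_{y\sim q_{Y|X}}\left[(f(\bx)-y)^2 \mid \bx\right].
\end{equation*}
Adding and subtracting $m(\bx):=\E_q[y\mid \bx]$ inside the square and expanding shows that the inner conditional expectation equals $(f(\bx)-m(\bx))^2 + \Var_q(y\mid\bx)$, where the cross term vanishes by the definition of $m$. Since the variance term does not depend on $f$, the pointwise minimizer is $\alpha(\bx)=m(\bx)=\E_q[y\mid \bx]$.

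Next I would compute $\E_q[y\mid \bx]$ explicitly. Because $y\in\{0,1\}$, we have $\E_q[y\mid \bx] = \P_q(y=1\mid \bx)$. Bayes' rule combined with $q_Y(1)=q_Y(0)=1/2$, $q_{X\mid Y}(\bx\mid 1)=p_T(\bx)$, and $q_{X\mid Y}(\bx\mid 0)=p_S(\bx)$ gives
\begin{equation*}
\P_q(y=1\mid \bx) = \frac{q_Y(1)\,q_{X\mid Y}(\bx\mid 1)}{q_Y(0)\,q_{X\mid Y}(\bx\mid 0)+q_Y(1)\,q_{X\mid Y}(\bx\mid 1)} = \frac{p_T(\bx)}{p_S(\bx)+p_T(\bx)},
\end{equation*}
which matches the claimed expression for $\alpha(\bx)$.

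There is essentially no obstacle: both steps are textbook, and the only subtlety is a mild measure-theoretic one (one should note the minimizer is defined up to a $q_X$-null set, which is irrelevant for our use since we only ever evaluate $\alpha$ at points in the support of $q_X = (p_S+p_T)/2$). If the paper also wants to convert $\alpha(\bx)$ into the density ratio $p_T/p_S$ that appears in Section~\ref{sec:nonlinear_model}, I would add one line pointing out that $p_T(\bx)/p_S(\bx) = \alpha(\bx)/(1-\alpha(\bx))$, and remark that the Bernoulli bias $1/2$ should be replaced by $n_U/(n_S+n_U)$ in the empirical version so the same Bayes calculation still recovers $p_T/p_S$ up to a known constant.
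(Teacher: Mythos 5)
Your proof is correct and is in substance the same as the paper's: the paper conditions on $y$ first, writes the risk as $\int \left[\tfrac12 p_T(\bx)(1-f(\bx))^2 + \tfrac12 p_S(\bx)f(\bx)^2\right]\ud\bx$, and minimizes pointwise in $f(\bx)$ by differentiation, which yields exactly the weighted average $p_T(\bx)/(p_S(\bx)+p_T(\bx))$ that you obtain as $\E_q[y\mid\bx]$ via Bayes' rule. The only difference is the order of conditioning ($y$ first versus $\bx$ first), so nothing further is needed.
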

Therefore with proper transformation\footnote{Apply $f(x) \to\frac{1}{1/f(x) -1}$}  on $\alpha$ one could get the importance weights. In practice, one might be flexible on choosing the function class $\cF$ for estimating $\alpha$ and sample complexity will be bounded by some standard measure of $\cF$'s complexity, e.g., Rademacher or Gaussian complexity \citep{bartlett2002rademacher}. Unlike KMM, this parametrized estimation applies to unseen data $\bx$ which makes cross-validation possible.  
%The choice of function class to achieve this $\alpha$ 

\section{Near minimax estimator with model shift} 
\label{sec:model_shift} 
The general setting of transfer learning in linear regression involves both model shift and covariate shift. Namely, the generative model of the labels might be different:
$\by_S= X_S\bbeta_S^* + \bz_S$, and $\by_T = X_T \bbeta_T^* + \bz_T$. Denote by $\bdelta := \bbeta_S^*- \bbeta_T^*$ as the model shift. We are interested in the minimax linear estimator when $\|\bdelta\|\leq \gamma$ and $\|\bbeta_T^*\|\leq r$. Thus our problem becomes to find minimax estimator for $\bbeta^*_T\in \cB = \{\bbeta|\|\bbeta\|\leq r \} $ from $\by_S,\by_T$.

\paragraph{Algorithm: } First consider a sufficient statistic $(\bar \bbeta_{S}, \bar \bbeta_{T})$ for $(\bbeta_T^*,\bdelta)$. Here $\bar \bbeta_{S} = \hat \Sigma_S^{-1} X_S^\top \by_S/n_S\sim \cN(\bbeta_T^*+\bdelta, \frac{\sigma^2}{n_S}\hat \Sigma_S^{-1})$, and $ \bar \bbeta_T = \hat \Sigma_T^{-1} X_T^\top \by_T/n_T \sim  \cN(\bbeta_T^*, \frac{\sigma^2}{n_T}\hat \Sigma_T^{-1})$. Then consider the best linear estimator on top of it: $\hat \bbeta = A_1 \bar \bbeta_S + A_2 \bar \bbeta_T$. Write $\Delta = \{\bdelta|\|\bdelta\|\leq \gamma \}$ and  $L_{\cB,\Delta}(\hat \bbeta):= \max_{\bbeta_T^*\in\cB,\bdelta\in \Delta } \|\Sigma^{1/2}_T(\hat \bbeta - \bbeta_T^*)\|^2$. 

\begin{align}
\notag 
& R_L(\cB,\Delta) :=  \min_{\hat \bbeta=A_1\bar \bbeta_S+A_2\bar \bbeta_T} L_{\cB,\Delta}(\hat \bbeta) \\
%\notag 
%= & \min_{A_1,A_2} \max_{\|\bbeta_T^*\|\leq r, \|\bdelta\|\leq \gamma} \left\{\|\Sigma_T^{1/2}((A_1+A_2-I)\bbeta_T^* + A_1\bdelta  \|^2_2 + \frac{\sigma^2}{n_S}\|\hat \Sigma_S^{-1/2} A_1\|_F^2 + \frac{\sigma^2}{n_T}\|\hat \Sigma_T^{-1/2} A_2\|_F^2 \right\}\\
\notag 
\leq & \min_{A_1,A_2} \max_{\|\bbeta_T^*\|\leq r, \|\bdelta\|\leq \gamma} \left\{2\|\Sigma_T^{1/2}((A_1+A_2-I)\bbeta_T^*\|^2 \right.\\
\label{eqn:am-gm_inequality}
& \left. + 2\|\Sigma_T^{1/2}A_1\bdelta \|^2  + \frac{\sigma^2}{n_S}\Trace(A_1 \hat \Sigma_S^{-1} A_1^\top) \right. \\
& \left. + \frac{\sigma^2}{n_T}\Trace(A_2 \hat \Sigma_T^{-1} A_2^\top) \right\} \tag{AM-GM} \\
\notag 
= & \min_{A_1,A_2} \left\{2\|\Sigma_T^{1/2}((A_1+A_2-I)\|_2^2 r^2 + 2\|\Sigma_T^{1/2}A_1\|^2_2\gamma^2 \right.\\
\notag & \left. + \frac{\sigma^2}{n_S}\Trace(A_1 \hat \Sigma_S^{-1} A_1^\top) + \frac{\sigma^2}{n_T}\Trace(A_2 \hat \Sigma_T^{-1} A_2^\top) \right. \\
&\left. =: r_{\cB,\Delta}(A_1,A_2) \right\}. 
\end{align} 
Therefore we optimize over this upper bound and reformulate the problem as a convex program:
\begin{align}
\notag 
(\hat A_1, \hat A_2) & \leftarrow \argmin_{A_1,A_2, a, b} \left\{ 2a r^2 + 2b\gamma^2 \right.\\
\notag 
& \left.  + \frac{\sigma^2}{n_S}\Trace(A_1 \hat \Sigma_S^{-1} A_1^\top) + \frac{\sigma^2}{n_T}\Trace(A_2 \hat \Sigma_T^{-1} A_2^\top) \right\}\\
\notag 
\text{s.t.}& (A_1+A_2-I)^\top \Sigma_T(A_1+A_2-I) \preceq a I,\\
\label{eqn:convex_program_changed_beta}
& A_1^\top \Sigma_T A_1\preceq bI.
\end{align}
Our estimator is given by:  $\hat \bbeta_{\tMM} = \hat A_1\bar \bbeta_S + \hat A_2 \bar \bbeta_T.$ Since $\hat \bbeta_{\tMM}$ is a relaxation of the linear minimax estimator, it is important to understand how well $\hat \bbeta_{\tMM}$ performs on the original objective:
\begin{claim}
	\label{claim:beta_change_relaxed_loss}
	$ R_L(\cB,\Delta)\leq L_{\cB,\Delta}(\hat \bbeta_{\tMM})\leq  2R_L(\cB,\Delta)$.
\end{claim}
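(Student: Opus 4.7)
The plan is to prove the two inequalities in turn. The lower bound $R_L(\cB,\Delta)\le L_{\cB,\Delta}(\hat\bbeta_{\tMM})$ is immediate from the definition of $R_L(\cB,\Delta)$ as the infimum of $L_{\cB,\Delta}(A_1\bar\bbeta_S+A_2\bar\bbeta_T)$ over all pairs $(A_1,A_2)$, since $\hat\bbeta_{\tMM}$ is realized by the particular pair $(\hat A_1,\hat A_2)$ produced by the convex program.

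For the upper bound, I would chain through the convex relaxation $r_{\cB,\Delta}(A_1,A_2)$ defined just above the claim. The AM-GM step used to derive the program already yields
\[
L_{\cB,\Delta}(A_1\bar\bbeta_S+A_2\bar\bbeta_T)\;\le\; r_{\cB,\Delta}(A_1,A_2)
\]
for every pair $(A_1,A_2)$. Since $(\hat A_1,\hat A_2)$ minimize $r_{\cB,\Delta}$, I get $L_{\cB,\Delta}(\hat\bbeta_{\tMM})\le r_{\cB,\Delta}(\hat A_1,\hat A_2)\le r_{\cB,\Delta}(\tilde A_1,\tilde A_2)$, where $(\tilde A_1,\tilde A_2)$ is any pair attaining $R_L(\cB,\Delta)$. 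It remains to show $r_{\cB,\Delta}(\tilde A_1,\tilde A_2)\le 2R_L(\cB,\Delta)$, which is the step that forces the factor of two.

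The key reduction is the elementary lemma that for any matrices $B_1,B_2$ of compatible dimensions,
\[
\max_{\|\bbeta\|\le r,\;\|\bdelta\|\le\gamma}\|B_1\bbeta+B_2\bdelta\|^2\;\ge\; r^2\|B_1\|_{\mathrm{op}}^2+\gamma^2\|B_2\|_{\mathrm{op}}^2.
\]
I would prove this by taking $\bbeta=rv_1$, $\bdelta=\pm\gamma v_2$ with $v_1,v_2$ the top right singular vectors of $B_1,B_2$, and picking the sign so that the cross term $\pm\langle B_1v_1,B_2v_2\rangle$ is nonnegative (equivalently, by invoking the parallelogram identity $\|u+v\|^2+\|u-v\|^2=2(\|u\|^2+\|v\|^2)$ so that one of the two signs achieves at least the sum of squared norms). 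Applying the lemma with $B_1=\Sigma_T^{1/2}(\tilde A_1+\tilde A_2-I)$ and $B_2=\Sigma_T^{1/2}\tilde A_1$ gives $L_{\cB,\Delta}(\tilde A_1\bar\bbeta_S+\tilde A_2\bar\bbeta_T)\ge \tilde X+\tilde V$, where $\tilde X$ is the sum of the two squared operator-norm bias terms appearing in $r_{\cB,\Delta}$ and $\tilde V\ge 0$ is the noise variance contribution. Since $r_{\cB,\Delta}(\tilde A_1,\tilde A_2)=2\tilde X+\tilde V\le 2(\tilde X+\tilde V)\le 2 L_{\cB,\Delta}(\tilde A_1\bar\bbeta_S+\tilde A_2\bar\bbeta_T)=2R_L(\cB,\Delta)$, the claim follows.

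The main obstacle I anticipate is precisely this matching lower bound on the bias supremum. The naive strategy of setting one of $\bbeta,\bdelta$ to zero only yields $\max(r^2\|B_1\|_{\mathrm{op}}^2,\gamma^2\|B_2\|_{\mathrm{op}}^2)$, which would inflate the constant from $2$ to $4$. Aligning the two perturbations with the top singular directions of $B_1$ and $B_2$ under a compatible sign is what preserves the sharp constant stated in the claim; every other step is routine manipulation around the AM-GM relaxation.
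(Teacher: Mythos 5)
Your proof is correct and follows essentially the same route as the paper's: the chain $L_{\cB,\Delta}(\hat\bbeta_{\tMM})\le r_{\cB,\Delta}(\hat A_1,\hat A_2)\le r_{\cB,\Delta}(\tilde A_1,\tilde A_2)\le 2L_{\cB,\Delta}(\tilde A_1\bar\bbeta_S+\tilde A_2\bar\bbeta_T)$ is exactly the paper's argument, and your key lemma on the bias supremum is the paper's observation that at the maximum the cross term can be taken nonnegative by a sign flip. If anything, your explicit statement of that lemma (aligning with the top singular directions of $B_1$ and $B_2$ under a compatible sign) is a cleaner and more rigorous rendering of the step the paper treats tersely.
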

Finally we show with the relaxation we still achieve a near-optimal estimator even among all nonlinear rules.
\begin{theorem}
	\label{thm:27minimax_model_shift}
When $\Sigma_T$ commutes with $\hat\Sigma_S$,	%Let $\hat \bbeta_{\tMM} = \hat A_1\bar \bbeta_S + \hat A_2 \bar \bbeta_T$. Then 
it satisfies:
\begin{align*}
L_{\cB,\Delta}(\hat \bbeta_{\tMM}) := & \max_{\bbeta_T^*\in \cB,\bdelta \in \Delta} \|\Sigma_T^{1/2}(\hat \bbeta_{\tMM} - \bbeta^*_T)\|^2\\
 \leq & 27 R_N(\cB,\Delta).
\end{align*}
Here $R_N(\cB,\Delta):= \min_{\hat \bbeta(\by_S,\by_T)}\max_{\bbeta_T^*\in \cB,\bdelta \in \Delta} \|\Sigma_T^{1/2}(\hat \bbeta - \bbeta^*_T)\|$ is the minimax risk.	
\end{theorem}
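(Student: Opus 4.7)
The starting point is Claim~\ref{claim:beta_change_relaxed_loss}, which gives $L_{\cB,\Delta}(\hat\bbeta_{\tMM})\leq 2\, R_L(\cB,\Delta)$; it therefore suffices to show, in the commutative regime, that $R_L(\cB,\Delta)\leq \tfrac{27}{2}\,R_N(\cB,\Delta)$. The plan mimics the structure of the proof of Theorem~\ref{thm:1.25minimax_risk}: upper bound $R_L$ by the value of the convex program~\eqref{eqn:convex_program_changed_beta}, lower bound $R_N$ by the Bayes risk under a carefully chosen Gaussian prior on $(\bbeta_T^*,\bdelta)$, and then compare the two quantities coordinatewise.

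First, since $\Sigma_T$ commutes with $\hat\Sigma_S$, I would pass to their shared eigenbasis. The observations $\bar\bbeta_S$ and $\bar\bbeta_T$ then decouple into independent scalar Gaussian problems: in the $i$th eigendirection one observes $\beta_{T,i}^*+\delta_i$ with variance inherited from $\hat\Sigma_S/n_S$, and $\beta_{T,i}^*$ with variance inherited from $\hat\Sigma_T/n_T$. Both the convex program~\eqref{eqn:convex_program_changed_beta} and the upcoming Bayes risk split additively over $i$, so it is enough to carry out the comparison scalar by scalar.

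Second, for each coordinate I would introduce independent Gaussian priors $\beta_{T,i}^*\sim \cN(0,\rho_i^2)$ and $\delta_i\sim \cN(0,\nu_i^2)$, with $(\rho_i^2,\nu_i^2)$ chosen to nearly saturate $\sum_i\rho_i^2\leq r^2$ and $\sum_i\nu_i^2\leq \gamma^2$ up to an absolute constant. Under Gaussian priors the Bayes estimator for $\beta_{T,i}^*$ is a linear function of $(\bar\beta_{S,i},\bar\beta_{T,i})$ and the corresponding Bayes risk is explicit. A Chebyshev-type truncation converts this unconstrained Bayes risk into a valid lower bound on $R_N(\cB,\Delta)$, losing only a fixed factor by ensuring that the prior places at least constant mass on $\cB\times\Delta$.

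Third, I would bound the ratio between the per-coordinate Bayes risk and the per-coordinate value of~\eqref{eqn:convex_program_changed_beta} by a universal constant. The final $27$ would arise as the product of (i) a factor $2$ from Claim~\ref{claim:beta_change_relaxed_loss}, (ii) a factor $2$ from the AM-GM relaxation used in~\eqref{eqn:am-gm_inequality}, (iii) the Pinsker constant for scalar Gaussian minimax on a bounded interval, and (iv) a constant overhead from truncating the Gaussian prior to $\cB\times\Delta$. The main obstacle will be the coupling introduced by $\bdelta$: a nonlinear estimator could in principle leverage the constraint $\|\bdelta\|\leq\gamma$ to partially invert the bias in $\bar\bbeta_S$, and I would need to show that such leverage is worth at most a constant factor. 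I would address this by tuning $\nu_i^2$ so that the posterior of $\delta_i$ given $(\bar\beta_{S,i},\bar\beta_{T,i})$ remains approximately Gaussian, which keeps the Bayes-optimal rule close to the linear minimax rule and lets the Pinsker-style comparison go through coordinatewise, closing the loop.
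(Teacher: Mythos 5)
Your skeleton matches the paper's at the top level (factor $2$ from Claim~\ref{claim:beta_change_relaxed_loss}, reduce to scalar problems in the shared eigenbasis, compare linear and nonlinear risk coordinatewise), but there is a genuine gap at the step where you assert that ``both the convex program~\eqref{eqn:convex_program_changed_beta} and the Bayes risk split additively over $i$.'' They do not. The constraint sets $\cB$ and $\Delta$ are Euclidean balls, so after diagonalizing, the worst-case bias term $\max_{\|\bbeta_T^*\|\leq r}\|\Sigma_T^{1/2}(A_1+A_2-I)\bbeta_T^*\|^2$ is $r^2$ times a squared operator norm, i.e.\ $r^2\max_i(\cdot)$, while the variance terms are sums $\sum_i(\cdot)$; the objective of~\eqref{eqn:convex_program_changed_beta} therefore mixes a max over coordinates with a sum over coordinates and is not separable. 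The paper's proof spends most of its effort precisely here: Lemma~\ref{lemma:relate_to_worst_hyperrectangle} shows $R_L(\cB,\Delta)$ equals the supremum over hyperrectangles $\cB(\btau)\times\Delta(\bzeta)$ inscribed in the balls of the corresponding rectangular risks (using orthosymmetry of axis-aligned ellipsoids), and Proposition~\ref{proposition:decompose} shows that \emph{over a hyperrectangle} the optimal $A_1,A_2$ may be taken diagonal, so the risk genuinely decomposes as $\sum_i R_L(\tau_i,\zeta_i,D_{ii})$. Your choice of product priors with $\sum_i\rho_i^2\leq r^2$, $\sum_i\nu_i^2\leq\gamma^2$ implicitly supplies the lower-bound half of this reduction (a product prior is morally a hyperrectangle), but the upper-bound half --- that $R_L$ over the ball is \emph{no larger} than the worst rectangular subproblem --- is the nontrivial direction and is simply assumed in your plan. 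Without it the coordinatewise comparison does not control $R_L(\cB,\Delta)$.

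Two smaller points. First, for the scalar subproblem your route (Gaussian priors on $(\theta_i,\delta_i)$, explicit linear Bayes rule, Chebyshev truncation onto $[-\tau_i,\tau_i]\times[-\zeta_i,\zeta_i]$) is a legitimate alternative to the paper's Le Cam two-point argument in Lemma~\ref{lemma:1d_bound_with_beta_change}; it would yield \emph{some} universal constant, but you have not verified it yields $27/2$, and truncation arguments in the presence of the nuisance $\delta_i$ tend to be lossier than a direct two-point bound. Second, your accounting of the constant double-counts: the factor $2$ from Claim~\ref{claim:beta_change_relaxed_loss} \emph{is} the AM-GM relaxation of~\eqref{eqn:am-gm_inequality}, not a separate factor; in the paper $27=2\times 13.5$ with the entire $13.5$ coming from the scalar comparison. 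To repair the proof, insert the hardest-hyperrectangle reduction (or prove directly that restricting $A_1,A_2$ to be diagonal and passing to the worst inscribed rectangle does not decrease the linear minimax risk) before decomposing, and then carry out the scalar comparison with an explicit constant.
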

We defer the complete proof to the appendix. The main proof technique is to decompose the problem to 2-$d$ sub-problems with closed-form solutions and are solvable with Le Cam's two point lemma. We include the proof sketch here: 

\begin{proof}[Proof sketch of Theorem \ref{thm:27minimax_model_shift}]
	For the ease of understanding, we provide a simple proof sketch when $\Sigma_S=\Sigma_T$ are diagonal.  We first define the hardest hyperrectangular subproblem. Let $\cB(\btau)=\{\bb:|\beta_i| \leq \tau_i\}$ be a subset of $\cB$ and similarly for $\Delta(\bzeta)$. We show that $R_L(\cB,\Delta)=\max_{\btau\in \cB,\bzeta\in\Delta} R_L(\cB(\btau),\Delta(\bzeta))$, and clearly $ R_N(\cB,\Delta)\geq \max_{\btau\in \cB,\bzeta\in\Delta} R_N(\cB(\btau),\Delta(\bzeta)) $. Meanwhile we show when the sets are  hyperrectangles the minimax (linear) risk could be decomposed to 2-d problems: $R_L(\cB(\btau),\Delta(\bzeta))=\sum_i R_L(\tau_i,\zeta_i)$. Each $R_L(\tau_i,\zeta_i)$ is the linear minimax risk to estimate $\beta_i$ from $x\sim \cN(\beta_i+\delta_i,1)$ and $y\sim \cN(\beta_i,1)$ where $ |\beta_i|\leq \tau_i$ and $|\delta_i|\leq \zeta_i$. This 2-d problem for linear risk has a closed form solution, and the minimax risk can be lower bounded using Le Cam's two point lemma. We show $R_L(\tau_i,\zeta_i)\leq 13.5 R_N(\tau_i,\zeta_i)$ and therefore: 
	\begin{align*}
	\frac{1}{2}L_{\cB,\Delta}(\hat\bbeta_{\tMM})& \overset{\text{Claim }\ref{claim:beta_change_relaxed_loss}}{\leq}  R_L(\cB,\Delta)\\
	\overset{\text{Lemma } \ref{lemma:relate_to_worst_hyperrectangle}}{=}&\max_{\btau\in \cB,\bzeta\in\Delta} R_L(\cB(\btau),\Delta(\bzeta))\\
	\overset{\text{Prop }\ref{proposition:decompose}.a}{=} &\max_{\btau\in \cB,\bzeta\in\Delta} \sum_i R_L(\tau_i,\zeta_i) \\
	\overset{\text{Lemma }\ref{lemma:1d_bound_with_beta_change}}{\leq}& \max_{\btau\in \cB,\bzeta\in\Delta} 13.5 \sum_i R_N(\tau_i,\zeta_i) \\
	\overset{\text{Prop }\ref{proposition:decompose}.b}{=} & 13.5 \max_{\btau\in \cB,\bzeta\in\Delta} R_N(\cB(\btau),\Delta(\bzeta))\\
	\leq & 13.5 R_N(\cB,\Delta). 
	\end{align*} 
\end{proof}

%the maximum part is essentially 

%\qi{remove hat for $\Sigma_S,\Sigma_T$, and mention random design...   }

\begin{figure*}[htbp]
	\begin{tabular}{ccc}
\hspace{-0.4cm}
		\includegraphics[width=0.37\linewidth]{./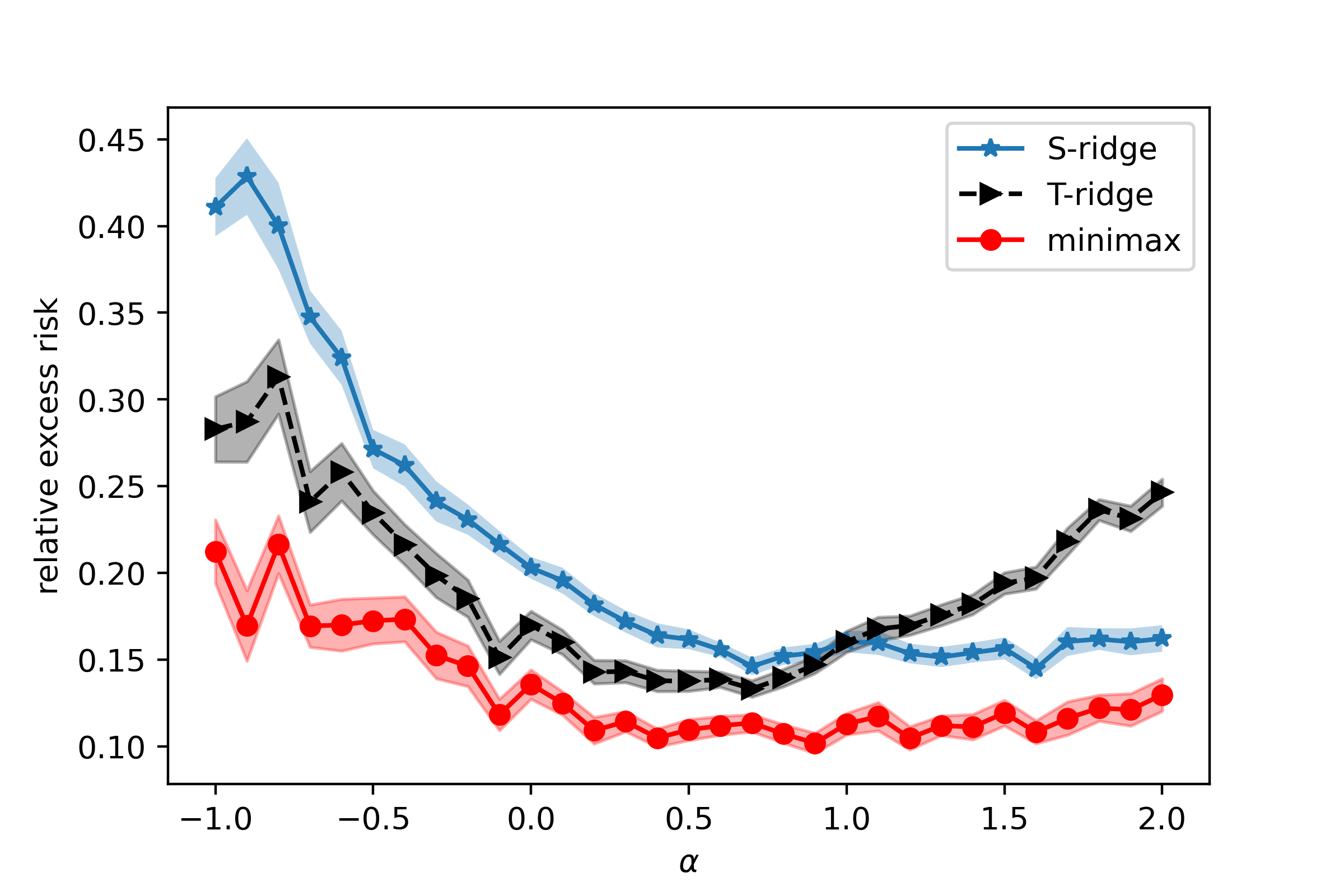} & \hspace{-1.1cm}
		\includegraphics[width=0.37\linewidth]{./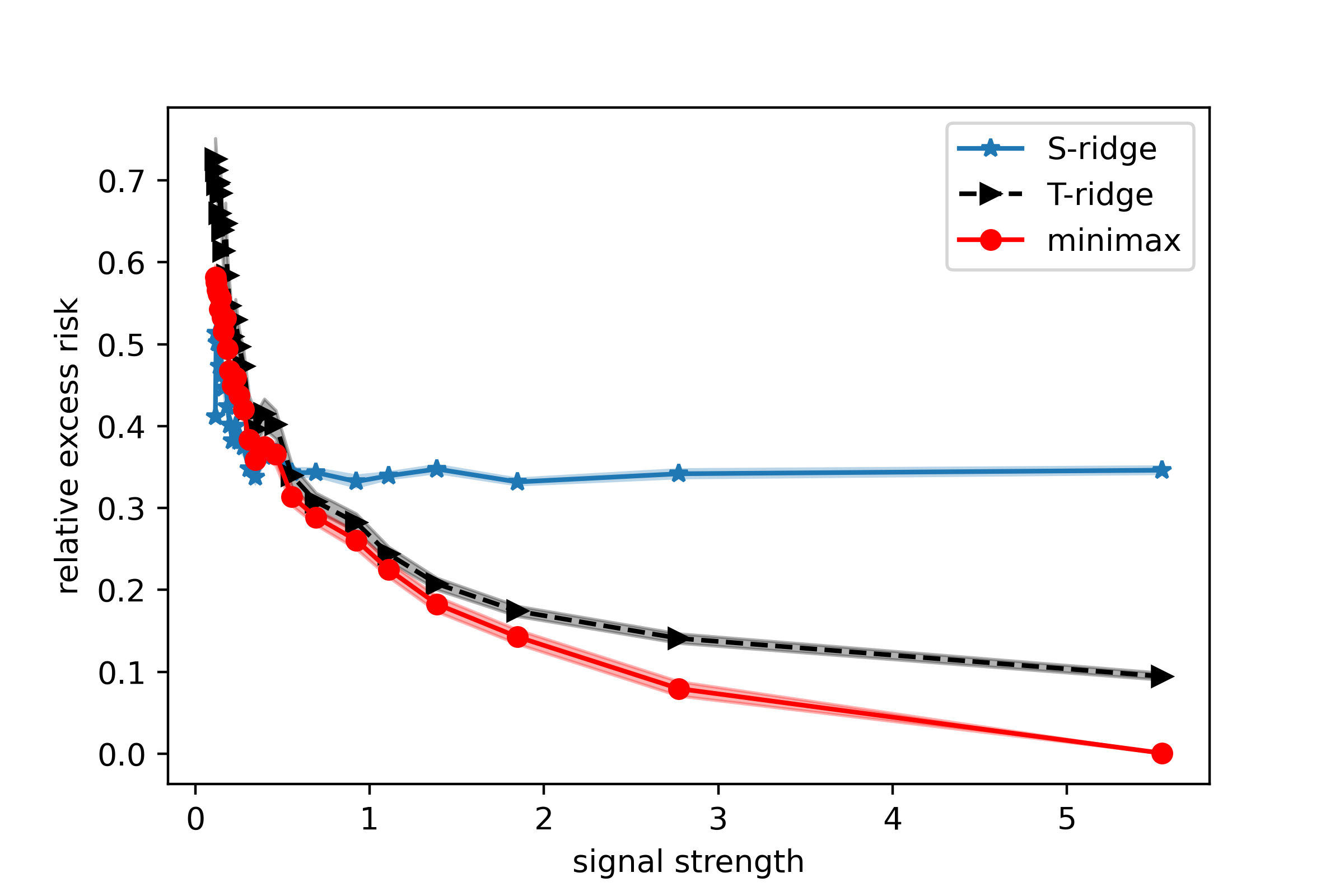} & \hspace{-1.1cm}
		\includegraphics[width=0.33\linewidth]{./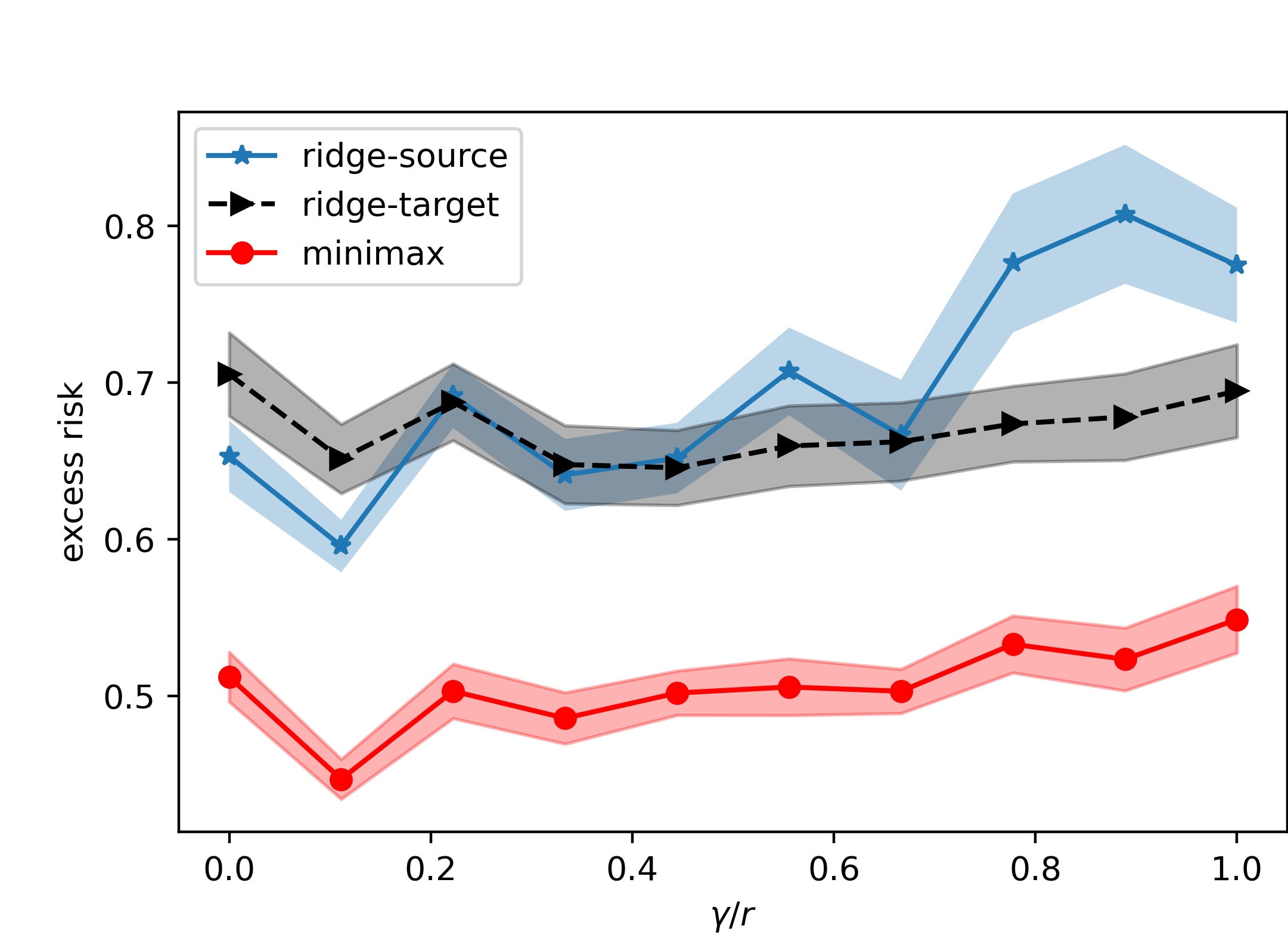} \hspace{-1cm}
		\\
		(a)	covariate eigen-spectrum & (b) signal strength & (c) model shift
	\end{tabular}
	\caption{\emph{Performance comparisons.} (a): The x-axis $\alpha$ defines the spread of eigen-spectrum of $\Sigma_S$:  $s_i \propto 1/i^{\alpha}, t_i\propto 1/i$. (b) x-axis is the normalized value of signal strength: $\|\Sigma_T\bbeta^*\|/r$. 
(c) X-axis is the model shift measured by $\gamma/r$. Performance with standard error bar is from 40 runs.}
	\label{figure:alpha_beta_shift} 
\end{figure*}
\section{Experiments} 

%\subsection{Synthetic dataset} 
Our estimators are provably near optimal for the worst case $\bbeta^*$. However, it remains unknown whether on average they outperform other baselines. With synthetic data we explore the performances with random $\bbeta^*$. We are also interested to investigate the conditions when we win more. 

\textbf{Setup. }
We set $n_S=2000, d=50, \sigma=1, r=\sqrt{d}$. For each setting, we sample $\bbeta^*_T$ from standard normal distribution and rescale it to be norm $r$. We estimate $\Sigma_T$ by $n_U=2000$ unlabeled samples. We compare our estimator with ridge regression (S-ridge) and a variant of ridge regression transformed to target domain (T-ridge): $\hat\bbeta^\lambda_{\tRR,T}=\argmin \frac{1}{n}\|\Sigma_T^{1/2}(\bbeta- \hat\Sigma_S^{-1}X_S^\top\by_S)\|^2+\lambda \|\bbeta\|^2 = (\Sigma_T+\lambda I)^{-1}\Sigma_T \hat\bbeta_{\tSS}$. 

\textbf{Covariate shift. }In order to understand the effect of covariate shift on our algorithm, we consider three types of settings, each with a unique varying factor that influences the performance: 1) covariate eigenvalue shift with shared eigenspace; 2) covariate eigenspace shift with fixed eigenvalues\footnote{We leave this result in appendix since performance appears invariant to this factor.}; 3) signal strength change. We also have an additional $200$ labeled data from target domain as validation set only for hyper-parameter tuning. 

\textbf{Model shift. }Next we consider the problem with model shift. We sample a random $\bdelta$ with norm $\gamma$ varying from $0$ to $r=\sqrt{d}$ and observe data generated by $\by_S=X_S(\bbeta_T^*+\bdelta)+\bz_S\in \R^{2000},\bz_S\sim \cN(0,I)$ and $\by_T=X_T\bbeta_T^*+\bz_T\in \R^{500},\bz_T\sim \cN(0,I)$. We compare our estimator with two baselines: "ridge-source" denotes ridge regression using only source data, and "ridge-target" is from ridge regression with target data.

Figure \ref{figure:alpha_beta_shift} demonstrates the better performance of our estimator in all circumstances. From (a) we see 
that with more discrepancy between $\Sigma_S$ and $\Sigma_T$, our estimator tends to perform better. (b) shows our estimator is better when the signal is relatively stronger. From (c) we can see that with the increasing model shift measured by $\gamma/r$, S-ridge becomes worse and is outperformed by T-ridge that remains unchanged. Our estimator becomes slightly worse as well due to the less utility from source data, but remains the best among others. 
When $\gamma/r\approx 0.2$, our method has the most improvement in percentage compared to the best result among ridge-source and ridge-target.

%\subsection{Interesting Findings} 
%Ridge regression is equivalent to minimizing min-average risk.  

\subsection{Experiments with approximation error}
\begin{minipage}{0.48\textwidth} 
%	\centering
	%\vspace{-0.6cm}
	\includegraphics[width=\columnwidth]{./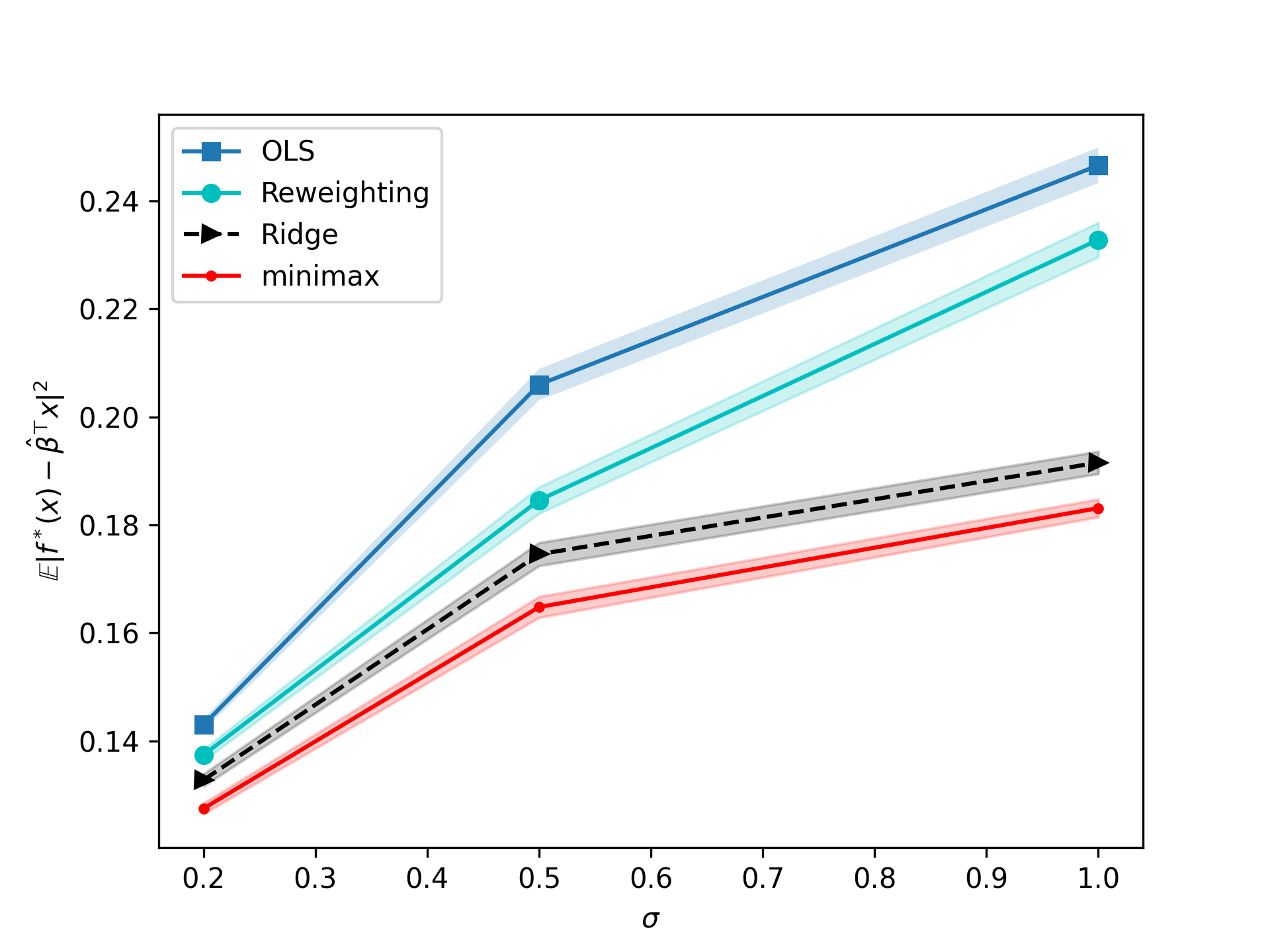}
	%\vspace{-0.3cm}
	\captionof{figure}{The x-axis is noise level $\sigma$ and y-axis is the excess risk (with approximation error). Performance with standard error bar is from 40 runs.  }
	\label{fig:nonlinear}
	%	\vspace{-0.2cm}
\end{minipage}\hspace{0.03\textwidth}
\begin{minipage}{0.48\textwidth}
	%	\vspace{-0.4cm}
%	\centering 
	\includegraphics[width=\columnwidth]{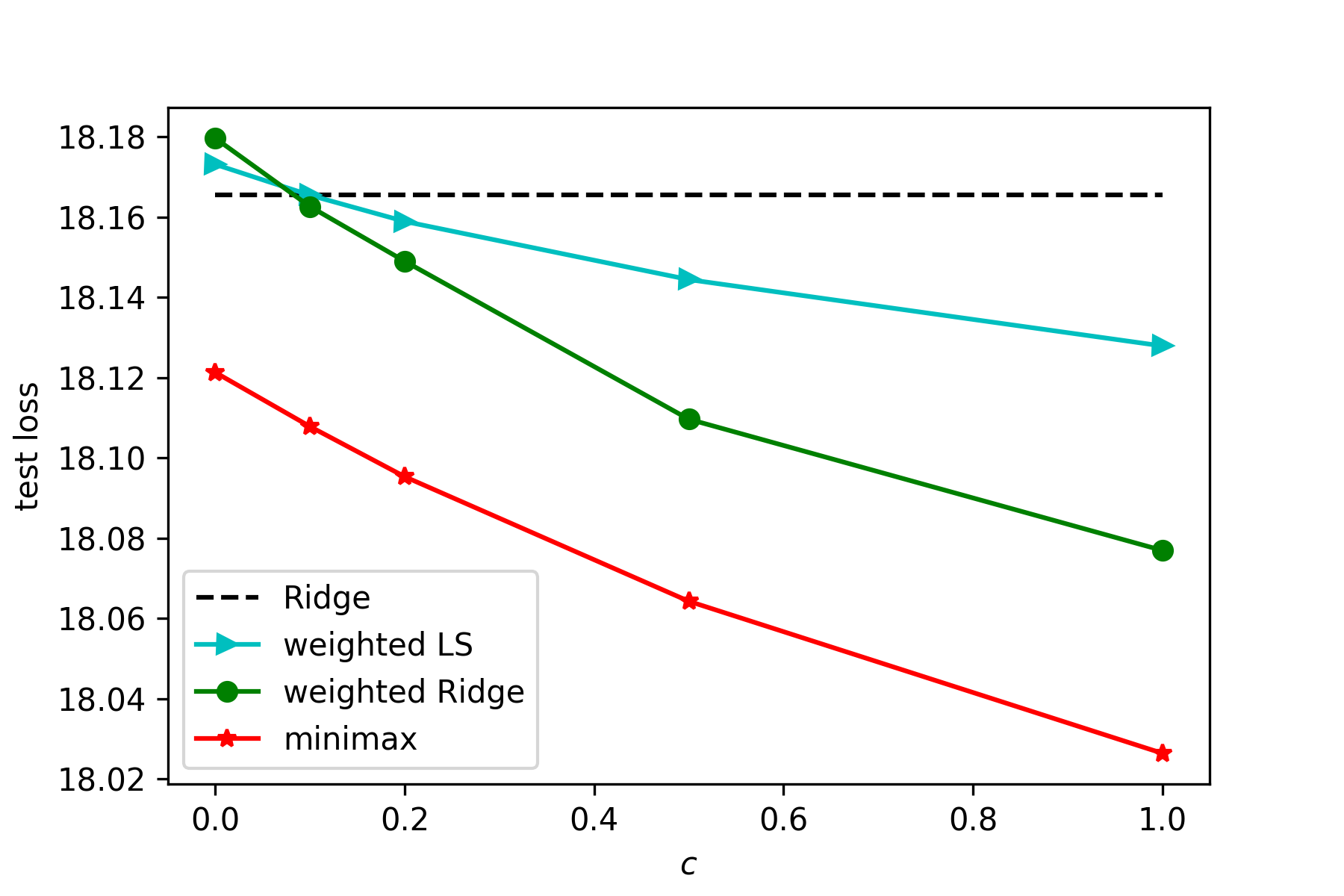}
	%		\vspace{-0.3cm}
	\captionof{figure}{Comparisons on Yearbook Dataset \cite{ginosar2015century}. The $x$-axis is the reweighting strength. }
	\label{fig:yearbook}
	%	\vspace{-0.4cm}	
\end{minipage}

Finally, we conduct empirical studies with nonlinear models. 
%\paragraph{Setup.} We choose $n_S=2000,d=50$. Let $X_S\in \R^{2000\times 50}$ be generated randomly under Gaussian distribution $\cN(0,\Sigma_S)$. 
We maintain the same setting as before. We also generate a small validation dataset from target domain: $X_{\text{CV}}\in \R^{500\times 50}, $ sampled from $\cN(0,\Sigma_T)$, $\by_{\text{CV}}=f^*(X_{\text{CV}})+\bz_{\text{CV}}$, with $\bz_{\text{CV}}\sim \cN(0,\sigma^2 I)$. We choose $\lambda_i(\Sigma_S)\propto i, \lambda_i(\Sigma_T)\propto 1/i$, and the eigenspace for both $\Sigma_S$ and $\Sigma_T$ are random orthonormal matrices. ($\|\Sigma_S\|_F^2=\|\Sigma_T\|_F^2=d.$) The ground truth model is a one-hidden-layer ReLU network: $f^*(\bx)=1/d\ba^\top (W\bx)_{+}$, where $W$ and $\ba$ are randomly generated from standard Gaussian distribution. We observe noisy labels: $\by_S=f^*(\bx)+\bz$, where $z_i\sim \cN(0,\sigma^2)$.

\paragraph{Estimating weights $p_T(\bx)/p_S(\bx)$. }
Since the generated data samples are Gaussian, the absolute weights for $p_T(\bx)/p_S(\bx) = \sqrt{\frac{|\Sigma_S|}{|\Sigma_T|}}\exp(\frac{1}{2}\bx^\top(\Sigma_S^{-1}-\Sigma_T^{-1})\bx ) $. However, this absolute value scales exponentially with the  norm of $\bx$ and can amplify the variance. Meanwhile, when one multiplies both $X_S,\by_S$ by 10, the ground truth $\beta$ doesn't change but the absolute value for $p_T(\bx)/p_S(\bx)$ will change drastically. This discrepancy highlights the importance of relative magnitudes (among samples) instead of the absolute value, as noted by \citet{kanamori2009least}. 

To obtain a relative score, we first estimate the absolute density ratio $\alpha(\bx)\approx p_T(\bx)/(p_S(\bx)+p_T(\bx))$ following our algorithm in Section \ref{sec:estimate_density_ratio} with linear regression. 
% $l(\bx):= \bx^\top(\hat\Sigma_S^{-1}-\hat\Sigma_T^{-1})\bx  $.
We then uniformly assign the weight $w_i$ for each sample by 10 discrete values $1,2,3\cdots 10$ based on the absolute value of $\alpha(\bx)$ and then rescale the reweighting vector properly. We use the conventional way to adjust the reweighting strength by using $w_i^{c},c\in [0,1]$ and choose $c$ by cross validation.  

We implement our method (Eqn. \ref{eqn:best_linear_estimator_with_approx_error}) using the estimated weights as above, and plot the excess risk comparisons in Figure \ref{fig:nonlinear}. The baselines we choose are ordinary least square ("OLS" in Figure \eqref{fig:nonlinear}), ridge regression (Legend is "Ridge") and weighted least square \cite{kanamori2009least} (Legend is "Reweighting"; $\hat\bbeta_{\text{LS}}$ in our main text). For ridge regression, reweighting and our methods, we tune hyperparameters through cross-validation. All results are presented from 40 runs where the randomness comes from $f^*$ and the eigenspaces of $\Sigma_S,\Sigma_T$. From Figure \ref{fig:nonlinear} we could see that reweighting algorithm improves over ordinary least square but is outperformed by ridge regression due to large variance. Our algorithm achieves the best performance among others by appropriately reweighting then reducing the variance.

\paragraph{Experiments on Berkeley Yearbook Dataset}
To verify the performance of our algorithm on real-world data, we conduct an experiment on the Berkeley Yearbook dataset \cite{ginosar2015century}. We randomly split the data to form source and target tasks, where the source has 63.2\% male photos and 43.4\% male images for the target task. Input $X$ is gray-scale portraits, and $Y$ is the year the photo is taken (ranging from $1905$ to $2013$). We implement our algorithms together with the baselines and estimate the density ratio from the data. 
We demonstrate the performance improvement in Figure \ref{fig:yearbook}.  The $x$-axis is the scalar that adjusts reweighting strength $c$ defined in the previous paragraph.

\section{Conclusion}
We  study in depth the minimax linear estimator for linear regression under various distribution shift settings. We investigated the optimal linear estimators with covariate shift for linear models in unsupervised and supervised domain adaptation settings, with no or scarce labeled data from the target distribution. For nonlinear models with approximation error, we also introduce the minimax linear estimator together with an easy-to-use density ratio estimation method. We further explore some moderate model shift in the linear setting. Our estimators achieve near-optimal worst-case excess risk measured on the target domain and, in some circumstances, are within constant of the minimax risk among all nonlinear rules. The significant improvement of our estimators over ridge regression is demonstrated by a theoretical separation result and by empirical validations even for average case with random parameters. 

In future work, we will extend our algorithm to classification problems under distribution shift and apply the algorithms to  fine-tuning the last-layer of a deep network.

\section*{Acknowledgements}

QL was supported by NSF \#2030859 and the Computing Research Association for the CIFellows Project.
WH was supported by NSF, ONR, Simons Foundation, Schmidt Foundation, Amazon Research, DARPA and SRC.
JDL was supported by ARO under MURI Award W911NF-11-1-0303, the Sloan Research Fellowship, and NSF CCF 2002272.

%\paragraph{Conclusion. }In this work we propose some algorithms that achieve near-optimal minimax risk
\bibliographystyle{plainnat}

\bibliography{iclr2021_conference,references2}

\appendix

\section{Omitted proof for minimax estimator with covariate shift}

\subsection{Pinsker's Theorem and covariate shift with linear model} 

\begin{theorem}[Pinsker's Theorem]
	\label{thm:pinsker}
	Suppose the obervations follow sequence model $ y_i = \theta_i^* + \epsilon_i z_i, \epsilon_i >0,i\in [d]$, and $\Theta$ is an ellipsoid in $\R^d$:
	$\Theta = \Theta(a, C) = \{ \theta: \sum_i a_i^2 \theta_i^2 \leq C^2  \}$. Then the minimax linear risk 
	\begin{align*}
	R_L(\Theta)  := & \min_{\hat\btheta \text{ linear}}\max_{\btheta^*\in \Theta} \E\|\hat\btheta(\by) - \btheta^* \|^2 \\
	=& \sum_i \epsilon_i^2 (1-a_i/\mu)_+,
	\end{align*} 
	where $\mu= \mu(C)$ is determined by 
	\begin{equation*}
	\sum_{i=1}^d \epsilon_i^2 a_i(\mu-a_i)_+ = C^2.
	\end{equation*}
	The linear minimax estimator is given by
	\begin{equation}
	\hat \theta^*_i(y) = c_i^* y_i = (1-a_i/\mu)_+ y_i,
	\end{equation}
	and is Bayes for a Gaussian prior $\pi_C$ having independent components $\theta_i \sim \cN(0,\tau_i^2)$ with $\tau_i^* = \epsilon_i^2 (\mu/a_i - 1)_+$. 
\end{theorem}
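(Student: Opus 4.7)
The plan is to follow the classical Bayesian minimax route (as in Chapter 4 of Johnstone's \emph{Gaussian Estimation}). The strategy is to identify a least-favorable Gaussian prior whose Bayes estimator is linear, match its Bayes risk to the linear minimax risk, and read off the formula.

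First I would reduce to diagonal linear estimators $\hat\theta_i(y) = c_i y_i$. Since both the noise covariance and the ellipsoid $\Theta(a,C)$ are diagonal in the coordinate basis, and the problem is invariant under the group of coordinate-wise sign flips, a standard Hunt--Stein / averaging argument shows that any linear estimator $\hat\theta = My$ can be replaced by its diagonal part $\diag(M)y$ without increasing the worst-case risk over $\Theta$. For such an estimator the risk decomposes as
\[
R(c,\theta^*) \;=\; \sum_i (1-c_i)^2 (\theta_i^*)^2 \;+\; \sum_i c_i^2 \epsilon_i^2 .
\]

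Next I would invoke Bayes--minimax duality on the Gaussian family. For a prior with independent components $\theta_i\sim\cN(0,\tau_i^2)$, the unique linear Bayes estimator is $c_i = \tau_i^2/(\tau_i^2+\epsilon_i^2)$ with Bayes risk $B(\tau)=\sum_i \tau_i^2\epsilon_i^2/(\tau_i^2+\epsilon_i^2)$. Any prior supported on $\Theta$ gives a lower bound on $R_L(\Theta)$, and for the quadratic risk the condition ``supported on $\Theta$'' can be relaxed to the second-moment condition $\sum_i a_i^2 \tau_i^2 \le C^2$ (the ``soft ellipsoid''), because this is exactly what controls the bias term $\sum_i (1-c_i)^2 (\theta_i^*)^2$ through its $\Theta$-maximizer. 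Since $B$ is concave in $\tau^2$, $R$ is convex in $c$, and both sets are compact, Sion's minimax theorem lets me swap $\min_c$ and $\max_\tau$, so the linear minimax risk equals $\max_\tau B(\tau)$ subject to $\sum_i a_i^2\tau_i^2 \le C^2$.

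Then I would solve the Lagrangian for this concave maximization: differentiating $\tau_i^2\epsilon_i^2/(\tau_i^2+\epsilon_i^2)$ yields the stationarity condition $\epsilon_i^4/(\tau_i^2+\epsilon_i^2)^2 = \lambda a_i^2$. Writing $\mu = 1/\sqrt\lambda$ and taking positive parts gives $\tau_i^2 = \epsilon_i^2(\mu/a_i - 1)_+$; substituting back produces the shrinkage coefficient $c_i^* = 1 - a_i/\mu$ (with $(\cdot)_+$), the Bayes risk $\sum_i \epsilon_i^2(1-a_i/\mu)_+$, and the constraint equation $\sum_i \epsilon_i^2 a_i(\mu - a_i)_+ = C^2$ that pins down $\mu$.

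Finally I would verify this is an equality by checking that the Bayes estimator $\hat\theta_i(y) = c_i^* y_i$ attains the same value as an upper bound on the worst-case risk over $\Theta$. The calculation $\max_{\theta^* \in \Theta} R(c^*,\theta^*) = \sum_i c_i^{*2}\epsilon_i^2 + C^2 \max_i (1-c_i^*)^2/a_i^2$ combined with the KKT relation $\mu = a_i/(1-c_i^*)$ for all ``active'' coordinates (those with $c_i^* > 0$) shows this upper bound coincides with the Bayes risk computed above. The main obstacle is the coupled min-max swap: justifying Sion's theorem on the unbounded parameter set $\{\tau^2 : \sum a_i^2\tau_i^2 \le C^2\}$ and confirming that the soft-ellipsoid relaxation does not enlarge the minimax value; both are handled by the standard truncation/compactification argument but need to be stated carefully to avoid circularity with the very equality one is trying to establish.
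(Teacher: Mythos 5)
Your proposal is correct, but note that the paper does not actually prove this statement: Theorem~\ref{thm:pinsker} is imported verbatim as a classical result (Pinsker's theorem, cited to \citet{johnstone2011gaussian}) and used as a black box to derive Theorem~\ref{thm:linear_minimax_estimator}. The route you lay out --- reduction to diagonal shrinkage by orthosymmetry, Bayes--minimax duality over Gaussian priors constrained to the ``soft'' ellipsoid $\sum_i a_i^2\tau_i^2\le C^2$, the Lagrangian solution $\tau_i^{*2}=\epsilon_i^2(\mu/a_i-1)_+$, and the closing verification that $\max_{\btheta^*\in\Theta}R(c^*,\btheta^*)=\sum_i c_i^{*2}\epsilon_i^2+C^2/\mu^2$ matches the Bayes risk via $\sum_i\epsilon_i^2 c_i^*(1-c_i^*)=\mu^{-2}\sum_i\epsilon_i^2 a_i(\mu-a_i)_+$ --- is exactly the standard proof in that reference, and the caveats you flag (compactification for the minimax swap, and that the soft-ellipsoid relaxation does not change the value because the bias is linear in $(\theta_i^{*2})$) are the right ones and are handled there. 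One cosmetic remark: the statement's ``$\tau_i^*=\epsilon_i^2(\mu/a_i-1)_+$'' should read $\tau_i^{*2}$, as your derivation correctly produces the prior variance.
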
 

Our theorem \ref{thm:linear_minimax_estimator} is to connect our parameter $\bbeta^*$ to the $\btheta^*$ in pinsker's theorem. First we show that reformulating the problem from a linear map of $n$ dimensional observations $\by_S$ to a linear map on the $d$-dimensional statistic $\hat\bbeta_{SS}$ is sufficient, i.e., Claim \ref{claim:A=A_1X^T}: 

\begin{proof}[Proof of Claim \ref{claim:A=A_1X^T}]
	This is to show that if $\hat\bbeta(\by_S):=A\by_S$ is a minimax linear estimator, each row vector of $A\in \R^{d\times n}$ is in the column span of $X_S$. Write $A= A_1 X_S^\top + A_2 W^\top$ where $W\in \R^{n\times (n-d)}$, columns of which forms the orthonormal complement for the column space of $X_S$. Equivalently we want to show $A_2=0$. We have
	\begin{align*}
	R_L(\cB) \equiv & \min_{\hat\bbeta=A\by} \max_{\bbeta^*\in \cB} \E\|\Sigma_{T}^{1/2}(\hat \bbeta -\bbeta^*)\|^2\\
	= & \min_{A_1, A_2} \max_{\bbeta^*\in \cB} \E\|\Sigma_{T}^{1/2}((A_1X_S^\top+A_2W^\top)\by_S -\bbeta^*)\|^2\\
	= & \min_{A_1, A_2} \max_{\bbeta^*\in \cB} \E\|\Sigma_{T}^{1/2}(A_1 X_S^\top (X_S \bbeta^* + \bz) + A_2 W^\top \bz -\bbeta^*)\|^2 \tag{Since $W^\top X_S=0$} \\
	= & \min_{A_1, A_2} \max_{\bbeta^*\in \cB}\left\{ \|\Sigma_{T}^{1/2}(A_1 X_S^\top X_S-I) \bbeta^*\|^2 + \E \|\Sigma_T^{1/2}A_1X_S^\top\bz\|^2\right. \\
	& \left. + \E \|\Sigma_T^{1/2}A_2 W^\top\bz\|^2  + \E\left\langle \Sigma_T^{1/2}A_1X_S^\top\bz, \Sigma_T^{1/2}A_2 W^\top\bz \right\rangle
	\right\} 
	\tag{Other cross terms vanish since $\E[\bz]=\bm0$} \\
	=& \min_{A_1, A_2} \max_{\bbeta^*\in \cB}\left\{ \|\Sigma_{T}^{1/2}(A_1 X_S^\top X_S-I) \bbeta^*\|^2 + \E \|\Sigma_T^{1/2}A_1X_S^\top\bz\|^2 + \E \|\Sigma_T^{1/2}A_2 W^\top\bz\|^2  ,
	\right\} 
	\end{align*} 
	where the last equation is because
	\begin{align*}
	&\E\left\langle \Sigma_T^{1/2}A_1X_S^\top\bz, \Sigma_T^{1/2}A_2 W^\top\bz \right\rangle
	= \E\left[ \Trace\left[\Sigma_T^{1/2}A_1X_S^\top\bz \bz WA_2^\top \Sigma_{T}\right]  \right] \\
	=\,& \Trace\left[\Sigma_T^{1/2}A_1X_S^\top \E[\bz \bz^\top] WA_2^\top \Sigma_{T}\right]
	= \sigma^2 \Trace\left[\Sigma_T^{1/2}A_1X_S^\top  WA_2^\top \Sigma_{T}\right]
	= 0.
	\end{align*}
	
	Clearly, at min-max point, without loss of generality we can take $A_2=0$. 
\end{proof}

Formally the proof for Theorem \ref{thm:linear_minimax_estimator} is presented here: 
\begin{proof}[Proof of Theorem \ref{thm:linear_minimax_estimator}]
To use Pinsker's theorem to prove Theorem \ref{thm:linear_minimax_estimator}, we simply need to transform the problem match its setting. 
Let $ \by_T = \Sigma_{T}^{1/2}\hat\Sigma_S^{-1}X_S^\top\by_S/n_S = \btheta_T^* + \bz_T$, where $\btheta_T^* = U^\top \Sigma_T^{1/2} \bbeta^*$ and  $\bz_T \sim \cN(0, \sigma^2\diag([t_i/s_i]_{i=1}^d)/n_S)$. The set for $\theta_T^*$ is $ \Theta=\{ \btheta | \|\Sigma_T^{-1/2}U \btheta\|\leq r\}$, i.e., $\Theta = \{\theta | \sum_i \btheta_i^2/t_i \leq r^2  \}$. 

Now with Pinsker's theorem, $\hat \btheta(\by_T)_i = (1-1/(\mu\sqrt{t_i}))_+ (y_T)_i  $ is the best linear estimator for $\btheta_T^*$, where $\mu=\mu(r)$ solves 
\begin{equation}
\frac{\sigma^2}{n_S}\sum_{i=1}^d \frac{\sqrt{t_i}}{s_i} (\mu-\frac{1}{\sqrt{t_i}})_+ = r^2.
\end{equation}
Connecting to the original problem, we get that the best estimator for $\Sigma_T^{1/2}\bbeta^* $ is $U(I-\frac{1}{\mu} \diag([1/\sqrt{t_i}]_{i=1}^d) ) \by_T = U(I-\frac{1}{\mu} \diag([1/\sqrt{t_i}]_{i=1}^d) ) U^\top \Sigma_T^{1/2}\Sigma_S^{-1} X_S^\top \by_S /n_S $. 

\end{proof}

\subsection{Omitted proof for noncommute second-moment matrices} 
\paragraph{Convex program. } Our estimator for $\bbeta^*$ can be achieved through convex programming:
\begin{proof}[Proof of Proposition \ref{prop:convex_program}]
	First note the objective function is quadratic in $C$ and linear in $\tau$, therefore we only need to prove the constraint $S=\{(C,\tau)|(C-I)^\top\Sigma_T (C-I) \preceq \tau I \}$ is a convex set. Notice for $(C_1,\tau_1), (C_2,\tau_2) \in S$, i.e., $(C_i-I)^\top \Sigma_T(C_i-I)\preceq \tau_i I, i\in \{1,2\} $. We simply need to prove for $C_\alpha:=\alpha C_1+(1-\alpha)C_2, \tau_\alpha:= \tau_1 \alpha + \tau_2 (1-\alpha)$, $(C_{\alpha}-I)^\top \Sigma_T (C_\alpha - I) \preceq \tau_\alpha I$ for any $\alpha \in [0,1]$.
	First, notice $(C_1-C_2)^\top \Sigma_T (C_1-C_2)\succeq 0$. Next,
	\begin{align*}
	& (C_{\alpha} -I)^\top \Sigma_T (C_{\alpha} -I) \\
	= & \alpha (C_1-I)^\top \Sigma_T (C_1-I) + (1-\alpha) (C_2-I)^\top \Sigma_T (C_2-I)\\
	& - \alpha(1-\alpha) (C_1-C_2)^\top \Sigma_T(C_1-C_2) \\
	\preceq &    \alpha (C_1-I)^\top \Sigma_T (C_1-I) + (1-\alpha) (C_2-I)^\top \Sigma_T (C_2-I) \\
	\preceq & \tau_\alpha I.
	\end{align*}	
\end{proof}

\paragraph{Benefit of our estimator.} Compared to ridge regression, our estimator could possibly achieve much better ($d^{-1/4}$) improvements:
\begin{proof}[Proof of Remark \ref{remark:order-wise_better}]
	%	\qi{TBA}
	We consider diagonal second-moment matrices $\hat \Sigma_S=\diag(\bs), \Sigma_T=\diag(\bt)$, $\sigma=1$. 
	First we calculate the expected risk obtained with ridge regression:  $\hat \bbeta_{\tRR}^\lambda = (X_S^\top X_S/n + \lambda I)^{-1} X_S^\top \by_S/n_S \sim \cN((\hat \Sigma_S+\lambda I)^{-1}\Sigma_S \bbeta^*, 1/n_S (\Sigma_S+\lambda I)^{-2}\Sigma_S )$.
	\begin{align*}
	L_{\cB}(\bbeta_{\tRR}^\lambda ) = &
	\max_{\bbeta^*\in \cB} \E_{\by_S}\|\Sigma_T^{1/2}( \hat \bbeta_{\tRR}^\lambda(\by_S) - \bbeta^*)\|^2\\ 
	= & \max_{\bbeta^*\in \cB} 
	\| \Sigma_T^{1/2} ((\hat\Sigma_S+\lambda I)^{-1} \hat \Sigma_{S}-I)\bbeta^* \|^2 + \Trace(\frac{1}{n_S}(\hat\Sigma_S+\lambda I)^{-2}\hat\Sigma_S\Sigma_T)\\
	= & \max_i r^2 \left(\frac{\sqrt{t_i}s_i }{s_i+\lambda} -\sqrt{t_i} \right)^2 + \sum_i \frac{1}{n_S} \frac{t_is_i}{(s_i+\lambda)^2 }.
	\end{align*}
	Compared to our risk:
	\begin{align*}
	R_L(\cB) = \sum_i \frac{1}{n_S} \frac{t_i}{s_i}(1-\frac{1}{\sqrt{t_i}\mu})_+, 
	\end{align*}
	where $\frac{1}{n}\sum_{i=1}^d \frac{\sqrt{t_i}}{s_i} (\mu-\frac{1}{\sqrt{t_i}})_+=r^2$. 
	Let $r^2=\frac{\sqrt{d}}{n_S}$,  $s_i=1,\forall i, t_i=1,\forall i\in [d_0], t_i = d^{-1/2}, d_0<i\leq d,$ where $d_0 = \frac{\sqrt{d}}{d^{1/4}-1}\approx d^{1/4}$. Then $\mu=1$, and $R_L(\cB)= \frac{d^{1/4}}{n}$.  
	In this case, 
	\begin{align*}
	& \min_{\lambda} \max_i r^2 \left(\frac{\sqrt{t_i}s_i }{s_i+\lambda} -\sqrt{t_i} \right)^2 + \sum_i \frac{1}{n_S} \frac{t_is_i}{(s_i+\lambda)^2 }\\
	= & \min_{\lambda} \max_i \frac{\sqrt{d}}{n} \left(\frac{\sqrt{t_i}}{1+\lambda} -\sqrt{t_i} \right)^2 + \sum_i \frac{1}{n_S} \frac{t_i}{(1+\lambda)^2 }
	\geq & \min_{\lambda} \frac{\sqrt{d}}{n} \frac{\lambda^2}{(1+\lambda)^2} + \frac{\sqrt{d}}{n}\frac{1}{(1+\lambda)^2}\\
	\geq & \frac{\sqrt{d}}{2n}.
	\end{align*}
	Therefore $ \min_{\lambda} L_{\cB}(\hat\bbeta_{\tRR}^{\lambda})\geq d^{1/4}R_L(\cB)/2$.
\end{proof}

\paragraph{Near minimax risk. } Even among all nonlinear estimators, our estimator is within 1.25 of the minimax risk:
\begin{proof}[Proof of Theorem \ref{thm:1.25minimax_risk}]
First we note that for both linear and nonlinear estimators, it is sufficient to use $\hat\bbeta_{\tSS}$ instead of the original observations $\by_S$. See Lemma \ref{lemma:sufficient_statistic_is_enough_for_best_estimator} and its corollary. Therefore it suffices to do the following reformulations of the problem. 	
	
When $\Sigma_S$ and $\Sigma_T$ commute, we formulate the problem as the following Gaussian sequence model. Recall $\hat\Sigma_S=U\diag(\bs)U^\top, \Sigma_T=U\diag(\bt)U^\top$. Let $\btheta^*= U^\top\Sigma_T^{1/2}\bbeta^*$, and $\by=U^\top\Sigma_T^{1/2}\hat\bbeta_{\tSS}\sim \cN(\btheta^*, \frac{\sigma^2}{n_S} \diag(\bt/\bs) )$. Our objective of minimizing $\|\Sigma_{T}^{1/2}(\hat\bbeta(\by_S)-\hat\bbeta^*) \|$ from linear estimator is equivalent to minimizing $\|U(\hat\btheta(\by)-\hat\btheta^*) \|= \|\hat\btheta(\by)-\hat\btheta^* \|$ from linear estimator. 

The set for the parameter that satisfies $\btheta^*=U^\top\Sigma_T^{1/2}\bbeta^*, \|\bbeta^*\|\leq r$ is equivalent to $ \|\Sigma_T^{-1/2}U\btheta^*\|\leq r \Leftrightarrow \|\theta_i^*/\sqrt{t_i}\|\leq r$ is an axis-aligned ellipsoid. 
Then we could directly derive our result from Corollary 4.26 from \cite{johnstone2011gaussian}. Note that this result is a special case of Theorem \ref{thm:27minimax_model_shift} and we have provided a detailed proof in Section \ref{section:model_shift_proof}. Therefore here we save further descriptions. 

%\qi{add some explanation for noncommute case}

For the case when $\Sigma_T=\ba\ba^\top$ is rank-1, the objective function becomes:
\begin{equation*}
R_L^*(\cB) = \min_{\bbeta^* \text{ linear}} \max_{\bbeta\in \cB} \E(\ba^\top(\hat\bbeta(\by_S)-\bbeta^* ))^2.
\end{equation*}

Then the result could be derived from Corollary 1 of
\cite{donoho1994statistical}, which reformulate the problem to the hardest one-dimensional problem which becomes tractable.

\end{proof}

In the proof above, we equate the best nonlinear estimator on $\by_S$ as the best nonlinear estimator on $\hat \bbeta_{\tSS}$. The reasoning is as follows:

\begin{lemma}[Sufficient statistic is enough to achieve a best estimator]
	\label{lemma:sufficient_statistic_is_enough_for_best_estimator}

Consider the statistical problem of estimating $\bbeta^*\in \cB$ from observations $\by \in \cY$. $\cB$ $\ell^2$-compact. If $S(\by)$ is a sufficient statistic of $\bbeta^*$, then the best estimator that achieves $\min_{\hat \bbeta}\max_{\cB} \ell(\hat\bbeta,\bbeta^*) $ is of the form $\hat \bbeta = f(S(\by))$ with some function $f$, for any loss $\ell:\cY \rightarrow [0,\infty)$. 
\end{lemma}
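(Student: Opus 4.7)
The plan is to reduce the minimax problem over arbitrary estimators to one over estimators that factor through $S(\by)$, by exhibiting, for every candidate $\hat\bbeta(\by)$, a competitor $\tilde\bbeta$ depending on $\by$ only through $S(\by)$ whose risk at every $\bbeta^*\in\cB$ is no larger. Once this is established, $\inf_{\hat\bbeta}\sup_{\bbeta^*\in\cB}\E_{\bbeta^*}\ell(\hat\bbeta,\bbeta^*) = \inf_{f}\sup_{\bbeta^*\in\cB}\E_{\bbeta^*}\ell(f\circ S,\bbeta^*)$, and a minimax-optimal estimator in the original problem can be chosen of the form $f\circ S$ (existence of the attaining $f$ uses the $\ell^2$-compactness of $\cB$ together with lower semi-continuity of the risk, or a standard minimizing-sequence argument).

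The key construction I would carry out first is a \emph{randomized} competitor. Given $\hat\bbeta$, on input $\by$ compute $s=S(\by)$, draw a fresh $\by'\sim P(\cdot\mid S(\by)=s)$ independently, and output $\tilde\bbeta:=\hat\bbeta(\by')$. This is legal because sufficiency (via Fisher--Neyman factorization) guarantees that $P(\cdot\mid S=s)$ does not depend on $\bbeta^*$, so the auxiliary draw requires no knowledge of the unknown parameter. A short marginalization then shows that under any $P_{\bbeta^*}$, the distribution of $\by'$ is again $P_{\bbeta^*}$: by sufficiency $P(\by'\mid S=s)=P_{\bbeta^*}(\by'\mid S=s)$, and integrating against the $P_{\bbeta^*}$-law of $S(\by)$ reproduces $P_{\bbeta^*}(\by')$. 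Hence $\hat\bbeta(\by')$ and $\hat\bbeta(\by)$ are equidistributed under every $\bbeta^*$, so $\E_{\bbeta^*}\ell(\tilde\bbeta,\bbeta^*)=\E_{\bbeta^*}\ell(\hat\bbeta,\bbeta^*)$ for \emph{every} loss $\ell$, and $\tilde\bbeta$ is a (randomized) function of $S(\by)$.

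For the applications in this paper the loss is the squared prediction error, which is convex in the estimator, so I would then replace the randomized $\tilde\bbeta$ by its conditional expectation $f(s):=\E[\hat\bbeta(\by)\mid S(\by)=s]$. This is the classical Rao--Blackwell estimator; it is well-defined without reference to $\bbeta^*$ by sufficiency, and Jensen's inequality gives $\E_{\bbeta^*}\ell(f(S(\by)),\bbeta^*)\le \E_{\bbeta^*}\ell(\hat\bbeta(\by),\bbeta^*)$ pointwise in $\bbeta^*$. Thus in the regime used by Theorem \ref{thm:1.25minimax_risk} we obtain a \emph{deterministic} $f$ with no risk penalty, which is exactly what is invoked when identifying the best nonlinear estimator on $\by_S$ with the best nonlinear estimator on $\hat\bbeta_{\tSS}$.

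The main obstacle is the claim ``for any loss $\ell$'' in full generality. The distributional construction in the second paragraph produces a Markov kernel from $S$-values to estimates, not a deterministic map; for non-convex losses one cannot in general derandomize without paying a minimax saddle-point argument on the (convex) space of decision rules or passing to mixed strategies. I would address this either by reading ``function $f$'' in the statement as permitting auxiliary independent randomness, in which case the randomized construction suffices, or by restricting to convex $\ell$ so that Rao--Blackwell applies verbatim; both readings cover the uses of the lemma in the body of the paper, and I would flag this scope restriction explicitly in the proof.
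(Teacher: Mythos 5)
Your proposal is correct, and in fact it supplies an argument where the paper gives none: the paper's ``proof'' of this lemma is a single sentence citing Proposition 3.13 of \cite{johnstone2011gaussian}, so there is nothing in the paper to compare against step by step. What you have written is essentially the standard proof of that cited proposition: sufficiency makes the kernel $P(\cdot\mid S=s)$ parameter-free, so the resampled estimator $\hat\bbeta(\by')$ is a legitimate randomized rule that is equidistributed with $\hat\bbeta(\by)$ under every $\bbeta^*$ and hence matches its risk for any loss; for convex losses (in particular the squared prediction error used throughout the paper, whose convexity in $\hat\bbeta$ is what Theorem \ref{thm:1.25minimax_risk} actually needs) Rao--Blackwellization via Jensen derandomizes this to a genuine function $f(S(\by))$ with no increase in risk at any $\bbeta^*$, which gives the claimed identity of minimax values. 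Your caveat is also well taken: as literally stated (``for any loss $\ell$'', with a deterministic $f$) the lemma overreaches, since for non-convex losses the reduction only yields a randomized rule measurable with respect to $S$; the statement also has a typo in the loss's domain ($\ell:\cY\to[0,\infty)$ should involve the parameter space). Restricting to convex losses, or reading $f$ as a Markov kernel, repairs this, and either reading covers every invocation of the lemma in the paper.
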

This Lemma is restated from Proposition 3.13 from \cite{johnstone2011gaussian}. 

\begin{corollary}[Corollary of Lemma \ref{lemma:sufficient_statistic_is_enough_for_best_estimator}]
Under the same setting of Lemma \ref{lemma:sufficient_statistic_is_enough_for_best_estimator}, $R_N(\cB)$ is achieved with the form $\hat \bbeta = f(S(\by))$. 	
\end{corollary}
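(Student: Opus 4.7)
The statement to be proved is an immediate corollary of the preceding Lemma~\ref{lemma:sufficient_statistic_is_enough_for_best_estimator}, so the plan is essentially to unfold definitions and apply the lemma with the appropriate choice of loss.

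First, I would rewrite the quantity $R_N(\cB)$ in the exact form required by the lemma. By definition,
\begin{equation*}
R_N(\cB) = \min_{\hat\bbeta} \max_{\bbeta^*\in\cB} \E_{\by}\,\|\Sigma_T^{1/2}(\hat\bbeta(\by) - \bbeta^*)\|^2,
\end{equation*}
where the inner quantity $L_{\cB}(\hat\bbeta,\bbeta^*) := \E_{\by}\,\|\Sigma_T^{1/2}(\hat\bbeta(\by) - \bbeta^*)\|^2$ is manifestly nonnegative. Identifying this with the generic loss $\ell(\hat\bbeta,\bbeta^*)$ appearing in the statement of Lemma~\ref{lemma:sufficient_statistic_is_enough_for_best_estimator} (which allows any loss taking values in $[0,\infty)$) verifies the hypothesis of the lemma.

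Next I would invoke the lemma: since $S(\by)$ is by hypothesis a sufficient statistic for $\bbeta^*\in\cB$, the lemma asserts that the optimal (minimax) estimator for the loss $\ell = L_{\cB}(\cdot,\cdot)$ can be taken of the form $\hat\bbeta = f(S(\by))$ for some measurable function $f$. That is precisely the conclusion claimed, so
\begin{equation*}
R_N(\cB) = \min_{f}\,\max_{\bbeta^*\in\cB}\,\E_{\by}\,\|\Sigma_T^{1/2}(f(S(\by)) - \bbeta^*)\|^2,
\end{equation*}
and the minimum is attained at some $f^{\star}$, giving $\hat\bbeta = f^{\star}(S(\by))$.

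There is no genuine obstacle here — the corollary is a one-line specialization of the lemma, and the only thing to check is that the target-domain excess risk $L_{\cB}(\hat\bbeta,\bbeta^*)$ is nonnegative (which is obvious since it is an expected squared norm). If one wished to be more careful, one could note that the expectation over $\by$ commutes with the outer minimax operations by Fubini/Tonelli (the integrand is nonnegative and measurable), so plugging the loss into the lemma is rigorous without additional conditions. No existence-of-minimizer subtlety is introduced beyond what Lemma~\ref{lemma:sufficient_statistic_is_enough_for_best_estimator} already handles via its $\ell^2$-compactness assumption on $\cB$.
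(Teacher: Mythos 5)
Your proposal is correct and matches the paper's treatment: the paper states this corollary with no proof at all, regarding it as an immediate specialization of Lemma \ref{lemma:sufficient_statistic_is_enough_for_best_estimator} to the loss $\ell(\hat\bbeta,\bbeta^*)=\E_{\by}\|\Sigma_T^{1/2}(\hat\bbeta(\by)-\bbeta^*)\|^2$, which is exactly what you do. The only remark worth making is that the lemma's loss is stated loosely in the paper ($\ell:\cY\to[0,\infty)$ is really a map on estimator--parameter pairs), and your identification of the full expected risk as that loss is consistent with the paper's intended reading.
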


\subsection{Omitted proof for utilizing source and target data jointly}

\paragraph{Sufficient statistic.}
\begin{proof}[Proof of Claim \ref{claim:sufficient_statistic}]
	Denote by $ \bar \bbeta_S :=  \hat \Sigma_S^{-1} X_S^\top \by_S/n_S \sim \cN(\bbeta^*, \frac{\sigma^2}{n_S}\hat \Sigma_S^{-1} ) $ and $\bar \bbeta_T :=\hat \Sigma_T^{-1} X_T^\top \by_T /n_T \sim \cN(\bbeta^*, \frac{\sigma^2}{n_T}\hat \Sigma_T^{-1} ). $ We use the Fisher–Neyman factorization theorem to derive the sufficient statistics.	The likelihood of observing $\bar \bbeta_S, \bar \bbeta_T$ from parameter $\bbeta^*$ is:
	\begin{align*}
	p(\bar \bbeta_S, \bar \bbeta_T; \bbeta^*) =  & c e^{- \frac{n_S}{\sigma^2}(\bar \bbeta_S - \bbeta^*) \hat \Sigma_S (\bar \bbeta_S - \bbeta^*) - \frac{n_T}{\sigma^2} (\bar \bbeta - \bbeta^*)\bar \Sigma_T (\bar \bbeta_T - \bbeta^*) } \\
	= & c g(\bbeta^*, T(\bbeta^*)) h(\bar \bbeta_S, \bar \bbeta_T),
	\end{align*}
	where $g(\bbeta^*, T(\bbeta^*)) = e^{-(\bbeta^* - \hat \bbeta_{\tSS} )^\top(\frac{n_S}{\sigma^2}\hat \Sigma_S + \frac{n_T}{\sigma^2}\hat \Sigma_T )^{-1} (\bbeta^* - \hat \bbeta_{\tSS})}$, and $c$ is some constant. Therefore it's easy to see that $T(\bbeta^*) = \hat \bbeta_{\tSS} $ is the sufficient statistic for $\bbeta^*$.  
\end{proof}	

\begin{proof}[Proof of Claim \ref{claim:linear_in_SS}]
	With similar procedure as before, and notice $\bz_S$ and $\bz_T$ are independent, we could first conclude that the optimal estimator is of the form $\hat \bbeta = A \hat \Sigma_S^{-1} X_S^\top \by_S/n_S + B \hat \Sigma_T^{-1} X_T^\top \by_T/n_T \sim \cN((A+B)\bbeta^*, \frac{\sigma^2}{n_S}A\hat \Sigma_S^{-1}A^\top + \frac{\sigma^2}{n_T} B\hat \Sigma_T^{-1} B^\top) $.
	\begin{align*}
	R_L(\cB) = & \min_{A, B} \max_{\bbeta^* \in \cB} \E_{\bz} \|\Sigma_T^{1/2}(\hat \bbeta - \bbeta^*) \|^2 \\
	= &   \min_{A, B} \max_{\bbeta^* \in \cB} \left\{ \|\Sigma^{1/2}(A+B-I)\bbeta^* \|^2 \right.  \\
	& \left. + \sigma^2 \Trace((\frac{1}{n_S}A\hat \Sigma_S^{-1}A^\top + \frac{1}{n_T} B\hat \Sigma_T^{-1} B^\top)\Sigma_T)  \right\} \\
	= & \min_{A, B} \left\{ \|\Sigma^{1/2}(A+B-I)\|^2_{op}r^2 + \sigma^2 \Trace((\frac{1}{n_S}A\hat \Sigma_S^{-1}A^\top + \frac{1}{n_T} B\hat \Sigma_T^{-1} B^\top)\Sigma_T)  \right\}
	\end{align*} 
	Take gradient w.r.t $A$ and $B$ respectively we have:
	\begin{align*}
	& \nabla_A (\|\Sigma^{1/2}(A+B-I)\|^2_{op}r^2) + \frac{\sigma^2}{n_S} \Sigma_T A \hat \Sigma_S^{-1} =0 \\
	= & \nabla_B (\|\Sigma^{1/2}(A+B-I)\|^2_{op}r^2) + \frac{\sigma^2}{n_T} \Sigma_T B \hat \Sigma_T^{-1} =0
	\end{align*}
	Notice the first terms are equivalent. Therefore $\frac{1}{n_S} A \hat \Sigma_S^{-1} = \frac{1}{n_T} B \hat \Sigma_T^{-1} $ thus the optimal $\hat\bbeta$ is of the form $C (X_S^\top \by_S + X_T^\top \by_T)$ for some matrix $C$, thus finishing the proof. %\qi{remember to remark the matrices doesn't have to be full rank. With pseudo inverse everything pass as well. }
\end{proof}

\section{Omitted proof with approximation error}
\paragraph{Unbiased estimator for $\hat\bbeta_T^*$.}
\begin{proof}[Proof of Claim \ref{claim:MVUE}]
	\begin{align*}
	\hat \bbeta_{\tLS} -\bbeta_T^* = & (X_S^\top \diag(\bw) X_S)^{-1} (X_S^\top \diag(\bw) \by) -\bbeta_T^* \\
	= &  (X_S^\top \diag(\bw) X_S)^{-1} (X_S^\top \diag(\bw) (X_S\bbeta_T^*+\ba_T+\bz ))  - \bbeta_T^* \\
	= & (X_S^\top \diag(\bw) X_S)^{-1} (X_S^\top \diag(\bw)(\ba_T+\bz))
	\end{align*}
	Notice $\E_{\bx\sim p_S}[\bx a_T(\bx) \frac{p_T(\bx)}{p_S(\bx)}] = \E_{\bx\sim p_T}[\bx a_T(\bx)]=0$. 
	This is due to the KKT condition for the minimizer of $l(\bbeta):=\E_{\bx\sim p_T}\|f^*(\bx)-\bbeta^\top\bx\|^2 $ at $\bbeta^*_T$: $\nabla_{\bbeta} f(\bbeta^*)=0\rightarrow \E_{\bx\sim p_T}[\bx(f^*-\bx^\top\bbeta^*_T)]=0$, i.e., $\E_{\bx\sim p_T}[\bx a_T(\bx)]=0$.	
	Next we have:	
	$\E_{\bx_i \sim p_S}[X_S^\top \diag(\bw) X_S ] = \E_{\bx_i\sim p_S }\sum_{i=1}^n \frac{p_T(\bx_i)}{p_S(\bx_i)} \bx_i \bx_i^\top = \E_{\bx_j\sim p_T} \sum_{j=1}^n [\bx_j\bx_j^\top]=n_S\Sigma_T$. Therefore
	\begin{equation*}
	\hat \bbeta_{\tLS} - \bbeta_T^* \rightarrow \cN(0, \frac{1}{n_S}\Sigma_T^{-1} \E_{\bx\sim p_T} [p_T(\bx)/p_S(\bx) (a_T(\bx)^2+\sigma^2)\bx\bx^\top]\Sigma_T^{-1} ).	
	\end{equation*}
\end{proof}

\begin{proof}[Proof of Claim \ref{claim:nonlinear_form}]
	Recall $X_S=[\bx_1^\top |\bx_2^\top |\cdots |\bx_n^\top]^\top \in \R^{n\times d}$, with $\bx_i,\forall i\in [n]$ drawn from $p_S$, and $\ba_T = [a_T(\bx_1),a_T(\bx_2),\cdots a_T(\bx_n)]^\top \in \R^n$, $\by = [y(\bx_1),y(\bx_2),\cdots, y(\bx_n)]^\top \in \R^n$, noise $\bz = \by - f^*(X)$. $\bw = [p_T(\bx_i)/p_S(\bx_i)]^\top$.
	
	To prove the, we only need to show the minimax linear estimator $A\by$ is achieved of the form $A_1X^\top\diag(\bw)$, i.e., the row span of $A$ is in the row span of $X^\top\diag(\bw)$. 
	\begin{align*}
	R_L(\cB) \equiv & \min_{A} \max_{\bbeta^*_T\in \cB, a_T\in \cF} \E_{\bx_i\sim p_s, \bz}[\|\Sigma_T^{1/2}(A\by - \bbeta^*_T)\|^2 ]\\
	= & \min_{A} \max_{\bbeta^*_T\in \cB, a_T\in \cF} \E \|\Sigma_T^{1/2}((AX-I)\bbeta_T^*+ A\ba_T + A z)\|^2\\
	= & \min_{A} \max_{\bbeta^*_T\in \cB, a_T\in \cF} \left\{ \|\Sigma_T^{1/2}((\E[AX]-I)\bbeta^*_T + \E[A\ba_T] ) \|_2^2  \right.\\
	& \left. + \E\|\Sigma_T^{1/2}(AX-\E[AX])\bbeta_T^*\|^2 + \E\|\Sigma_T^{1/2}(A\ba_T-\E[A\ba_T])\|^2 + 
	\E\|\Sigma_T^{1/2}A\bz \|^2\right\}
	\end{align*}	
	Write $A = A_1X^\top\diag(\bw) + A_2 W^\top$, where $X\in \R^{n\times d}$ and $W\in \R^{n\times (n-d)}$ forms the orthogonal complement for the column span of $\diag(\bw)X$. Therefore $X^\top\diag(\bw) W=0$, and $W^\top W=I_{n-d}$. Also, notice $\E_{\bx_i\sim p_S}[X^\top \diag(\bw) \ba_T] = n\E_{\bx\sim p_T}[\bx a_T(\bx)]=0$. Therefore plugging it in $R_L(\cB)$, we have:
	\begin{align*}
	R_L(\cB) = & \min_{A} \max_{\bbeta^*_T\in \cB, f^*\in\cF} \left\{ \|\Sigma_T^{1/2}((A_1\E_{p_S}[X^\top \diag(\bw) X]-I)\bbeta^*_T + A_2\E[W^\top \ba_T] ) \|_2^2  \right.\\
	& + \E\|\Sigma_T^{1/2}A_1(X^\top \diag(\bw) X-\E[X^\top \diag(\bw) X])\bbeta_T^*\|^2 \\
	& + \E\|\Sigma_T^{1/2}A_2( W^\top \ba_T-\E[W^\top \ba_T])\|^2 \\ 
	& \left. + 
	\sigma^2 \E\|\Sigma_T^{1/2}A_1 X^\top\diag(\bw)\|^2 + \sigma^2 \E\|\Sigma_T^{1/2}A_2 \|^2 \right\}\\
	= & \min_{A_1,A_2} \max_{\bbeta^*_T\in \cB, f^*\in\cF} \left\{ \|\Sigma_T^{1/2}((A_1 n_S \Sigma_T -I)\bbeta^*_T + A_2\E[W^\top \ba_T] ) \|_2^2  \right.\\
	& + \E\|\Sigma_T^{1/2}A_1(X^\top \diag(\bw) X-\Sigma_T)\bbeta_T^*\|^2 + \E\|\Sigma_T^{1/2}A_2( W^\top \ba_T-\E[W^\top \ba_T])\|^2 \\ 
	& \left. + 
	\sigma^2 \E\|\Sigma_T^{1/2}A_1 X^\top\diag(\bw)\|^2 + \sigma^2 \E\|\Sigma_T^{1/2}A_2 \|^2 \right\}
	\end{align*}
	We could view $\E[W^\top \ba_T]$ and $W^\top \ba_T - \E[W^\top \ba_T]$ separately. First notice at min-max point, if $\E[W^\top \ba_T]=0$, the minimizer $A_2$ should be 0 since it only appears in the third and last non-negative terms. If $\E[W^\top \ba_T]\neq 0,$ the cross term of the bias should be non-negative, or otherwise since both $f^*$ and $-f^*$ are in the set, $a_T,\bbeta_T^*$ could be replaced by $-a_T,-\bbeta_T^*$ and the loss increases. Clearly in this case $A_2$ should also be 0 at min-max point.   %Therefore it's easy see that at stationary the first term is zero. And therefore we also have $A_2=0$. %At stationary point  since otherwise we could always decrease the function value by setting $A_2 = -\epsilon \ba \bb^\top,$ where $\Sigma_T^{1/2}(A_1 n_S \Sigma_T -I)\bbeta^*_T$ and $\bb = \Sigma_T^{1/2}A_2\E[W^\top \ba_T]$, for small enough $\epsilon$, since the decrease on the first term is $\cO(\epsilon)$ and the increase on the remaining part is at most of the order of $\epsilon^2$ we could decrease the function value. 
\end{proof}

\paragraph{On estimating $p_T/p_S$.}
\begin{proof}[Proof of Proposition \ref{prop:estimate_density_ratio}]
	\begin{align*}
 \E_{x,y\sim q} (y-f(x))^2 = &  \E_{y\sim Ber(1/2)}[\E[x\sim q_{X|Y}] (y-f(x))^2 |y]\\
 = & 1/2\E_{x\sim p_T} (1-f(x))^2 + 1/2 \E_{x\sim p_S} (0- f(x))^2 \\
 = & \int_{x} p_T(\bx)(1-f(\bx))^2 + p_S(\bx)f(\bx)^2 d\bx.
	\end{align*}
For any $\bx$, the optimal value for $a:=f(\bx)$ is obtained by taking the derivative of  $p_T(\bx)(1-a)^2 + p_S(\bx)a^2$, i.e., $a=\frac{p_T(\bx)}{p_S(\bx)+p_T(\bx)}$. 
Therefore, the optimal function $f(\bx)\equiv \frac{p_T(\bx)}{p_S(\bx)+p_T(\bx)}$ for all $\bx$. 
\end{proof}

\section{Omitted Proof with Model Shift}
\label{section:model_shift_proof}

\begin{definition}[Orthosymmetry]
	A set $\Theta$ is said to be solid and orthosymmetric if $\btheta\in \Theta$ and $|\zeta_i|\leq |\theta_i|$ for all $i$ implies that $\bzeta\in \Theta$. If a solid, orthosymmetric $\Theta$ contains a point $\btau$, then it contains the entire
	hyperrectangle that $\btau$ defines: $\Theta(\btau)\equiv \{\btheta | |\theta_i|\leq \tau_i,\forall i \} \subset \Theta$. 
\end{definition}

\begin{proof}[Proof of Claim \ref{claim:beta_change_relaxed_loss}]
	First notice for any estimator $\hat\bbeta$, it all satisfies 
	\begin{equation}
	\label{eqn:tight_relaxation}
	L_{\cB,\Delta} (\hat \bbeta)  \leq r_{\cB,\Delta} (\hat \bbeta) \leq 2 L_{\cB,\Delta} (\hat \bbeta).
	\end{equation}
	The first inequality is straightforward with the same reasoning of AM-GM as the derivation of \eqref{eqn:am-gm_inequality}. As for the second inequality, we take a closer look at \eqref{eqn:am-gm_inequality}. Notice that when $\max_{\bbeta_T^*\in \cB, \bdelta\in \Delta}$ is achieved, the cross term has to be non-negative, or otherwise one could flip the sign of $\bbeta^*_T$ to make the value larger. Therefore at maximum $ \|\Sigma_T^{1/2}((A_1+A_2-I)\bbeta_T^*\|^2 + \|\Sigma_T^{1/2}A_1\bdelta \|^2\leq  \|\Sigma_T^{1/2}((A_1+A_2-I)\bbeta_T^* + \Sigma_T^{1/2}A_1\bdelta \|^2$, and notice the remaining parts are all non-negative. Therefore $r_{\cB,\Delta} (\hat \bbeta) \leq 2 L_{\cB,\Delta} (\hat \bbeta)$. 
	
	Now let $\hat \bbeta^* = \argmin_{\hat \bbeta=A_1\bar \by_S + A_2 \bar \by_S} L_{\cB,\Delta}(\hat \bbeta)$. We have:
	\begin{align*}
	R_L(\cB,\Delta) = & L_{\cB,\Delta}(\hat \bbeta^*) \overset{(a)}{\leq}  L_{\cB,\Delta}(\hat \bbeta_{\tMM})  \\
	\overset{\eqref{eqn:tight_relaxation}}{\leq} & r_{\cB,\Delta}(\hat \bbeta_{\tMM})
	\overset{(b)}{\leq} r_{\cB,\Delta}(\hat \bbeta^*) 
	\overset{\eqref{eqn:tight_relaxation}}{\leq}  2 L_{\cB,\Delta}(\hat \bbeta^*) =  2R_L(\cB,\Delta).
	\end{align*}
	The inequality (a) is by definition of $\hat \bbeta^*$ while (b) is from the definition of $\hat \bbeta_{\tMM}$.
	%Write the worst case MSE $L_{\cB,\Delta}(A_1,A_2)= \max_{\cB,\Delta}\|\Sigma_T^{1/2}(A_1\bar \bbeta_S+ A_2\bar \bbeta_T^*-\bbeta_T^*)\|^2$. 
\end{proof}

\subsection{Lower Bound with Model Shift}
In order to derive the lower bound, we abstract the problem to the following more general one:
\begin{problem}
	\label{problem:minimax_risk_with_beta_change}
	For arbitrary diagonal matrix $D\in \R^{d\times d}$, two $\ell_2$-compact, solid, orthosymmetric, and quadratically convex sets $\Theta,\Delta\subset \R^d$, let 
	\begin{equation*}
	\cP_{\Theta,\Delta,D} =   \left\{\cN \left( \left[ \left.  
	\begin{array}{c}
	D\btheta+\bdelta\\
	\btheta 
	\end{array}
	\right],
	\left[ 
	\begin{array}{c|c}
	I & 0 \\
	\hline 
	0 & I
	\end{array}
	\right] 
	\right)
	\right| \btheta\in\Theta, \bdelta \in \Delta
	\right\}
	\end{equation*}
	Let $R_L(\Theta,\Delta,D)$ and $R_N(\Theta,\Delta, D)$ be the minimax linear risk and minimax risk respectively for estimating $\btheta$ within the distribution class $\cP_{\Theta,\Delta, D}$:
	\begin{align*}
	R_L(\Theta,\Delta,D) = & \min_{\hat \btheta:\R^d\rightarrow \Theta \text{ linear}}\max_{P\in \cP_{\Theta,\Delta,D} }r_P(\hat \btheta), \\
	R_N(\Theta,\Delta,D) = & \min_{\hat \btheta:\R^d\rightarrow \Theta }\max_{P\in \cP_{\Theta,\Delta,D} }r_P(\hat \btheta).
	\end{align*}
	Here $r_P(\hat \btheta) := \E_{\bx\sim P}\|\hat \btheta(\bx) - \btheta(P)\|_2^2 $.
	 We want to derive a uniform lower bound for $R_N$ with $R_L$, i.e., $R_N\geq \mu^* R_L$, where $\mu^*$ is universal and doesn't depend on the choices of $D$, $\Theta$ or $\Delta$. 
\end{problem}
Before proving the lower bound, we establish its connection to our considered problem: 
\begin{remark}
	\label{remark:equivalence_of_our_problem}
	Suppose $\Sigma_S=U\diag(\bs)U^\top$ and $\Sigma_T=U\diag(\bt)U^\top$ share the same eigenspace. Recall our samples $\ba\sim \cN(\Sigma_S^{1/2}(\bbeta^*_T+\bdelta),\sigma^2 I), \bb\sim \cN(\Sigma_T^{1/2}\bbeta^*_T,\sigma^2 I)$. Our goal to uniformly lower bound $R_N(r,\gamma)$ by $R_L(r,\gamma)$ is essentially Problem \ref{problem:minimax_risk_with_beta_change}, where
	\begin{align*}
	R_L(r,\gamma)	:= & \min_{\hat\bbeta \text{ linear}}\max_{\|\bbeta_T^*\|\leq r,\|\bdelta\|\leq \gamma}\E\|\Sigma_T^{1/2}(\hat\bbeta(\ba,\bb)-\bbeta^*)\|^2,	\\
	R_N(r,\gamma)	:= & \min_{\hat\bbeta}\max_{\|\bbeta_T^*\|\leq r,\|\bdelta\|\leq \gamma}\E\|\Sigma_T^{1/2}(\hat\bbeta(\ba,\bb)-\bbeta^*)\|^2.
	\end{align*}
\end{remark}
\begin{proof}[Proof of Remark \ref{remark:equivalence_of_our_problem}]
Our target considers samples drawn from distributions $\bx\sim \cN(\Sigma_S^{1/2}(\bbeta^*_T+\bdelta),\sigma^2 I), \by\sim \cN(\Sigma_T^{1/2}\bbeta^*_T,\sigma^2 I)$. 
\begin{align*}
& \left[
\begin{array}{c}
\ba \\
\bb 
\end{array} 
\right] \sim \cN \left( \left[ 
\begin{array}{c}
U\diag(\bs^{1/2})U^\top (\bbeta_T^*+\bdelta)\\
U\diag(\bt^{1/2})U^\top \bbeta_T^*
\end{array}
\right],
\left[ 
\begin{array}{c|c}
\sigma^2 I & 0 \\
\hline 
0 & \sigma^2 I
\end{array}
\right] 
\right), 
\btheta\in\Theta, \bdelta \in \Delta
\\
\Longleftrightarrow & \left[
	\begin{array}{c}
	U^\top\ba/\sigma \\
	U^\top\bb/\sigma  
	\end{array} 
	\right] \sim \cN \left( \left[ 
	\begin{array}{c}
	\diag(\bs^{1/2})U^\top (\bbeta_T^*+\bdelta)\\
	\diag(\bt^{1/2})U^\top \bbeta_T^*
	\end{array}
	\right],
	\left[ 
	\begin{array}{c|c}
	I & 0 \\
	\hline 
	0 & I
	\end{array}
	\right] 
	\right), 
	 \|\bbeta_T^*\|\leq r, \|\bdelta\| \in \gamma 
\end{align*} 
Let $\bar\ba=U^\top\ba/\sigma,\bar\bb=U^\top\bb/\sigma, \Theta = \{\btheta| \|\diag(\bt^{-1/2})\btheta\|\leq r \}$, $\Delta=\{ \|\diag(\bs^{-1/2})\bdelta\|\leq \gamma \}$. $\bar\btheta = U^\top \Sigma_T^{1/2}\bbeta_T^*, \bdelta = U^\top \Sigma_S^{1/2}\bdelta$, and $D=\diag(\bs^{1/2}\bt^{-1/2})$. We get:
\begin{align*} 
& \left[
\begin{array}{c} 
U^\top\ba/\sigma \\
U^\top\bb/\sigma  
\end{array} 
\right] \sim \cN \left( \left[ 
\begin{array}{c}
\diag(\bs^{1/2})U^\top (\bbeta_T^*+\bdelta)\\
\diag(\bt^{1/2})U^\top \bbeta_T^*
\end{array}
\right],
\left[ 
\begin{array}{c|c}
I & 0 \\
\hline 
0 & I
\end{array}
\right] 
\right), 
\|\bbeta_T\|\leq r, \|\bdelta\| \in \gamma \\
	\Longleftrightarrow &  \left[
	\begin{array}{c}
\bar\ba \\
	\bar\bb 
	\end{array} 
	\right] \sim P_{\btheta,\bdelta,D}:= \cN \left( \left[ 	\begin{array}{c}
	D \bar\btheta + \bar\bdelta\\
	\bar\btheta 
	\end{array}
	\right],
	\left[ 
	\begin{array}{c|c}
	I & 0 \\
	\hline 
	0 & I
	\end{array}
	\right] 
	\right), \bar\btheta \in\Theta, \bar\bdelta \in \Delta.
	\end{align*}
Let $ \cP_{\Theta, \Delta,D} := \left\{ \left. P_{\bar\btheta,\bar\bdelta,D} 
\right| \bar\btheta \in\Theta, \bar\bdelta \in \Delta
\right\}$. Since $U$ is an invertible matrices, observing $U^\top \ba/\sigma, U^\top \bb/\sigma$ instead of $\ba, \bb$ has no affect on the performance of the best estimator. Also $\Theta,\Delta$ are axis-aligned ellipsoid and thus satisfy orthosymmetry. Therefore our problem is essentially reduced to Problem \ref{problem:minimax_risk_with_beta_change}. 
\end{proof}
%\qi{the above is not true}
%Also, notice rotation in $\Theta$ and $\Delta$ also doesn't affect their properties required by the problem. Therefore without loss of generality, we could consider instead the problem of minimax estimation within distribution class $\cP_{\Theta_V,\Delta_U,D}$. 
%From now on we will simplify the problem and only consider $\cP_{\Theta,\Delta,D}$ with diagonal matrices $D$.   

\begin{lemma}
	\label{lemma:relate_to_worst_hyperrectangle} 
Let	$\Theta(\btau) = \{ \btheta | \theta_i \leq \tau_i,\forall i, \btheta \in \Theta\}$ and similarly for $\Delta(\bzeta) = \{\bdelta | \bdelta_i \leq \zeta_i, \bdelta \in \Delta \}$, $D$ is some diagonal matrix.
$$R_L(\Theta,\Delta,D) = \sup_{\btau\in \Theta, \bzeta \in \Delta }R_L(\Theta(\btau), \Delta(\bzeta),D), \text{ and} $$
$$R_N(\Theta,\Delta,D) \geq \sup_{\btau\in \Theta, \bzeta \in \Delta }R_N(\Theta(\btau), \Delta(\bzeta),D).  $$
\end{lemma}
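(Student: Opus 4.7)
The easy direction, $R_L(\Theta, \Delta, D) \geq \sup_{\btau \in \Theta, \bzeta \in \Delta} R_L(\Theta(\btau), \Delta(\bzeta), D)$ and the analogous $R_N$ inequality, is immediate from monotonicity of the minimax risk under set inclusion. Since $\Theta, \Delta$ are solid and orthosymmetric, for every $\btau \in \Theta$ and $\bzeta \in \Delta$ the hyperrectangles satisfy $\Theta(\btau) \subseteq \Theta$ and $\Delta(\bzeta) \subseteq \Delta$, so restricting the supremum to a subset cannot increase the worst-case risk of any fixed estimator; taking the infimum over estimators preserves the inequality.

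For the reverse inequality in the linear case, my plan is a two-stage sign-symmetrization argument that reduces to a coordinate-decoupled problem, followed by a hyperrectangle-identification step using quadratic convexity. Split the observation as $\bx_1 = D\btheta + \bdelta + \bz_1$ and $\bx_2 = \btheta + \bz_2$ with $\bz_1, \bz_2 \sim \cN(0, I_d)$ independent, and write any linear estimator as $\hat\btheta(\bx_1, \bx_2) = A\bx_1 + B\bx_2$. For a sign vector $\epsilon \in \{\pm 1\}^d$ with $E = \mathrm{diag}(\epsilon)$, the simultaneous sign-flip $(\bx_1, \bx_2) \mapsto (E\bx_1, E\bx_2)$ is distributionally equivalent to the reparametrization $(\btheta, \bdelta) \mapsto (E\btheta, E\bdelta)$, because $D$ is diagonal (so $ED = DE$) and the noise is isotropic. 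Forming the averaged estimator $\hat\btheta^{\mathrm{sym}}(\bx_1, \bx_2) := 2^{-d} \sum_{\epsilon} E \hat\btheta(E\bx_1, E\bx_2)$ and combining orthosymmetry of $\Theta \times \Delta$ with Jensen's inequality for the convex squared loss yields a linear estimator whose matrices are diagonal (the off-diagonal entries cancel in the sign average) and whose worst-case risk over $\Theta \times \Delta$ is no larger than that of $\hat\btheta$. Hence the minimax linear estimator may be taken diagonal.

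Once the estimator is diagonal, $\hat\theta_i = \alpha_i (\bx_1)_i + \beta_i (\bx_2)_i$, the risk decomposes coordinatewise as $\sum_i [(\alpha_i D_{ii} + \beta_i - 1)\theta_i + \alpha_i \delta_i]^2 + \sum_i(\alpha_i^2 + \beta_i^2)$. A further coordinatewise sign-flip applied separately to $\btheta$ and $\bdelta$ (again valid by orthosymmetry) shows the worst case over $\Theta \times \Delta$ equals $\max_{(\btheta, \bdelta) \in \Theta \times \Delta} \sum_i (|m_i| |\theta_i| + |n_i| |\delta_i|)^2$ with $m_i := \alpha_i D_{ii} + \beta_i - 1$ and $n_i := \alpha_i$. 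Each summand is monotone increasing in $|\theta_i|$ and $|\delta_i|$, so solidity of $\Theta, \Delta$ ensures the supremum is attained at a point $(\btheta^\star, \bdelta^\star)$ whose componentwise absolute values $\btau^\star := |\btheta^\star|$ and $\bzeta^\star := |\bdelta^\star|$ themselves lie in $\Theta$ and $\Delta$. The fixed diagonal estimator then has identical worst-case risk on the hyperrectangle $\Theta(\btau^\star) \times \Delta(\bzeta^\star)$ and on the whole set, since the maximizer is a vertex of the hyperrectangle.

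It remains to show this same diagonal choice is actually minimax linear on the hyperrectangle, giving $R_L(\Theta, \Delta, D) = R_L(\Theta(\btau^\star), \Delta(\bzeta^\star), D) \leq \sup_{\btau, \bzeta} R_L(\Theta(\btau), \Delta(\bzeta), D)$. I plan to obtain this via Sion's minimax theorem applied to the reparametrized problem in the squared coordinates $(\gamma_i, \eta_i) := (\tau_i^2, \zeta_i^2)$: the per-coordinate objective $(|m_i|\sqrt{\gamma_i} + |n_i|\sqrt{\eta_i})^2 + \alpha_i^2 + \beta_i^2$ is convex in $(\alpha_i, \beta_i)$ and concave in $(\gamma_i, \eta_i)$ (the cross term reduces to the concave geometric mean $2|m_i||n_i|\sqrt{\gamma_i \eta_i}$, while the remaining terms are linear), while the feasible sets $\Theta^{(2)} := \{(\tau_i^2)_i : \btau \in \Theta \cap \R_{\geq 0}^d\}$ and $\Delta^{(2)}$ are convex precisely by quadratic convexity. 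The main obstacle will be this saddle-point verification: carefully handling the absolute values $|m_i|, |n_i|$, confirming that the inner minimum does coincide with the hyperrectangle linear minimax risk, and lifting the classical single-parameter argument (cf.~Johnstone's treatment of the Donoho--Liu--MacGibbon result) to our two-parameter setting with coupled observations on $\btheta$ and $\bdelta$.
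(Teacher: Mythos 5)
Your plan is correct and is essentially the argument the paper relies on: the paper states this lemma without an explicit proof (it is the two-parameter analogue of the Donoho--Liu--MacGibbon ``hardest rectangular subproblem'' result in Johnstone's treatment), and your ingredients --- reduction to diagonal estimators by sign-symmetrization over $\{\pm1\}^d$ and the coordinatewise vertex computation --- are the same ones the paper deploys in Lemma~\ref{lemma:best_linear_estimator_diagonal} and Proposition~\ref{proposition:decompose}. You also correctly isolate the one step the paper leaves entirely implicit, namely the min--max interchange in the squared coordinates $(\tau_i^2,\zeta_i^2)$, which is exactly where $\ell_2$-compactness and quadratic convexity of $\Theta$ and $\Delta$ enter; your convexity/concavity checks there are sound (in particular $(|m_i|\sqrt{\gamma_i}+|n_i|\sqrt{\eta_i})^2$ is convex in $(\alpha_i,\beta_i)$ because it is the square of a nonnegative convex function of the affine quantities $m_i,n_i$, and the cross term $2|m_i n_i|\sqrt{\gamma_i\eta_i}$ is concave in $(\gamma_i,\eta_i)$), so once the interchange is established the identification of the inner minimum with $R_L(\Theta(\btau),\Delta(\bzeta),D)$ follows from the diagonal reduction on hyperrectangles.
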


Write samples drawn from some $P_{\btheta,\bdelta,D} \in \cP_{\Theta,\Delta,D}$ as $ (\bx,\by): \bx\sim \cN(D\btheta+\bdelta, I), \by\sim \cN(\btheta,I)$. 
\begin{lemma}
	\label{lemma:best_linear_estimator_diagonal}
The minimax linear estimator $\hat\btheta:(\bx,\by)\rightarrow A\bx+B\by$ has the form $\hat\btheta_{\ba,\bb}(\bx,\by)=\sum_ia_ix_i + \sum_ib_i y_i $ for some $\ba,\bb\in \R^d$. Namely,
$$ R_L(\Theta,\Delta,D) = \inf_{\hat\btheta_{\ba,\bb}} \max_{P\in \cP_{\Theta,\Delta,D}} r_P(\hat\btheta_{\ba,\bb}).  $$
\end{lemma}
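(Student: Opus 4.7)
The plan is to exploit the coordinate-wise symmetry induced by (i) $D$ being diagonal, (ii) the Gaussian noise having identity covariance (hence being invariant under coordinate sign flips), and (iii) $\Theta$ and $\Delta$ being solid and orthosymmetric. Given any linear estimator $(A,B)$, I will construct a family of estimators indexed by sign patterns that all achieve the same worst-case risk, and then take their average to produce one with diagonal coefficient matrices whose worst-case risk is no larger.

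\textbf{Step 1: sign-flip invariance of the problem.} For $\epsilon\in\{-1,+1\}^d$ let $E_\epsilon=\diag(\epsilon)$. Since $E_\epsilon D=DE_\epsilon$ (both diagonal) and the noise covariance is $I$, a direct computation shows $(E_\epsilon\bx,E_\epsilon\by)\sim P_{E_\epsilon\btheta,\,E_\epsilon\bdelta,\,D}$ whenever $(\bx,\by)\sim P_{\btheta,\bdelta,D}$. Because $\Theta$ and $\Delta$ are orthosymmetric, the maps $(\btheta,\bdelta)\mapsto(E_\epsilon\btheta,E_\epsilon\bdelta)$ are bijections of $\Theta\times\Delta$.

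\textbf{Step 2: generating a family of equivalent estimators.} Given $\hat\btheta(\bx,\by)=A\bx+B\by$, define $\hat\btheta^{(\epsilon)}(\bx,\by):=E_\epsilon\hat\btheta(E_\epsilon\bx,E_\epsilon\by)=(E_\epsilon AE_\epsilon)\bx+(E_\epsilon BE_\epsilon)\by$. Using Step 1 and the identity $\|E_\epsilon \bv\|^2=\|\bv\|^2$, one checks
\begin{equation*}
r_{P_{\btheta,\bdelta,D}}(\hat\btheta^{(\epsilon)})=r_{P_{E_\epsilon\btheta,E_\epsilon\bdelta,D}}(\hat\btheta).
\end{equation*}
Taking suprema over $\Theta\times\Delta$ and invoking the bijection from Step 1 yields $\sup_{P\in\cP}r_P(\hat\btheta^{(\epsilon)})=\sup_{P\in\cP}r_P(\hat\btheta)$ for every sign pattern $\epsilon$.

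\textbf{Step 3: averaging kills off-diagonal entries.} Let $\tilde A=2^{-d}\sum_\epsilon E_\epsilon AE_\epsilon$ and $\tilde B=2^{-d}\sum_\epsilon E_\epsilon BE_\epsilon$; the corresponding linear estimator is $\tilde{\hat\btheta}=\tilde A\bx+\tilde B\by$, which is the uniform convex combination of the $\hat\btheta^{(\epsilon)}$. A direct entrywise computation gives $(E_\epsilon AE_\epsilon)_{ij}=\epsilon_i\epsilon_j A_{ij}$, so $\tilde A_{ij}=A_{ii}\indict\{i=j\}$, i.e.\ $\tilde A=\diag(A)$, and likewise $\tilde B=\diag(B)$. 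Setting $a_i=A_{ii}$ and $b_i=B_{ii}$ recovers the stated form $\tilde{\hat\btheta}(\bx,\by)_i=a_ix_i+b_iy_i$.

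\textbf{Step 4: convexity finishes the argument.} Since the risk $r_P(A\bx+B\by)$ is a (convex) quadratic in $(A,B)$ for each fixed $P$, the worst-case risk $\sup_{P\in\cP}r_P(\cdot)$ is convex in the estimator. Hence
\begin{equation*}
\sup_{P\in\cP_{\Theta,\Delta,D}}r_P(\tilde{\hat\btheta})\le 2^{-d}\sum_\epsilon\sup_{P\in\cP_{\Theta,\Delta,D}}r_P(\hat\btheta^{(\epsilon)})=\sup_{P\in\cP_{\Theta,\Delta,D}}r_P(\hat\btheta),
\end{equation*}
by Step 2. Therefore for any linear estimator there is a diagonal one that is at least as good, so the infimum over linear estimators equals the infimum over diagonal ones, which is exactly the claim.

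\textbf{Main obstacle.} The only delicate point is bookkeeping in Step 2: verifying that applying sign flips simultaneously to the inputs, the coefficient matrices, and the parameters really does leave the risk functional invariant. Once this symmetry is pinned down, the averaging argument and the orthosymmetry of $\Theta,\Delta$ do the rest essentially mechanically.
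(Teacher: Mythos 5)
Your proof is correct, and it takes a genuinely different route from the paper's. The paper obtains this lemma as a corollary of its hyperrectangle machinery: the proof of Proposition~\ref{proposition:decompose}.a places a uniform prior on the $2^d$ correlated sign-flipped vertices $(\theta_i,\delta_i)\in\{(\bar\tau_i,\bar\zeta_i),(-\bar\tau_i,-\bar\zeta_i)\}$ of a hyperrectangle $\Theta(\btau)\times\Delta(\bzeta)$, lower-bounds the maximum risk by the resulting average (Bayes) risk, and shows by an explicit trace computation that discarding the off-diagonal entries of $(A,B)$ only decreases that bound; the lemma then follows by taking suprema over all hyperrectangles inscribed in $\Theta\times\Delta$ via Lemma~\ref{lemma:relate_to_worst_hyperrectangle}. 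You instead symmetrize the estimator rather than the parameter: you conjugate $(A,B)$ by $\diag(\epsilon)$, use the distributional invariance $(\diag(\epsilon)\bx,\diag(\epsilon)\by)\sim P_{\diag(\epsilon)\btheta,\,\diag(\epsilon)\bdelta,\,D}$ together with orthosymmetry to see that every conjugated estimator has the same worst-case risk, and then invoke convexity of $\sup_{P}r_P(\cdot)$ in $(A,B)$ to conclude that the averaged (hence diagonal) estimator is at least as good. Both arguments rest on exactly the same coordinate sign-flip symmetry --- your Step 1 correctly flips $\btheta$ and $\bdelta$ by the \emph{same} $\epsilon$, which mirrors the paper's correlated vertex prior --- but yours is self-contained and avoids the detour through hyperrectangles, whereas the paper's vertex-prior computation earns its keep by doing double duty: it also delivers the exact coordinate-wise decomposition $R_L(\Theta(\btau),\Delta(\bzeta),D)=\sum_i R_L(\tau_i,\zeta_i,D_{ii})$ that the subsequent lower-bound argument against $R_N$ requires.
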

\begin{proof}
According to the proof of Proposition \ref{proposition:decompose}.a, by discarding off-diagonal terms, the maximum risk of any linear estimator $\hat\btheta_{A,B} $ over any hyperrectangles $\Theta(\btau),\Delta(\bzeta)$ is reduced. %\qi{remember to paraphrase, this line exactly copied the language}
$$\max_{\btheta \in \Theta(\btau), \bdelta\in\Delta(\bzeta) } r_{P_{\btheta,\bdelta,D}}(\hat\btheta_{A,B}) \geq  \max_{\btheta \in \Theta(\btau), \bdelta\in\Delta(\bzeta) }r_{P_{\btheta,\bdelta,D}}(\hat\btheta_{\diag(A),\diag(B)}). $$	
Further we have:
\begin{align*}
\min_{A,B} \max_{\btheta\in \Theta,\bdelta\in\Delta}r_{P_{\btheta,\bdelta,D}}(\hat\btheta_{A,B}) \geq & \min_{A,B} \max_{\btau\in \Theta,\bzeta\in \Delta} \max_{\btheta \in \Theta(\btau), \bdelta\in\Delta(\bzeta) } r_{P_{\btheta,\bdelta,D}}(\hat\btheta_{\diag(A),\diag(B)}) \\
	= & \min_{\ba,\bb} \max_{\btheta\in \Theta,\bzeta\in \Delta} r_{P_{\btheta,\bdelta,D}}(\hat\btheta_{\ba,\bb})\\
	\geq & \min_{C} \max_{\btheta\in\Theta,\bdelta\in\Delta} r_{P_{\btheta,\bdelta,D}}(\hat\btheta_{A,B}).
\end{align*}
Therefore all four terms have to be equal, thus finishing the proof.  
\end{proof}

Notice $\Theta(\btau)$ and $ \Delta(\bzeta) $ are hyperrectangles in $\R^d$. Therefore we could decompose the problem to some 2-d problems: 

\begin{proposition}
		\label{proposition:decompose}
	Under the same setting as Problem \ref{problem:minimax_risk_with_beta_change}, 
	$$a).~\  R_L(\Theta(\btau), \Delta(\bzeta), D) = \sum_i R_L(\tau_i, \zeta_i, D_{ii}).$$ 
	If $\hat\btheta_{A,B}(\bx,\by)=A\bx+B\by$ is minimax linear estimator over $P_{\Theta(\btau), \Delta(\bzeta), D},$ then necessarily $A,B$ must be diagonal. 
		$$b).~\ R_N(\Theta(\btau), \Delta(\bzeta), D) = \sum_i R_N(\tau_i, \zeta_i, D_{ii}).$$ 
\end{proposition}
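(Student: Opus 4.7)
The plan is to exploit the tensor-product structure of the problem: the observations $(x_i, y_i)$ are independent across $i$, the squared-error loss splits as $\sum_i (\hat\theta_i - \theta_i)^2$, and the constraint sets $\Theta(\btau)$, $\Delta(\bzeta)$ are Cartesian products of intervals. These three ingredients together will allow us to reduce both $R_L$ and $R_N$ to sums of 1-d minimax risks.

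For part (a), the upper bound $R_L(\Theta(\btau), \Delta(\bzeta), D) \leq \sum_i R_L(\tau_i, \zeta_i, D_{ii})$ is immediate by choosing a separable linear estimator $\hat\theta_i = a_i^\star x_i + b_i^\star y_i$, where $(a_i^\star, b_i^\star)$ attains the 1-d minimax linear risk; the worst case over a product set of a sum of nonnegative coordinate-wise risks equals the sum of coordinate-wise worst cases. For the matching lower bound and the diagonality claim, I would use sign-symmetrization over the group $G = \{-1, +1\}^d$. Let $\Sigma = \diag(\sigma)$ for $\sigma \in G$. Under the joint action $\btheta \mapsto \Sigma\btheta$, $\bdelta \mapsto \Sigma\bdelta$, $\bx \mapsto \Sigma\bx$, $\by \mapsto \Sigma\by$, the distribution $P_{\btheta,\bdelta,D}$ is mapped to $P_{\Sigma\btheta,\Sigma\bdelta,D}$ (using that $D$ is diagonal, so $\Sigma D = D \Sigma$, and that isotropic Gaussian noise is sign-invariant), and both hyperrectangles are preserved. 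Given any linear estimator $\hat\btheta_{A,B}$, the orbit-averaged estimator
\begin{equation*}
\tilde{\hat\btheta}(\bx, \by) \;=\; \frac{1}{|G|} \sum_{\Sigma \in G} \Sigma\, \hat\btheta_{A,B}(\Sigma\bx, \Sigma\by)
\end{equation*}
equals $\diag(A)\bx + \diag(B)\by$ because $\E_\Sigma[\Sigma A \Sigma]_{ij} = A_{ij}\cdot \mathbf{1}[i = j]$. Jensen's inequality (convexity of squared loss) combined with $G$-invariance of the parameter set then yields $\sup r_P(\tilde{\hat\btheta}) \leq \sup r_P(\hat\btheta_{A,B})$. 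Hence the minimax linear risk is always attained by a diagonal estimator, at which point the risk splits coordinate-wise and equals $\sum_i R_L(\tau_i, \zeta_i, D_{ii})$.

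For part (b), the upper bound is again immediate from coordinate-wise 1-d nonlinear minimax estimators $\hat\theta_i^\star(x_i, y_i)$, whose joint worst-case risk is $\sum_i R_N(\tau_i, \zeta_i, D_{ii})$ by the same product argument. The lower bound comes from a product-prior Bayes-risk argument. For each $i$, pick any prior $\pi_i$ supported on $[-\tau_i, \tau_i] \times [-\zeta_i, \zeta_i]$ with 1-d Bayes risk $B_i(\pi_i)$. Under the product prior $\otimes_i \pi_i$ (supported on the full constraint set), both the likelihood and the prior factor across coordinates, so the Bayes-optimal estimator is separable and achieves Bayes risk exactly $\sum_i B_i(\pi_i)$. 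Since Bayes risk lower bounds minimax risk, $R_N(\Theta(\btau), \Delta(\bzeta), D) \geq \sum_i B_i(\pi_i)$ for every choice of $\pi_i$'s. Taking the supremum over each $\pi_i$ and invoking $\sup_{\pi_i} B_i(\pi_i) = R_N(\tau_i, \zeta_i, D_{ii})$ --- the maximin identity for the compact 1-d Gaussian subproblem, standard via Sion's minimax theorem --- yields the matching lower bound.

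The main obstacle I anticipate is verifying the invariances carefully in part (a), particularly that the $\Sigma$-action sends $P_{\btheta,\bdelta,D}$ to $P_{\Sigma\btheta,\Sigma\bdelta,D}$ (which crucially uses that $D$ is diagonal), and, in part (b), justifying minimax-maximin duality for the 1-d subproblems. The latter is standard because the parameter set is compact and Bayes risk is upper-semicontinuous in the prior under weak-$*$ convergence, but it is the only step that is not purely algebraic.
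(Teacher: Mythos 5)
Your proposal is correct and follows essentially the same route as the paper: part (a) in both cases rests on the coupled sign-symmetry of $(\theta_i,\delta_i)$ under which the diagonal matrix $D$ commutes, the paper realizing it by lower-bounding the worst-case risk with the Bayes risk of a uniform prior on the sign-flipped vertices $V(\bar\btau,\bar\bzeta)$, while you average the estimator itself over the group $\{-1,+1\}^d$ and invoke Jensen --- two standard phrasings of the same symmetrization, both yielding that the minimax linear estimator is diagonal and the risk then splits coordinate-wise. For part (b) you supply in full the product-prior Bayes argument plus one-dimensional minimax duality that the paper simply cites as Proposition 4.16 of Johnstone, so your write-up is, if anything, more self-contained.
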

\begin{proof}[Proof of Proposition \ref{proposition:decompose}.a ]
First review our notation: 
\begin{align*}
r_{P_{\btheta, \bdelta, D}}(\hat \btheta_{A,B}) = & \E_{(\bx,\by)\sim P_{\btheta, \bdelta, D}}\|\hat \btheta_{A,B}(\bx,\by)-\btheta\|^2\\
= & \E_{\bx\sim\cN(D\btheta+\bdelta,I),\by\sim \cN(\btheta, I)} \|A\bx+B\by-\btheta\|^2\\
= & \|A(D\btheta+\bdelta) + B\btheta -\btheta \|^2 + \Trace(AA^\top) + \Trace(BB^\top)\\
= & \|(AD+B-I)\btheta + A\bdelta \|^2 + \Trace(AA^\top) + \Trace(BB^\top). 
\end{align*}
Our objective is
$$R_L( \Theta(\btau),\Delta(\bzeta),D  ) := \min_{A,B}\max_{\btheta\in \Theta(\btau),\bdelta\in \Delta(\bzeta)}r_{P_{\btheta, \bdelta, D}}(\hat \btheta_{A,B})  $$
We will show that restricting $A,B$ to be diagonal will not include the RHS value. 	
	
For any $\bar\btau\in \Theta(\btau), \bar\bzeta\in\Delta(\bzeta)$, let set $ V(\bar\btau,\bar\bzeta)=\{(\btheta,\bdelta)|(\theta_i,\delta_i)\in \{(\bar\tau_i,\bar\zeta_i),(-\bar\tau_i,-\bar\zeta_i) \}  \}$ be the subset of vertices of $\Theta(\bar\btau)\times \Delta(\bar\bzeta) $. Let $\pi(\bar\btau,\bar\bzeta)$ be uniform distribution on this finite set. Due to the symmetry of this distribution, we have 
$$\E_{\pi(\bar\btau,\bar\bzeta)}\theta_i=0,i\in [d],  $$
$$\E_{\pi(\bar\btau,\bar\bzeta)}\delta_i=0,i\in [d],  $$
$$\E_{\pi(\bar\btau,\bar\bzeta)}\theta_i\theta_j=\onebb_{i=j}\bar\tau_i^2 ,i\in [d],  $$
$$\E_{\pi(\bar\btau,\bar\bzeta)}\delta_i\delta_j=\onebb_{i=j}\bar\zeta_i^2 ,i\in [d],  $$
$$\E_{\pi(\bar\btau,\bar\bzeta)}\theta_i\delta_j=\onebb_{i=j}\bar\tau_i\bar\zeta_i ,i\in [d].  $$
We utilize the distribution to find the explicit value of the maximum (in fact the maximum will only be obtained inside the vertices set $V(\bar\btau,\bar\bzeta)$ ):
\begin{align*}
& \max_{(\btheta,\bdelta)\in V(\bar\btau,\bar\bzeta)} r_{P_{\btheta, \bdelta, D}}(\hat \btheta_{A,B}) \geq  \E_{\pi(\bar\btau,\bar\bzeta)} r_{P_{\btheta, \bdelta, D}}(\hat \btheta_{A,B}) \\
= & \E_{\pi(\bar\btau,\bar\bzeta)} \|(AD+B-I)\btheta + A\bdelta \|^2 + \Trace(AA^\top) + \Trace(BB^\top) \\
= & \Trace((AD+B-I)\E[\btheta\btheta^\top](AD+B-I)^\top) + \Trace(A\E[\bdelta\bdelta^\top]A^\top ) +\\
& 2\Trace((AD+B-I)\E[\btheta\bdelta^\top]A^\top) + \Trace(AA^\top)+\Trace(BB^\top)\\
=& \Trace((AD+B-I)^\top(AD+B-I)\diag(\bar\btau^2))+\Trace(A^\top A\diag(\bar\bzeta^2)) \\ 
& + \Trace((AD+B-I)^\top A\diag(\bar\btau\bar\bzeta)) +\Trace(AA^\top)+\Trace(BB^\top)\\
= & \sum_i \|(AD+B-I)_{:,i}\bar\tau_i+A_{:,i}\bar\zeta_i \|^2 + \Trace(AA^\top)+\Trace(BB^\top)\\
\geq & \sum_i ((A_{ii}D_{ii}+B_{ii}-1)\bar\tau_i+A_{ii}\bar\zeta_i)^2+A_{ii}^2+B_{ii}^2\\
= & \|(\diag(A)D+\diag(B)-I) \btheta + \diag(A)\bdelta \|^2 +\Trace(\diag(A)^2)+\Trace(\diag(B)^2),\tag{$\forall (\btheta,\bdelta)\in V(\bar\btau,\bar\bzeta)$}\\
= & \max_{ V(\bar\btau,\bar\bzeta)}\|(\diag(A)D+\diag(B)-I) \btheta + \diag(A)\bdelta \|^2 +\Trace(\diag(A)^2)+\Trace(\diag(B)^2)
\end{align*}

Therefore we have:
\begin{align*}
R_L( \Theta(\btau),\Delta(\bzeta),D  ) := & \min_{A,B}\max_{\btheta\in \Theta(\btau),\bdelta\in \Delta(\bzeta)}r_{P_{\btheta, \bdelta, D}}(\hat \btheta_{A,B}) \\
= & \min_{A,B}\max_{\bar\btau\in \Theta(\btau),\bar\bzeta\in \Delta(\bzeta)}\max_{\btheta\in V(\bar\btau,\bar\bzeta)} r_{P_{\btheta, \bdelta, D}}(\hat \btheta_{A,B})\\
\geq & \min_{A,B}\max_{\bar\btau\in \Theta(\btau),\bar\bzeta\in \Delta(\bzeta)}\max_{(\btheta,\bdelta)\in V(\bar\btau,\bar\bzeta)} r_{P_{\btheta, \bdelta, D}}(\hat \btheta_{\diag(A),\diag(B)})\\
= & \min_{\ba\in\R^d,\bb\in\R^d}\max_{\btheta\in \Theta(\btau),\bdelta\in\Delta(\bzeta)} r_{P_{\btheta, \bdelta, D}}(\hat \btheta_{\ba,\bb}).
\end{align*} 
Next, since the optimal solution on the minimizer is always obtained by diagonal $A,B$, it becomes straightforward that each axis could be viewed in separation, thus finishing the proof for part a. 

The nonlinear part is a straightforward extension of Proposition 4.16 from \cite{johnstone2011gaussian}. 
	
\end{proof}

\begin{theorem}[Restated Le Cam Two Point Theorem \cite{wainwright2019high}]
	Let $\cP$ be a family of distribution, and
	$\theta: \cP\rightarrow \Theta$ is some associated parameter.  Let $\rho: \Theta \times \Theta \rightarrow \R^+$ be some metric defined on $\Theta$ and $\Phi:\R_+\rightarrow \R_+$ is a monotone non-decreasing function with $\Phi(0)=0$. For any $\alpha \in (0,1)$, 
	\begin{align*}
	\inf_{\hat \theta} \sup _{P\in \cP}[\Phi(\rho(\hat \theta, \theta(P))) ]  \geq & \max_{P_1,P_2\in \cP} \frac{1}{2}\Phi(\frac{1}{2}\rho(\theta(P_1),\theta(P_2))  ) (1-\alpha),\\
	&\text{ s.t. } \|P_1^n - P_2^n \|_{TV}\leq \alpha. 
	\end{align*}
\end{theorem}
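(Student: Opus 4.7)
\begin{proofsk}
The plan is to follow the classical reduction from estimation to two-hypothesis testing. Fix any two distributions $P_1, P_2 \in \cP$ satisfying the TV constraint $\|P_1^n - P_2^n\|_{\TV} \leq \alpha$, and write $s := \tfrac{1}{2}\rho(\theta(P_1), \theta(P_2))$ for the half-separation. The first step is the standard triangle-inequality argument: for any estimator $\hat\theta$ and any realization, we cannot have both $\rho(\hat\theta, \theta(P_1)) < s$ and $\rho(\hat\theta, \theta(P_2)) < s$, since these would combine via the triangle inequality to contradict $\rho(\theta(P_1), \theta(P_2)) = 2s$. Hence for each realization at least one of the two distances is $\geq s$.

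The second step converts this into a testing statement. Define a test $\Psi \in \{1,2\}$ by $\Psi = \arg\min_{j\in\{1,2\}} \rho(\hat\theta, \theta(P_j))$ (breaking ties arbitrarily). Then the event $\{\Psi \neq j\}$ implies $\rho(\hat\theta, \theta(P_j)) \geq s$, so by Markov's inequality applied to the monotone $\Phi$,
\begin{align*}
\E_{P_j^n}[\Phi(\rho(\hat\theta,\theta(P_j)))] \;\geq\; \Phi(s)\,\Pro_{P_j^n}(\Psi \neq j).
\end{align*}
Averaging over $j \in \{1,2\}$ gives
\begin{align*}
\max_{j} \E_{P_j^n}[\Phi(\rho(\hat\theta,\theta(P_j)))] \;\geq\; \tfrac{1}{2}\Phi(s)\bigl[\Pro_{P_1^n}(\Psi \neq 1) + \Pro_{P_2^n}(\Psi \neq 2)\bigr].
\end{align*}

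The third step invokes the Neyman--Pearson / Le Cam identity for the optimal testing error, namely
\begin{align*}
\inf_{\Psi} \bigl[\Pro_{P_1^n}(\Psi \neq 1) + \Pro_{P_2^n}(\Psi \neq 2)\bigr] \;=\; 1 - \|P_1^n - P_2^n\|_{\TV} \;\geq\; 1 - \alpha.
\end{align*}
Since the particular test we constructed is no better than the optimal one, the sum of its error probabilities is also at least $1-\alpha$. Combining with the previous display yields
\begin{align*}
\sup_{P\in\cP} \E_{P^n}[\Phi(\rho(\hat\theta,\theta(P)))] \;\geq\; \tfrac{1}{2}\Phi(s)(1-\alpha).
\end{align*}
Taking the infimum over estimators on the left and then the supremum over admissible pairs $(P_1,P_2)$ on the right gives the stated bound.

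The only nonroutine point is invoking the identity $\inf_\Psi[\Pro_{P_1^n}(\Psi\neq 1)+\Pro_{P_2^n}(\Psi\neq 2)] = 1-\|P_1^n-P_2^n\|_{\TV}$; this is a one-line computation using the variational representation $\|P-Q\|_{\TV} = \sup_A |P(A)-Q(A)|$ applied to the acceptance region of $\Psi$, so no genuine obstacle arises. The rest is just the triangle inequality and Markov's inequality assembled in the right order.
\end{proofsk}
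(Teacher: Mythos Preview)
Your proof sketch is correct and is precisely the standard argument for Le Cam's two-point method. Note, however, that the paper does not actually prove this theorem: it is stated as a restatement from \cite{wainwright2019high} and used as a black-box tool in the proof of Lemma~\ref{lemma:1d_bound_with_beta_change}, so there is no ``paper's own proof'' to compare against. Your argument---reduce to binary testing via the triangle inequality, lower-bound by $\Phi(s)$ times the error probability via monotonicity of $\Phi$, then invoke the identity $\inf_\Psi[\Pro_{P_1^n}(\Psi\neq 1)+\Pro_{P_2^n}(\Psi\neq 2)]=1-\|P_1^n-P_2^n\|_{\TV}$---is exactly the textbook derivation one would find in the cited reference.
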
 
	
	\begin{lemma}
		\label{lemma:1d_bound_with_beta_change}
		Consider a class of distribution $\cP_{\tau,\zeta, s} = \{ P_{\theta,\delta,s}| P_{\theta,\delta,s}:=\cN( [s\theta+\delta, \theta]^\top, I_{2}), |\theta|\leq \tau, |\delta|\leq \zeta \}$. %For each distribution $P_{\theta,\delta,s}$ we have $n$ observations $\bx_i\sim P_{\theta,\delta,s}, i\in [n].$  
		Define 
		\begin{align*}
		R_L(\tau,\zeta, s) = &\min_{\hat \theta\text{ linear}}\max_{|\theta|\leq \tau, |\delta|\leq \zeta} \E_{\bx\sim P_{\theta,\delta,s}} (\hat\theta(\bx) - \theta )^2,\\
		\text{and }	R_N(\tau,\zeta, s) = & \min_{\hat \theta}\max_{|\theta|\leq \tau, |\delta|\leq \zeta}\E_{\bx\sim P_{\theta,\delta,s}} (\hat\theta(\bx) - \theta )^2	
		\end{align*}
		We have
		$$R_L(\tau,\zeta, s)\leq 27/2 R_N(\tau,\zeta, s), \forall \zeta, s>0,\tau>0.$$
	\end{lemma}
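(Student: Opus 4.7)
The plan is to derive an explicit expression (or tight upper bound) for $R_L(\tau,\zeta,s)$ by solving the corresponding two-variable convex program, then lower-bound $R_N(\tau,\zeta,s)$ via Le Cam's two-point method, and finally compare the two in a short case analysis. Throughout, I will exploit the observation that the minimax linear problem is a 2-D convex program while the minimax nonlinear problem becomes tractable because each hypothesis in Le Cam's bound consists of only a pair of Gaussians with known KL divergences.

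\textbf{Step 1: upper bound on $R_L$.} Any linear estimator has the form $\hat\theta(x,y)=ax+by$, with pointwise risk $r(a,b,\theta,\delta)=((as+b-1)\theta + a\delta)^2 + a^2+b^2$. By linearity, the supremum over $|\theta|\le\tau,|\delta|\le\zeta$ equals $(|as+b-1|\tau+|a|\zeta)^2+a^2+b^2$, so
\[
R_L(\tau,\zeta,s)=\min_{a,b\ge 0}\bigl(|as+b-1|\tau+a\zeta\bigr)^2+a^2+b^2.
\]
I will evaluate this by considering two regimes that arise naturally from the KKT conditions: Regime A where $s\tau\le\zeta$ (the adversary can neutralize the $x$ coordinate, forcing the optimum to $a=0$, $b=\tau^2/(1+\tau^2)$, giving $R_L=\tau^2/(1+\tau^2)$) and Regime B where $s\tau>\zeta$ (the optimum uses a strictly positive $a$, and can be computed by a one-line minimization along the ridge of the inner objective).

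\textbf{Step 2: lower bound on $R_N$ via Le Cam.} I will use two hypothesis pairs and take the better of the two:
\begin{itemize}
\item \emph{Pair A (neutralize $x$):} $(\theta_+,\delta_+)=(\mu,-s\mu)$ vs.\ $(\theta_-,\delta_-)=(-\mu,s\mu)$, valid when $s\mu\le\zeta$ and $\mu\le\tau$. Under both hypotheses $x\sim\cN(0,1)$ identically, so the joint TV reduces to the TV between $\cN(\mu,1)$ and $\cN(-\mu,1)$, bounded by $\mu$.
\item \emph{Pair B (keep $\delta=0$):} $(\theta_+,\delta_+)=(\mu,0)$ vs.\ $(-\mu,0)$, always valid for $\mu\le\tau$; here the joint KL is $2\mu^2(1+s^2)$, so TV is $\le\mu\sqrt{1+s^2}$.
\end{itemize}
Le Cam's inequality then yields $R_N\ge \mu^2(1-\mathrm{TV})$; optimizing $\mu$ gives a bound of order $\min(\tau^2,(\zeta/s)^2,1)$ from Pair A and of order $\min(\tau^2,1/(1+s^2))$ from Pair B. In Regime A, Pair A is tight against Step 1 up to a Pinsker-type $5/4$ factor; in Regime B, whichever of the two bounds is larger will be used.

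\textbf{Obstacle and where $27/2$ comes from.} The constant $27/2=13.5$ has to absorb three sources of slack: the factor $5/4$ in the 1-D Pinsker-type lower bound on the $y$-only sub-problem, an $O(1)$ slack in the Le Cam TV step (since we cannot drive TV arbitrarily close to $0$ without shrinking $\mu$), and the suboptimality of the specific linear estimator exhibited in Regime B. I expect the tight case to be an \emph{intermediate} regime where $\tau$, $\zeta/s$, and $1/\sqrt{1+s^2}$ are all comparable to $1$, so that no single simple estimator dominates and both Pair A and Pair B give comparable lower bounds; the hard part of the proof will be tracking the exact constants through this regime to verify that the ratio truly stays below $27/2$, rather than a looser universal bound. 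Once the intermediate regime is handled, the extreme regimes ($\tau\to 0$, $\tau\to\infty$, $s\to 0$, $s\to\infty$) follow by straightforward limits.
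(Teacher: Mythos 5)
Your proposal follows essentially the same route as the paper's proof: upper-bound $R_L$ by explicitly handling the two-variable linear minimax program $\min_{a,b}(|as+b-1|\tau+|a|\zeta)^2+a^2+b^2$ (the paper relaxes it via AM--GM to a closed-form quadratic and bounds it by $\min\{1,\,2\tau^2,\,(1+4\zeta^2)/(s^2+1)\}$, where you propose an exact KKT analysis in two regimes), and then lower-bound $R_N$ by Le Cam's two-point method with Pinsker, choosing the perturbation $\delta$ to cancel the shift in the first coordinate, followed by a regime-by-regime comparison. The only deviations are cosmetic and harmless: the paper uses a single partially-neutralized pair ($d=-2\zeta$) where you take the better of a fully-neutralized and an un-neutralized pair (which also covers all regimes up to constants), and your hoped-for $5/4$ factor in Regime~A is not attainable from a bare two-point bound -- but that is irrelevant since only the constant $27/2$ is claimed.
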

\begin{proof}[Proof of Lemma \ref{lemma:1d_bound_with_beta_change}]
We first calculate an upper bound of $R_L$ and connect it to a lower bound of $R_N$. 
	\begin{align*}
	R_L(\tau,\zeta,s) =& \min_{a,b}\max_{|\theta|\leq \tau,|\delta|\leq \zeta} [(as+b-1)\theta+a\delta]^2+a^2+b^2\\
	= & \min_{a,b} (|as+b-1|\tau + |a|\zeta)^2 + a^2+b^2\\
	\leq & \min_{a,b} 2(as+b-1)^2\tau^2 + 2a^2\zeta^2  + a^2+b^2. 
	\end{align*}	
By some detailed calculations, we get the RHS is equal to:
\begin{align*}
& \frac{2\tau^2(2\zeta^2+1) }{2\tau^2 (s^2+2\zeta^2+1) + 2\zeta^2 +1 }\\
\leq & \min\{1, 2\tau^2, \frac{1+4\zeta^2}{s^2+1} \}.
\end{align*}	
For simplify this form, we could see that

Next, we use Le cam two point theorem to lower bound $R_N(\tau,\zeta,s)$ where the metric $\rho$ is Euclidean distance and $\Phi$ is squared function. Therefore

\begin{align*}
R_N(\tau,\zeta,s) \geq & \max_{|\theta_i|\leq \tau,|\delta_i|\leq \zeta, i\in\{1,2\}}\frac{1}{2}(\frac{1}{2}(\theta_1-\theta_2) )^2(1-\alpha)  \\
 \text{s.t. } & \| \cN([s\theta_1+\delta_1,\theta_1]^\top, I_2),\cN([s\theta_2+\delta_2,\theta_2]^\top, I_2) \|_{TV}\leq \alpha. 
\end{align*}
Since the total variation distance is related to Kullback-Leibler divergence by Pinsker's inequality: $\|\cdot,\cdot\|_{TV}\leq \sqrt{\frac{1}{2}D_{KL}(\cdot \| \cdot)}$, it's sufficient to replace the constraint as:
$$ D_{KL}\left(\cN([s\theta_1+\delta_1,\theta_1]^\top, I_2) \left\| \cN([s\theta_2+\delta_2, \theta_2]^\top, I_2) \right.\right) \leq 2\alpha^2.   $$
\begin{align*}
& \max_{|\theta_i|\leq \tau,|\delta_i|\leq \zeta, i\in\{1,2\}} \frac{1}{8} (\theta_1-\theta_2)^2(1-\alpha)\\
&\text{s.t. } (s\theta_1+\delta_1 - (s\theta_2+\delta_2))^2 + (\theta_1-\theta_2)^2 \leq 2\alpha^2 \\
\Leftrightarrow & \max_{|c|\leq 2\tau, |d|\leq 2\zeta} \frac{c^2}{8} (1-\alpha)\\
&\text{s.t. } (sc+d)^2 + c^2 \leq 2\alpha^2. \\
\end{align*}
Recall $R_L\leq \min\{1,2\tau^2, \frac{1+4\zeta}{s^2+1} \}$. 

We first note that $c^2\leq 4\tau^2$ and setting $\alpha=0$ we have $R_N\geq \tau^2/2 \geq 1/4 R_L$. For In the following we look at other cases when the bound for $c^2$ is smaller. 

When $2\zeta\geq sc,$ will set $d=-sc$ and $c^2=2\alpha^2$. Let $\alpha=2/3$ for large $\tau$ we get : $c^2(1-\alpha)/8 = 2/27 \geq 2/27 R_L$. 

When $2\zeta \leq sc $ we set $d=-2\zeta$ and require $(sc-2\zeta)^2+c^2\leq 2\alpha^2$. 
We have 
$(sc-2\zeta)^2+c^2 = s^2c^2+4\zeta^2 -4\zeta sc + c^2 \leq s^2c^2+4\zeta^2 - 8\zeta^2 + c^2 = (s^2+1) c^2 - 4\zeta^2$.  Therefore as we set $c^2 = \frac{2\alpha^2+4\zeta^2}{s^2+1}$, the original inequality is satisfied. Again by setting $\alpha=2/3$   
we have $  c^2\geq 8/9 \frac{1+4\zeta^2}{s^2+1} \geq 8/9 R_L $. Therefore in this case $R_N\geq \frac{2}{27}R_L $.

\end{proof}

%\qi{recall to say $\sigma$ doesn't matter. therefore we all set it as 1.}

\section{Discussions on Random Design under Covariate Shift.}
\label{appendix:random_design}
In the main text, we present the results where we consider $X_S$ as fixed and $\Sigma_T$ to be known. In this section, we view both source and target input data as random, and generalize the results of Section \ref{sec:cov_shift} while training is on finite observations and testing is on the (worst case) population loss, under some light-tail properties of the input data samples.

\begin{proof}[Proof of Theorem \ref{thm:random_design_target}]
The proof relies on the two technical claims \ref{claim:concentration_covariance}, \ref{claim:estimation_error_worst_case}. 

Let $\hat \bbeta_R$ be the optimal linear estimator on $L_{\cB}$, i.e., $L_{\cB}(\hat \bbeta_R)=\min_{\bbeta\text{ linear in }\by_S}L_{\cB}(\bbeta)=R_L(\cB)$.
\begin{align*}
& L_{\cB}(\hat \bbeta) \leq (1+O(\sqrt{\frac{\rho^4(d+\log(1/\delta))}{n}})) \hat L_{\cB}(\hat \bbeta) \tag{Claim \ref{claim:estimation_error_worst_case}} \\
\leq & (1+O(\sqrt{\frac{\rho^4(d+\log(1/\delta))}{n}})) \hat L_{\cB}(\hat \bbeta_R) \tag{from definition of $\hat\bbeta$} \\
\leq & (1+O(\sqrt{\frac{\rho^4(d+\log(1/\delta))}{n}}))^2 L_{\cB}(\hat \bbeta_R) \tag{Claim \ref{claim:estimation_error_worst_case}}\\
\leq & (1+O(\sqrt{\frac{\rho^4(d+\log(1/\delta))}{n}})) L_{\cB}(\hat \bbeta) = (1+O(\sqrt{\frac{\rho^4(d+\log(1/\delta))}{n}}))  R_L(\cB) \tag{from $\frac{\rho^4(d+\log(1/\delta))}{n}\ll 1$, and definition of $\hat\bbeta_R$}.
\end{align*}
From Theorem \ref{thm:1.25minimax_risk} we know $R_L(\cB)\leq 1.25R_N(\cB)$ when $\Sigma_T$ is rank-1 matrix or commute with $\hat\Sigma_S$ which further finishes the whole proof. 
\end{proof}

\begin{claim}[Restated Claim A.6 from \cite{du2020few}] \label{claim:concentration_covariance}
Fix a failure probability $\delta\in(0,1)$, and assume $n\gg \rho^4(d+\log(1/\delta))$
\footnote{When this is not satisfied the result is still satisfied by replacing $O(\sqrt{\frac{\rho^4(d+\log(1/\delta))}{n}})$ with $O(\max\{\sqrt{\frac{\rho^4(d+\log(1/\delta))}{n}},\frac{\rho^2(d+\log(1/\delta))}{n}\})$. For cleaner presentation, we assume $n$ is large enough and simplify the results. }.
	Then with probability at least $1-\frac{\delta}{10}$ over the inputs $\bx_1, \ldots, \bx_{n}$, if $\bx_i\sim p$ and $p$ is a $\rho^2$-subgaussian distribution, we have
	\begin{equation} \label{eqn:concentration_source_covariance}
	(1-O(\sqrt{\frac{\rho^4(d+\log(1/\delta))}{n}})) \Sigma \preceq \frac{1}{n} X^\top X \preceq (1+O(\sqrt{\frac{\rho^4(d+\log(1/\delta))}{n}})) \Sigma,
	\end{equation}
where $\Sigma=\E_{\bx\sim p}[\bx\bx^\top] $.
\end{claim}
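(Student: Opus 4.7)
The plan is to reduce to the isotropic setting and then apply an $\epsilon$-net argument combined with Bernstein's inequality for subexponential random variables. First I would whiten the data by setting $\bar\bx_i := \Sigma^{-1/2}\bx_i$: by construction these satisfy $\E[\bar\bx_i\bar\bx_i^\top]=I$ and, by Definition \ref{definition:linear_subgaussian}, they remain $\rho^2$-subgaussian. The stated two-sided bound on $\frac{1}{n}X^\top X$ is then equivalent, after conjugating by $\Sigma^{-1/2}$, to the single operator-norm deviation bound $\big\|\frac{1}{n}\sum_i \bar\bx_i\bar\bx_i^\top - I\big\|_{op} \leq O\!\big(\sqrt{\rho^4(d+\log(1/\delta))/n}\big)$.

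The main step is a standard covering argument. I would take a $1/4$-net $\cN$ of the unit sphere $S^{d-1}$ with $|\cN|\leq 9^d$, so that for any symmetric matrix $M$ one has $\|M\|_{op}\leq 2\max_{\bv\in\cN}|\bv^\top M \bv|$, reducing the problem to controlling $\max_{\bv\in\cN}\big|\tfrac{1}{n}\sum_i (\bv^\top\bar\bx_i)^2 - 1\big|$. For each fixed $\bv\in\cN$, the scalar $\bv^\top\bar\bx_i$ is mean-zero and $\rho^2$-subgaussian with variance $1$, so $(\bv^\top\bar\bx_i)^2 - 1$ is centered and subexponential with parameter $O(\rho^2)$. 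Bernstein's inequality then yields
\[
P\!\left(\left|\tfrac{1}{n}\sum_{i=1}^n\!\big((\bv^\top\bar\bx_i)^2 - 1\big)\right| > t\right) \leq 2\exp\!\big(-c n\min\{t^2/\rho^4,\ t/\rho^2\}\big).
\]
Choosing $t = C\rho^2\sqrt{(d+\log(1/\delta))/n}$ for a sufficiently large constant $C$, and union-bounding over $\cN$, absorbs the $d\log 9$ covering exponent in the Bernstein exponent. The hypothesis $n\gg \rho^4(d+\log(1/\delta))$ ensures the quadratic (subgaussian) regime of Bernstein is the binding one, delivering the claimed rate.

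The main obstacle is purely bookkeeping: verifying that the subexponential norm of $(\bv^\top\bar\bx_i)^2 - 1$ scales as $O(\rho^2)$ uniformly in $\bv\in S^{d-1}$, and that the net-approximation constants combine cleanly so the final deviation has exactly the form $O(\sqrt{\rho^4(d+\log(1/\delta))/n})$. Since the statement is explicitly a restatement of Claim A.6 of \citet{du2020few} (which in turn follows standard results, e.g., Theorem 4.7.1 in Vershynin's high-dimensional probability), once the whitening reduction above is in place the remainder of the proof is a direct appeal to that reference.
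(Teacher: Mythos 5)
Your proposal is correct. The paper does not prove this claim itself --- it simply restates it from the cited reference --- and your whitening-plus-$\epsilon$-net-plus-Bernstein argument is exactly the standard proof underlying that reference (and Theorem 4.7.1 of Vershynin), with the whitening step made licit precisely because Definition \ref{definition:linear_subgaussian} defines $\rho$ on the whitened distribution.
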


With the help of Claim \ref{claim:concentration_covariance} we directly get:
\begin{claim}
	\label{claim:estimation_error_worst_case}
Fix a failure probability $\delta\in(0,1)$, and assume $n_U\gg \rho^4(d+\log(1/\delta))$, $X_T = [\bx_1,\cdots,\bx_{n_U}]^\top\in \R^{n_U\times d} $ satisfies $\bx_i\sim p_T$ where $p_T$ is $\rho^2$-subgaussian. We have for any estimator $\bbeta$: 
	$$(1-O(\sqrt{\frac{\rho^4(d+\log(1/\delta))}{n_U}})) L_{\cB}(\bbeta) \leq  \hat L_{\cB}(\bbeta)\leq  (1+O(\sqrt{\frac{\rho^4(d+\log(1/\delta))}{n_U}})) L_{\cB}(\bbeta),$$ with high probability $1-\delta/10$ over the random samples $X_T$.
\end{claim}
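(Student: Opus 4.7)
}
The plan is to reduce the claim to a uniform PSD sandwich on the quadratic forms that define $L_{\cB}$ and $\hat L_{\cB}$, and then invoke Claim~\ref{claim:concentration_covariance} once. The key observation is that both risks are, for any fixed estimator rule $\hat\bbeta(\cdot)$ and any fixed $\bbeta^*$, averages of quadratic forms in the (data-dependent) vector $\bv=\hat\bbeta(\by_S)-\bbeta^*$:
\begin{align*}
\E_{\by_S}\E_{\bx\sim p_T}(\bx^\top\bv)^2 \;=\; \E_{\by_S}\bigl[\bv^\top \Sigma_T \bv\bigr],\qquad
\E_{\by_S}\tfrac{1}{n_U}\|X_U\bv\|^2 \;=\; \E_{\by_S}\bigl[\bv^\top \hat\Sigma_U \bv\bigr],
\end{align*}
where the $X_U$-randomness is held fixed and only the $\by_S$-randomness is integrated.

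First I would set $\eta := O\!\bigl(\sqrt{\rho^4(d+\log(1/\delta))/n_U}\bigr)$ and invoke Claim~\ref{claim:concentration_covariance} with sample matrix $X_U$ and population covariance $\Sigma_T$. Since $p_T$ is $\rho^2$-subgaussian and $n_U\gg\rho^4(d+\log(1/\delta))$, this delivers the two-sided operator inequality
\begin{equation*}
(1-\eta)\,\Sigma_T \;\preceq\; \hat\Sigma_U \;\preceq\; (1+\eta)\,\Sigma_T
\end{equation*}
on an event $\mathcal{E}$ of probability at least $1-\delta/10$ over $X_U$. Working on $\mathcal{E}$ from now on, the PSD ordering implies the \emph{uniform} scalar sandwich $(1-\eta)\,\bv^\top\Sigma_T\bv\le\bv^\top\hat\Sigma_U\bv\le(1+\eta)\,\bv^\top\Sigma_T\bv$ for every deterministic $\bv\in\R^d$.

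Next, for any estimator $\hat\bbeta$ and any fixed $\bbeta^*\in\cB$, I apply the uniform sandwich pointwise (conditional on $\by_S$) to $\bv=\hat\bbeta(\by_S)-\bbeta^*$ and take expectations over $\by_S$; since the bounds $(1\pm\eta)$ are deterministic constants on $\mathcal{E}$, they pass through $\E_{\by_S}$ and yield
\begin{equation*}
(1-\eta)\,\E_{\by_S}[\bv^\top\Sigma_T\bv] \;\le\; \E_{\by_S}[\bv^\top\hat\Sigma_U\bv] \;\le\; (1+\eta)\,\E_{\by_S}[\bv^\top\Sigma_T\bv].
\end{equation*}
Finally I take $\max_{\bbeta^*\in\cB}$ on both sides. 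Because multiplication by the positive constants $(1\pm\eta)$ is monotone and independent of $\bbeta^*$, the maxima satisfy the same sandwich, giving $(1-\eta)L_{\cB}(\hat\bbeta)\le\hat L_{\cB}(\hat\bbeta)\le(1+\eta)L_{\cB}(\hat\bbeta)$ on $\mathcal{E}$. This is exactly the claimed inequality with probability at least $1-\delta/10$.

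I do not anticipate a substantive obstacle: the statement is essentially a reformulation of the covariance concentration result in Claim~\ref{claim:concentration_covariance}. The only thing to be careful about is the order of quantifiers: the sandwich is uniform over $\bv$ only because it comes from an operator-norm-type PSD bound, so it continues to hold after (i) conditioning on $\by_S$ and substituting the random $\bv=\hat\bbeta(\by_S)-\bbeta^*$, (ii) taking expectation over $\by_S$, and (iii) taking the supremum over $\bbeta^*\in\cB$ — none of which interact with the $X_U$-randomness once $\mathcal{E}$ is fixed.
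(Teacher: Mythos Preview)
Your proposal is correct and follows essentially the same route as the paper: invoke Claim~\ref{claim:concentration_covariance} to obtain the PSD sandwich $(1-\eta)\Sigma_T\preceq\hat\Sigma_U\preceq(1+\eta)\Sigma_T$, then pass the resulting multiplicative bound on the quadratic form through $\E_{\by_S}$ and $\max_{\bbeta^*\in\cB}$. If anything, your argument is written more carefully about the order of quantifiers than the paper's terse sketch.
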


\begin{proof}[Proof of Claim \ref{claim:estimation_error_worst_case}]
	Recall
	$$\hat L_\cB(\hat \bbeta) = \max_{\bbeta^*\in \cB} \E_{\by_S}\frac{1}{n_U} \|X_T(\hat\bbeta(\by_S)-\bbeta^*)\|^2,  $$
	$$L_{\cB}(\hat\bbeta)=\max_{\bbeta^*\in \cB} \E_{\by_S} \|\Sigma_T^{1/2}(\hat\bbeta(\by_S)-\bbeta^*)\|^2. $$	
Therefore for any estimator $\hat\bbeta,$ it satisfies 
\begin{align*}
&  L_\cB(\hat \bbeta) - \hat L_\cB(\hat \bbeta) \\
 = & (\hat\bbeta(\by_S)-\bbeta^*)^\top (\Sigma_S- \hat \Sigma_S)(\hat\bbeta(\by_S)-\bbeta^*)\\
 \lesssim &  O(\sqrt{\frac{\rho^4(d+\log(1/\delta))}{n_U}}) (\hat\bbeta(\by_S)-\bbeta^*)^\top \Sigma_S(\hat\bbeta(\by_S)-\bbeta^*)\\
 = & O(\sqrt{\frac{\rho^4(d+\log(1/\delta))}{n_U}}) L_\cB(\hat \bbeta),
\end{align*}
which finishes the proof. 
\end{proof}

\subsection{Random design on source domain.}
%In the main text, or the section above, we are measuring the performance of the learned linear estimator on the $X_S$ the same as the training procedure. With random samples from source domain, we would like to know how the learned estimator performs on unseen data samples from source domain, or on $\Sigma_S$. 
In the main text or the subsection above, the worst case excess risk is upper bounded by $1.25R_N$, which is achieved by best estimator that is using the same set of training data ($X_S,\by_S$). Here we would like to take into consideration the randomness of $X_S$ and compare the worst case excess risk using our estimator with a stronger notion of linear estimator.

For this purpose, we consider estimators that are linear functionals of $\by_{R}: = \Sigma_S^{1/2}\bbeta^*+\bz \in\R^d, \bz\sim \mathcal{N}(0,\sigma^2/n_S I_d)$ (this $\sigma^2/n_S$ is the correct scaling since $X_S^\top X_S/n_S$ is comparable to $\Sigma_S$). We consider the minimax linear estimator with $\by_R$ and with access to $\Sigma_S$, and we compare our estimator against this oracle linear estimator. This estimator is not computable in practice since $\Sigma_S$ must be estimated, but we will show that our estimator is within an absolute multiplicative constant in minimax risk of the oracle linear estimator. 

To recap the notations and setup, let 
\begin{align*}
\hat L_\cB(\hat\bbeta) := & \max_{\bbeta^*}\E_{\by_S} \frac{1}{n_U}\|X_T(\hat\beta(\by_S)-\bbeta^* )\|^2,\\
L_\cB(\hat\bbeta) := & \max_{\bbeta^*}\E_{\by_S}\E_{\bx\sim p_T}\|\bx^\top(\hat\beta(\by_S)-\bbeta^* )\|^2,\\
L_{\cB,R}(\hat\bbeta) := & \max_{\bbeta^*}\E_{\by_R}\E_{\bx\sim p_T}\|\bx^\top(\hat\beta(\by_R)-\bbeta^* )\|^2.
\end{align*}
Our target is to find the best linear estimator using $\hat L_\cB(\hat\bbeta)$ (trained with $X_T$) and prove its performance on the population (worst-case) excess risk $L_\cB(\hat\bbeta)$ is no much worse compared to the minimax linear risk trained on $\by_R$ and $\Sigma_S$. 

\begin{theorem}
	\label{thm:random_design_source_target}
	Fix a failure probability $\delta\in(0,1)$. Suppose both target and source distributions $p_S$ and $p_T$ are $\rho^2$-subgaussian, and the sample sizes in source domain and target domain satisfies $n_S,n_U\gg \rho^4(d+\log\frac{1}{\delta})$. Let $\hat C$ be the solution for Eqn.\eqref{eqn:cov_shift_alg_finite_n_U}, and set $\hat\bbeta(\by_S) \leftarrow \hat C \hat \Sigma_S^{-1}X_S^\top\by_S$. Then with probability at least $1-\delta$ over all the unlabeled samples from target domain and all the labeled samples $X_S$ from source domain, our estimator $\hat\bbeta(\by_R)$ yields the worst case expected excess risk that satisfies:
	$$  L_{\cB} (\hat\bbeta) \leq \left(1+ O(\sqrt{\frac{\rho^4(d+\log(1/\delta))}{n_U}})+ O(\sqrt{\frac{\rho^4(d+\log(1/\delta))}{n_T}})\right) \min_{\bbeta\text{ linear in }\by_R} L_{R,\cB}(\bbeta). $$
\end{theorem}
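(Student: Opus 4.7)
The plan is to bootstrap off Theorem \ref{thm:random_design_target}, which already controls $L_{\cB}(\hat\bbeta)$ in terms of $R_L(\cB)$, the minimax linear risk \emph{conditional on} the realized source design $X_S$. The remaining gap is then between $R_L(\cB)$ (defined with the empirical $\hat\Sigma_S$) and the oracle quantity $R_{L,R}(\cB) := \min_{\bbeta \text{ linear in } \by_R} L_{R,\cB}(\bbeta)$ (defined with the population $\Sigma_S$). I will argue that concentration of $\hat\Sigma_S$ around $\Sigma_S$ on the source side adds only an extra $(1+O(\sqrt{\rho^4(d+\log(1/\delta))/n_S}))$ factor, and the target-side concentration from Claim~\ref{claim:estimation_error_worst_case} contributes the $(1+O(\sqrt{\rho^4(d+\log(1/\delta))/n_U}))$ factor, producing the claimed bound.

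\textbf{Step 1: Reduce the oracle problem to a convex program over covariances.} Observe that $\by_R = \Sigma_S^{1/2}\bbeta^* + \bz_R$ with $\bz_R \sim \cN(0,\sigma^2/n_S\, I_d)$, so $\tilde\bbeta_{\tSS} := \Sigma_S^{-1/2}\by_R \sim \cN(\bbeta^*, (\sigma^2/n_S)\Sigma_S^{-1})$ is an unbiased sufficient statistic for $\bbeta^*$. By the same argument used in Claim~\ref{claim:A=A_1X^T} (applied to $\by_R$ instead of $\by_S$), any minimax linear estimator of $\bbeta^*_T$ based on $\by_R$ can be written as $C\tilde\bbeta_{\tSS}$, and hence
\[
R_{L,R}(\cB) \;=\; \min_{\tau,C}\Bigl\{r^2\tau + \tfrac{\sigma^2}{n_S}\Trace(\Sigma_T^{1/2} C \Sigma_S^{-1} C^\top \Sigma_T^{1/2})\Bigr\}, \quad (C-I)^\top \Sigma_T (C-I) \preceq \tau I,
\]
which is precisely the program \eqref{eqn:best_linear_estimator_noncommute} with $\hat\Sigma_S$ replaced by $\Sigma_S$ and $\Sigma_T$ unchanged.

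\textbf{Step 2: Transfer between the empirical and population source covariance.} By Claim~\ref{claim:concentration_covariance} applied on the source samples (with probability $\geq 1-\delta/10$), we have $\hat\Sigma_S \succeq (1-\eta_S)\Sigma_S$ where $\eta_S = O(\sqrt{\rho^4(d+\log(1/\delta))/n_S})$, hence $\hat\Sigma_S^{-1} \preceq (1-\eta_S)^{-1}\Sigma_S^{-1}$. Plugging this into the objective of the program characterizing $R_L(\cB)$ (the one involving $\hat\Sigma_S^{-1}$, still using the \emph{population} $\Sigma_T$) gives
\[
R_L(\cB) \;\leq\; (1-\eta_S)^{-1} R_{L,R}(\cB) \;=\; (1+O(\eta_S))\, R_{L,R}(\cB),
\]
since any feasible $(\tau,C)$ for the $R_{L,R}$ program is feasible for the $R_L$ program with the same constraint.

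\textbf{Step 3: Combine with Theorem~\ref{thm:random_design_target}.} Conditioning on the high-probability events of Claim~\ref{claim:estimation_error_worst_case} (target side) and Claim~\ref{claim:concentration_covariance} (source side), Theorem~\ref{thm:random_design_target} gives
\[
L_{\cB}(\hat\bbeta) \;\leq\; (1+\eta_U)\,R_L(\cB) \;\leq\; (1+\eta_U)(1+O(\eta_S))\, R_{L,R}(\cB),
\]
with $\eta_U = O(\sqrt{\rho^4(d+\log(1/\delta))/n_U})$. A union bound over the two events and expansion of the product (using $\eta_S,\eta_U \ll 1$) yields the claimed $(1+O(\eta_U)+O(\eta_S))$ multiplicative factor.

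\textbf{Main obstacle.} The quantitative comparison in Step~2 is the delicate part: the program defining $R_L(\cB)$ uses $\hat\Sigma_S$ but its target risk is measured against the population $\Sigma_T$, so I have to be careful that the concentration of $\hat\Sigma_S^{-1}$ around $\Sigma_S^{-1}$ transfers cleanly to the trace term without mixing up which program's feasibility constraint governs $(\tau,C)$. The route above sidesteps this by using the two programs' common feasible set for the variance constraint (which depends only on $\Sigma_T$, not on $\Sigma_S$) and only comparing the trace terms; this is why the program formulation of $R_{L,R}$ in Step~1 must be set up so that the constraints match verbatim. The other minor point is making sure the target-side probability-$1-\delta/10$ events from Theorem~\ref{thm:random_design_target} and the source-side event from Claim~\ref{claim:concentration_covariance} are independent (or at least can be union-bounded), which is immediate since $X_S$ and $X_U$ are independent samples.
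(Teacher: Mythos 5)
Your proposal is correct and follows essentially the same route as the paper: the paper likewise splits the error into a target-side factor (concentration of $\hat\Sigma_U$, i.e.\ Claim~\ref{claim:estimation_error_worst_case}) and a source-side factor (concentration of $\hat\Sigma_S^{-1}$ around $\Sigma_S^{-1}$, which enters only the trace/variance term since the bias term and the feasibility constraint depend only on $\Sigma_T$), and chains the two multiplicatively through the minimizers. The only cosmetic difference is packaging: the paper compares the three loss functionals $\hat l(C)$, $l(C)$, $l_R(C)$ pointwise in $C$ and chains through $\hat C$ and $C_R$, whereas you invoke Theorem~\ref{thm:random_design_target} as a black box for the target side and compare the optimal values of the two convex programs over their common feasible set — both arguments rest on the same observation, and incidentally your derivation correctly yields an $n_S$ (not $n_T$) in the source-side factor.
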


\begin{proof}[Proof of Theorem \ref{thm:random_design_source_target}]
%\qi{First consider learned on $\Sigma_T$, still remains to prove the extra part that is learned on $\hat \Sigma_T$...}
For each matrix $C\in \R^{d\times d}$, we first conduct bias-variance decomposition and rewrite each worst-case risk with linear estimator in terms of a matrix $C$. When $\hat\bbeta(\by_S)= C \hat \Sigma_S^{-1}X_S^\top\by_S$, we have:
\begin{align*}
\hat L_{\cB}(\hat\bbeta) = & \|\hat\Sigma^{1/2}_T(C-I)\|_{op}^2r^2 + \frac{\sigma^2}{n}\Trace(\hat\Sigma_TC\hat\Sigma_S^{-1}C^\top)=: \hat l(C),\\
L_{\cB}(\hat\bbeta) = & \|\Sigma_T^{1/2}(C-I)\|_{op}^2r^2 + \frac{\sigma^2}{n}\Trace(\Sigma_TC\hat\Sigma_S^{-1}C^\top)=: l(C),
\end{align*}
Similarly, when $\hat\bbeta_R = C\Sigma_S^{-1/2}\by_R$, we have:
\begin{align*}
L_{R,\cB}(\hat\bbeta) = & \|\Sigma_T^{1/2}(C-I)\|_{op}^2r^2 + \frac{\sigma^2}{n}\Trace(\Sigma_TC\Sigma_S^{-1}C^\top)=: l_R(C).
\end{align*}

\begin{claim}
	\label{claim:relative_bound_3_terms}
	Fix a failure probability $\delta\in(0,1)$, and assume $n_U,n_S\gg \rho^4(d+\log(1/\delta))$, $X_S\in\R^{n_S\times d}, X_T\in\R^{n_U\times d}$ are respectively from $p_S$ $p_T$ which are both $\rho^2$-subgaussian. We have for any matrix $C\in\R^{d\times d}$: 
	$$(1-O(\sqrt{\frac{\rho^4(d+\log(1/\delta))}{n_U}})) \hat l(C) \leq  l(C) \leq  (1+O(\sqrt{\frac{\rho^4(d+\log(1/\delta))}{n_U}})) \hat l(C),$$ with high probability $1-\delta/10$ over the random samples $X_T$.
	$$(1-O(\sqrt{\frac{\rho^4(d+\log(1/\delta))}{n_S}})) l(C) \leq  l_{R}(C) \leq  (1+O(\sqrt{\frac{\rho^4(d+\log(1/\delta))}{n_S}})) l(C),$$ with high probability $1-\delta/10$ over the random samples $X_S$.	
\end{claim}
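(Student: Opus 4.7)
The plan is to reduce both inequalities to the single concentration statement Claim \ref{claim:concentration_covariance}, applied once to the target distribution (with $n_U$ samples) and once to the source distribution (with $n_S$ samples). After invoking these two events via a union bound at total failure probability at most $\delta/5$, everything that remains is a deterministic calculation showing that multiplicative PSD closeness between the empirical and population covariances transfers to both the operator-norm bias term and the trace variance term appearing in $\hat l, l, l_R$.

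For the first inequality, I would observe that $\hat l(C)$ and $l(C)$ differ only by swapping $\hat\Sigma_T$ for $\Sigma_T$ in both summands; the factor $C\hat\Sigma_S^{-1}C^\top$ inside the trace and the matrix $C-I$ inside the operator norm are identical. Claim \ref{claim:concentration_covariance} with $n_U$ samples from $p_T$ yields, with probability $1-\delta/10$,
\begin{equation*}
(1-\epsilon_T)\Sigma_T \preceq \hat\Sigma_T \preceq (1+\epsilon_T)\Sigma_T, \qquad \epsilon_T = O\!\left(\sqrt{\rho^4(d+\log(1/\delta))/n_U}\right).
\end{equation*}
Conjugating by $(C-I)$ gives the same multiplicative two-sided bound on $(C-I)^\top \hat\Sigma_T(C-I)$ versus $(C-I)^\top \Sigma_T(C-I)$, which preserves operator norms and hence bounds the bias terms. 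For the trace term, write $\Trace(\hat\Sigma_T M) = \Trace(M^{1/2}\hat\Sigma_T M^{1/2})$ with $M = C\hat\Sigma_S^{-1}C^\top \succeq 0$ and conjugate the PSD inequality by $M^{1/2}$ to obtain $(1-\epsilon_T)\Trace(\Sigma_T M) \leq \Trace(\hat\Sigma_T M) \leq (1+\epsilon_T)\Trace(\Sigma_T M)$. Both summands are nonnegative and satisfy the same $(1\pm\epsilon_T)$-closeness, so their sum does as well, giving the first inequality.

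For the second inequality, the bias term $\|\Sigma_T^{1/2}(C-I)\|_{op}^2 r^2$ is identical in $l(C)$ and $l_R(C)$; only the trace term differs by swapping $\hat\Sigma_S^{-1}$ for $\Sigma_S^{-1}$. Applying Claim \ref{claim:concentration_covariance} to $p_S$ with $n_S$ samples, with probability $1-\delta/10$ we get $(1-\epsilon_S)\Sigma_S \preceq \hat\Sigma_S \preceq (1+\epsilon_S)\Sigma_S$ where $\epsilon_S = O(\sqrt{\rho^4(d+\log(1/\delta))/n_S})$, and by operator monotonicity of inversion on the PSD cone this yields $(1-O(\epsilon_S))\Sigma_S^{-1} \preceq \hat\Sigma_S^{-1} \preceq (1+O(\epsilon_S))\Sigma_S^{-1}$. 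Conjugating by $C^\top$ and using the same trace-conjugation trick against the PSD matrix $\Sigma_T$ shows the two variance traces differ by a factor $1\pm O(\epsilon_S)$. Since the unchanged bias term is nonnegative and the variance term is multiplicatively sandwiched, the full quantities $l(C)$ and $l_R(C)$ inherit the $(1\pm O(\epsilon_S))$ bound.

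The only point that requires a moment of care is the closure under summation: if $a_1,a_2,b_1,b_2\geq 0$ with $|a_1-a_2|\leq \epsilon a_2$ and $|b_1-b_2|\leq \epsilon b_2$, then $|(a_1+b_1)-(a_2+b_2)| \leq \epsilon(a_2+b_2)$, so the multiplicative closeness survives. No new probabilistic tool is needed beyond Claim \ref{claim:concentration_covariance}, and the main obstacle is purely bookkeeping: making sure the PSD inequality on covariances survives the various conjugations (by $C-I$, by $M^{1/2}$, and by inversion) without introducing any extra $\|\Sigma_S\|_{op}$ or dimension factors, which it does precisely because the concentration is stated multiplicatively in the PSD order.
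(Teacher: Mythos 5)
Your proposal is correct and follows essentially the same route as the paper's proof: invoke Claim \ref{claim:concentration_covariance} once for $p_T$ with $n_U$ samples and once for $p_S$ with $n_S$ samples, then transfer the multiplicative PSD sandwich to the operator-norm bias term and the trace variance term. If anything, your write-up is more explicit than the paper's (which defers the first inequality to the proof of Claim \ref{claim:estimation_error_worst_case} and only sketches the trace bound for the second), particularly in spelling out the conjugation steps and the inversion of the PSD inequality for $\hat\Sigma_S^{-1}$.
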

\begin{proof}[Proof of Claim \ref{claim:relative_bound_3_terms}]
We omit the proof of the first inequality since it's exactly the same as proof of Claim \ref{claim:estimation_error_worst_case}. 

For the second line, we have:
\begin{align*}
l_R(C)-l(C) = & \frac{\sigma^2}{n_S}\Trace(\Sigma_TC(\Sigma_S^{-1}-\hat\Sigma_S^{-1})C^\top)\\
\leq & O(\sqrt{\frac{\rho^4(d+\log(1/\delta))}{n_S}}) \frac{\sigma^2}{n_S}\Trace(\Sigma_TC\hat\Sigma_S^{-1}C^\top)\\
\leq & O(\sqrt{\frac{\rho^4(d+\log(1/\delta))}{n_S}}) l(C). 
\end{align*} 
Therefore we prove the RHS of the second inequality. The LHS follows with the same proof techniques. 
\end{proof}
Now let $\hat C$ be the minimizer for $\hat l(C)$, and $C_R$ be the minimizer for $l_R(C)$. 
\begin{align*}
l(\hat C) \leq & (1+ O(\sqrt{\frac{\rho^4(d+\log(1/\delta))}{n_U}})) \hat l(\hat C) \tag{w.p. $1-\delta/10$; due to Claim \ref{claim:relative_bound_3_terms} }\\
\leq & (1+ O(\sqrt{\frac{\rho^4(d+\log(1/\delta))}{n_U}})) \hat l(C_R) \tag{Due to the definition of $\hat C$ }\\
\leq & (1+ O(\sqrt{\frac{\rho^4(d+\log(1/\delta))}{n_U}}))^2 l( C_R) \tag{w.p. $1-\delta/5$; due to Claim \ref{claim:relative_bound_3_terms}} \\
= & (1+ O(\sqrt{\frac{\rho^4(d+\log(1/\delta))}{n_U}})) l( C_R) \tag{since $n_U$ is large enough}\\
\leq & (1+ O(\sqrt{\frac{\rho^4(d+\log(1/\delta))}{n_U}}))(1+ O(\sqrt{\frac{\rho^4(d+\log(1/\delta))}{n_T}})) l_R(C_R) \tag{w.p. $1-3\delta/10$; due to Claim \ref{claim:relative_bound_3_terms}}\\
= & \left(1+ O(\sqrt{\frac{\rho^4(d+\log(1/\delta))}{n_U}})+ O(\sqrt{\frac{\rho^4(d+\log(1/\delta))}{n_T}})\right) \min_C l_R(C).
\end{align*}
This finishes the proof. 
\end{proof}

\section{More empirical results}
We include some more empirical studies. In the main text our results have small noise. Here we show some more results with larger noise, and also the case with varied eigenspace. For the following results, we use $\sigma=10$ and $r=0.2\sqrt{d}$. Other meta data remains the same as presented in the main text. 
\begin{figure*}[htbp]
\begin{tabular}{ccc}
	\hspace{-0.45cm}
\includegraphics[width=0.37\linewidth]{./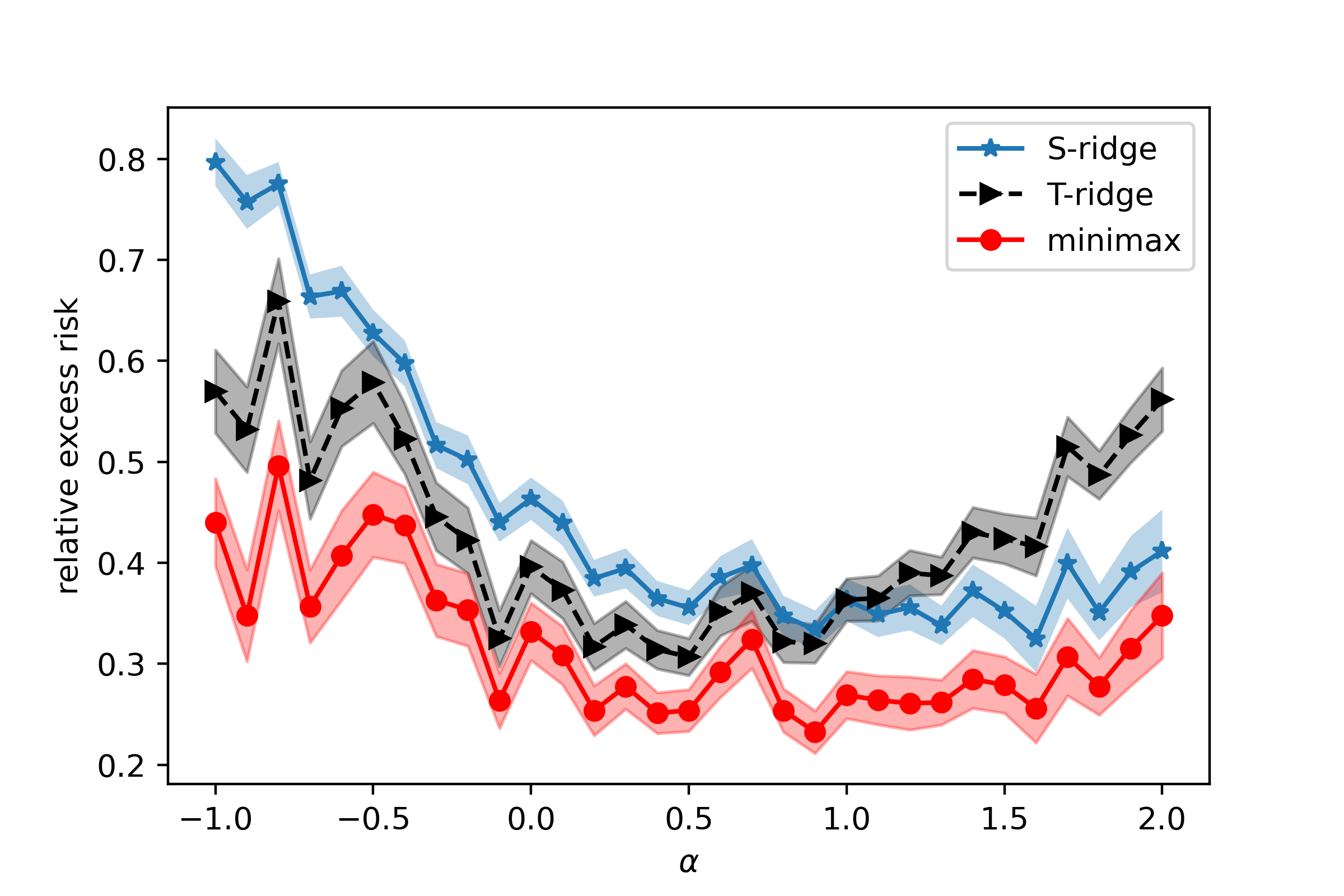}\hspace{-1.cm}	&
 \includegraphics[width=0.37\linewidth]{./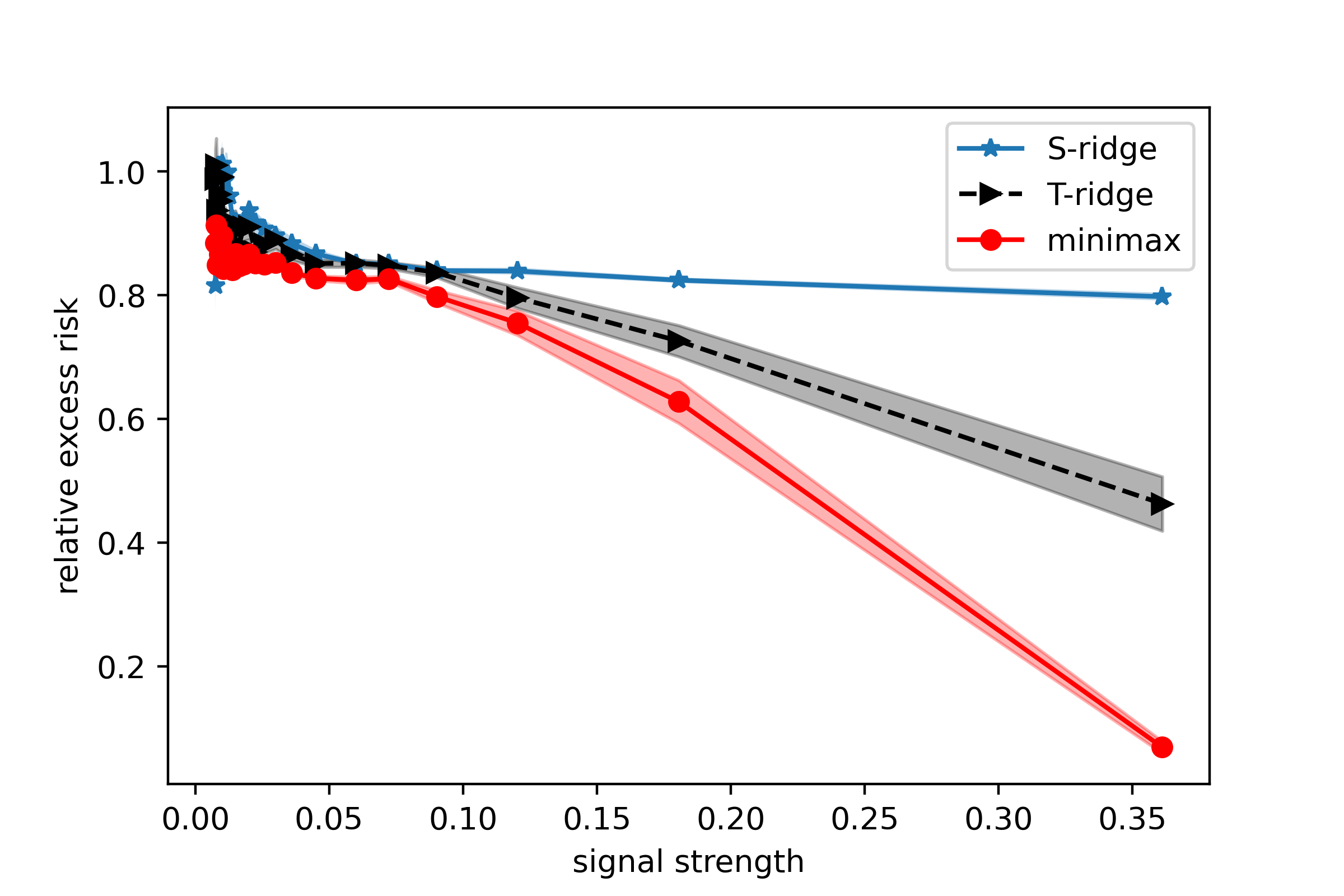}\hspace{-1.cm} &
\includegraphics[width=0.37\linewidth]{./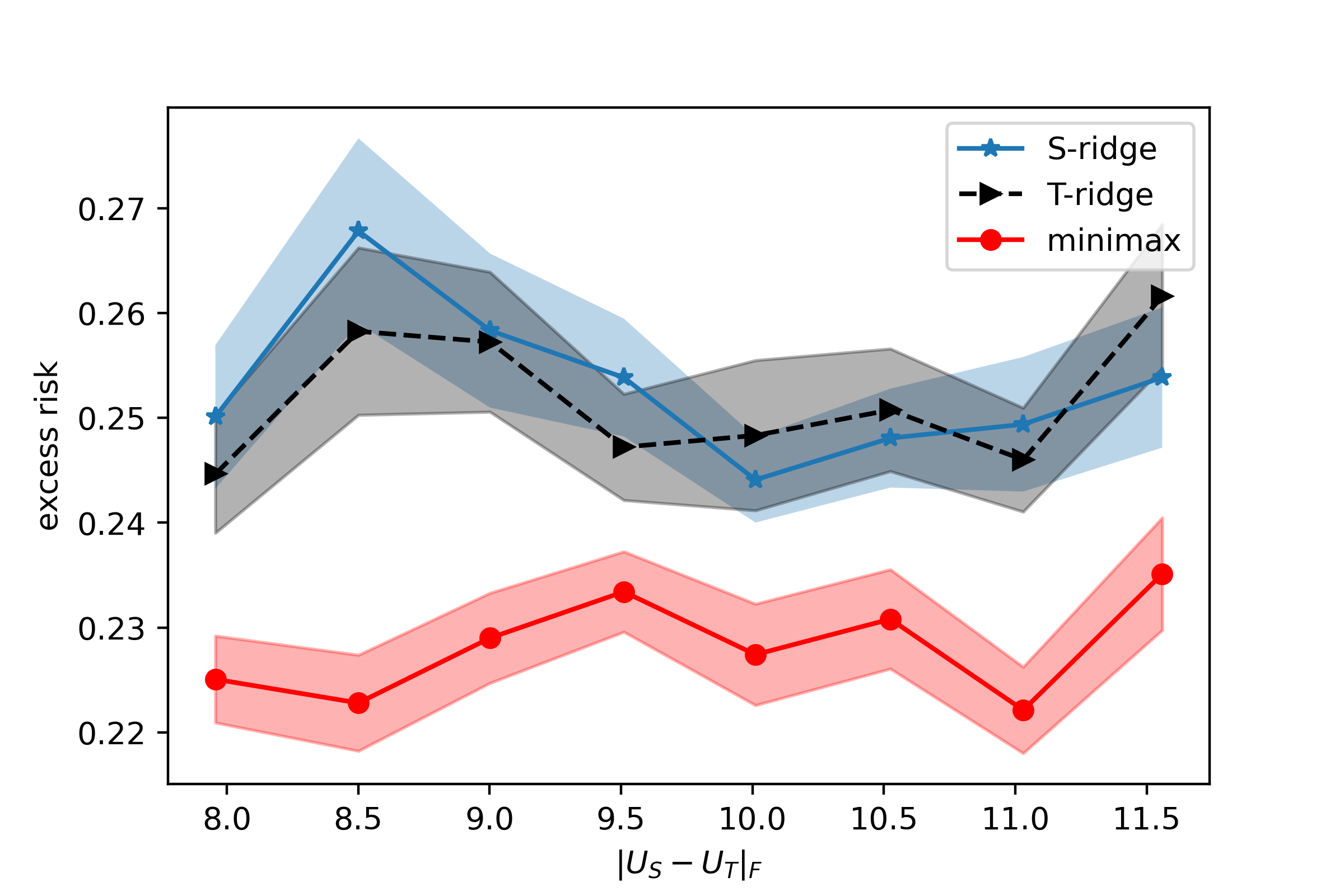}	\\
	(a)	covariate eigen-spectrum & (b) signal strength & (c) covariate eigenspace
\end{tabular}
	\caption{(a): The x-axis $\alpha$ defines the spread of eigen-spectrum of $\Sigma_S$:  $s_i \propto 1/i^{\alpha}, t_i\propto 1/i$. (b) x-axis is the normalized value of signal strength: $\|\Sigma_T\bbeta^*\|/r$. (c) X-axis is the covariate shift due to eigenspace shift measured by $\|U_S-U_T\|_F$. }
	\label{figure:appendix} 
\end{figure*} 
Figure \ref{figure:appendix} (a)(b) show similar phenomenon as the small noise setting presented in the main text.
From Figure \ref{figure:appendix} (c) we see no particular relationship between the performance of each algorithm with eigenspace shift.

\end{document}